\newcommand{\R}{{\mathbb{R}}}
\newcommand{\N}{{\mathbb{N}}}
\renewcommand{\P}{{\mathbb{P}}}
\newcommand{\Ra}{\, \Rightarrow\,}
\newcommand{\eps}{{\varepsilon}}
\def\XXint#1#2#3{{\setbox0=\hbox{$#1{#2#3}{\int}$}
\vcenter{\hbox{$#2#3$}}\kern-.5\wd0}}
\newcommand{\id}{1\hspace{-0,9ex}1}
\newtheorem{Def}{Definition}[section]
\newtheorem{Le}[Def]{Lemma}
\newtheorem{Thm}[Def]{Theorem}
\title{AC-Band: A Combinatorial Bandit-Based Approach to Algorithm Configuration}
\author{
    Jasmin Brandt \textsuperscript{\rm 1},
    Elias Schede \textsuperscript{\rm 2},
    Viktor Bengs \textsuperscript{\rm 3},
    Bj\"orn Haddenhorst \textsuperscript{\rm 1},\\
    Eyke H\"ullermeier \textsuperscript{\rm 3,4},
    Kevin Tierney \textsuperscript{\rm 2}
}
\begin{document}

\maketitle

\begin{abstract}
    We study the algorithm configuration (AC) problem, in which one seeks to find an optimal parameter configuration of a given target algorithm in an automated way.
	Recently, there has been significant progress in designing AC approaches that satisfy strong theoretical guarantees.
	However, a significant gap still remains between the practical performance of these approaches and state-of-the-art heuristic methods.
	To this end, we introduce AC-Band, a general approach for the AC problem based on multi-armed bandits that provides theoretical guarantees while exhibiting strong practical performance.
    We show that AC-Band requires significantly less computation time than other AC approaches providing theoretical guarantees while still yielding high-quality configurations.
\end{abstract}

\section{Introduction} \label{sec:introduction}
Algorithm configuration (AC) is concerned with the task of automated search for a high-quality configuration (e.g., in terms of solution quality or runtime) of a given parameterized target algorithm. 
Parameterized algorithms are found in many different applications, including, for example, optimization (e.g., satisfiability (SAT)~\cite{audemard2018glucose} or mixed-integer programming (MIP) solvers \cite{cplexv20}), simulation, and machine learning.
Finding good configurations is a significant challenge for algorithm designers, as well as for users of such algorithms who may want to adjust the algorithm to perform well on data specific to their use case.
Finding good configurations through trial and error by hand is a daunting task, hence automated AC methods have been developed on the basis of heuristics, such as ParamILS \cite{Hutter2007AutoAC,Hutter2009ParamILS}, GGA \cite{Ansotegui2009GGA,Anstegui2015ModelBasedGA}, irace \cite{Birattari2002irace,lopez2016irace} or SMAC \cite{Hutter2013BayesOpt,Hutter2011SMAC}.

While heuristic configurators have had great success at finding good configurations on a wide variety of target algorithms, they do not come with theoretical guarantees. To this end, the pioneering work of \citet{kleinberg2017efficiency} proposes the first algorithm configurator with provable, theoretical guarantees on the (near-)optimality of the configuration returned. Further improvements and adjustments to those guarantees have followed~\cite{weisz2018leapsandbounds,weisz2019capsandruns,kleinberg2019procrastinating,weisz2020impatientcapsandruns}.
All of these works essentially consider the runtime as the performance objective and provide, in particular, an upper bound on the total runtime of the respective algorithm configurator that is (nearly) optimal in a worst-case sense.

Despite their appealing theoretical properties and the steady progress on their empirical performance, these approaches still cannot compete with heuristic approaches in terms of practical performance.
The main issue of these theoretical approaches is that they are conservative in the process of discarding configurations from the pool of potential candidates, as pointed out in recent work~\cite{weisz2020impatientcapsandruns}.
This is indeed a key characteristic difference compared to the heuristic approaches, which discard configurations quite quickly after being used only on a couple of problem instances.
From a practical point of view, this makes sense, as the risk of discarding all good configurations is generally lower than wasting lots of time looking at bad configurations.

In an attempt to further bridge the gap between heuristic configurators and theoretical approaches, we propose AC-Band, a general algorithm configurator inspired by the popular Hyperband \cite{Li2016Hyperband} approach based on multi-armed bandits \cite{lattimore2020bandit}. 
Hyperband is an algorithm for the hyperparameter optimization problem (HPO) \cite{yang2020hyperparameter,bischl2021hyperparameter}, which 
is essentially a subproblem of the general AC problem focusing on configuring solution quality of algorithms, with a particular focus on machine learning methods.
While using HPO approaches for AC looks attractive at first, it is rather uncommon in practice due to the subtle differences between the two problems.
These differences include potentially using runtime as a configuration metric and the existence of multiple \emph{problem instances}, which are different settings or scenarios that an optimization method must solve.

Our suggested approach reconciles the basic idea behind the mechanism of Hyperband with the key characteristics of the AC problem and incorporates the advantageous property of discarding configurations quickly.
This is achieved by first replacing the underlying bandit algorithm of Hyperband, Successive Halving (SH) \cite{karnin2013almost}, by a more general variant, Combinatorial Successive Elimination (CSE) \cite{brandt2022finding}, and then carefully adapting the parameters of the iterative CSE calls over time.
In contrast to SH, %
as well as to all other multi-armed bandit algorithms for pure exploration with finite budget, CSE allows (i) to steer the aggressiveness of discarding arms (configurations in our terminology) from the set of consideration, (ii) to pull more than one arm simultaneously (run multiple configurations in parallel), and (iii) to work with observations either of quantitative or qualitative nature.
As mentioned above, the first property seems to be of major importance in AC problems, but the other two properties will turn out to be particularly helpful as well.
Indeed, (ii) obviously allows parallelization of the search process, while the generality regarding the nature of the observations in (iii) transfers quite naturally to the suggested method.%

The interplay of the second and third properties allows us to instantiate AC-Band to obtain appealing practical performance compared to existing theoretical approaches regarding the total runtime.
On the theoretical side, we derive (under mild assumptions on the underlying AC problem) that AC-Band is guaranteed to return a nearly optimal configuration with high probability if used on sufficiently many problem instances.
Our theoretical result is quite general in the sense that the notion of optimality is not restricted to the runtime of the configurations, but also encompasses other target metrics such as solution quality or memory usage.

\section{Related Work}
\textbf{Theoretical advances in AC.} The field of AC has grown to include many different methods and settings; we refer to \citet{ACSurvey} for a full overview, especially with regard to the heuristic methods previously mentioned. 
Inspired by \citet{kleinberg2017efficiency}, who introduced \textit{Structured Procrastination} together with a non-trivial worst-case runtime bound, more and more algorithms with even better theoretical guarantees with respect to the runtime have been proposed. \textit{LeapsAndBounds} \cite{weisz2018leapsandbounds} tries to guess an upper bound on the optimal runtime by doubling the last failed guess, whereas \textit{Structured Procrastination with confidence} \cite{kleinberg2019procrastinating} works by delaying solving hard problem instances until later. Rather, it first runs the configurations with the smallest lower confidence bound of its mean runtime on instances that are easy to solve. Another recent method, \textit{CapsAndRuns} \cite{weisz2019capsandruns}, first estimates a timeout for each configuration and afterwards performs a Bernstein race over the configurations. As a follow up method, \textit{ImpatientCapsAndRuns} \cite{weisz2020impatientcapsandruns} uses a more aggressive elimination strategy by filtering unlikely optimal configurations in a preprocessing. 
Further theoretical progress has been made regarding the analysis of the estimation error in AC settings \cite{Balcan2019,balcan2020refined}, the distribution of the computational budget \cite{liu2020performance} and the understanding of heuristic methods \cite{Hall2019Cutoff,Hall2020}.

\textbf{HPO.} As a subset of AC, HPO involves setting the \textit{hyperparameters} of algorithms, in particular machine learning approaches. The term hyperparameter differentiates parameters that change the behavior of the algorithm being configured from parameters that are induced or learned from data and are thus not set by a user.
In contrast, AC focuses on configuring algorithms that solve instances of a dataset independently.
We refer to \citet{Bischl2021} for a full overview of HPO.

\textbf{Bandit methods for AC.} 
Classically, methods for multi-armed bandits (MAB) \cite{Lai1985AllocationRules,Bubeck2012RegretAnalysis,lattimore2020bandit} are designed to find a good balance between exploration-exploitation of specific choice alternatives (e.g., configurations or hyperparameters).
The pure exploration setting, however, has attracted much research interest as well \cite{bubeck2009pure,karnin2013almost,aziz2018pure}, especially for HPO \cite{Jamieson2015SuccHalv,Li2016Hyperband}.
However, up to now bandit algorithms making single choice alternative decisions have been leveraged, although the parallel execution of configurations (or hyperparameters) in the AC (or HPO) setting seems to be predetermined for the combinatorial bandit variant
\cite{CesaBianchi2012CMABs,Chen2013CMABs,Jourdan2021CMABs}. %
In light of this, the recently proposed CSE algorithm \cite{brandt2022finding} seems promising as a generalization of the popular SH approach.

	\section{Problem Formulation} \label{sec_problem_formulation}
	We adopt the formulation of the problem as by \citet{ACSurvey}.
	Let $\mathcal{I}$ be the space of problem instances and $\mathcal{P}$ an unknown  probability distribution over $\mathcal{I}$. 
	Suppose $\mathcal{A}$ is an algorithm that can be run on any problem instance $i\in \mathcal{I},$ and has   
	different parameters $p_j$ coming from a known domain $\Theta_j$ for each $j\in \{1,\dots, m\}.$ 
	We call $\mathcal{A}$  the \emph{target algorithm } and the Cartesian product of its parameter domains $\Theta = \Theta_1 \times \dots \times \Theta_m$ the \emph{configuration} or \emph{search space} consisting of all feasible parameter \emph{configurations}. 
	For a configuration $\theta \in \Theta$, we denote by $\mathcal{A}_{\theta}$ an instantiation of the target algorithm $\mathcal{A}$ with configuration $\theta$. 
	Running the target algorithm $\mathcal{A}$ with configuration $\theta$ on a specific problem instance $i \in \mathcal{I}$ results in costs specified by an unknown, and possibly stochastic, function $c: \mathcal{I} \times \Theta \rightarrow \mathbb{R},$ i.e., $c(i,\theta)$ represents the costs of using $\mathcal{A}_{\theta}$ for problem instance $i$.
	Here, the costs can correspond to the runtime of $\mathcal{A}_{\theta}$  for $i$, but also to other relevant target metrics such as the solution quality or the memory usage.
	\paragraph{Algorithm Configurator.}
	The goal in algorithm configuration is, roughly speaking, to find a configuration that is optimal, or at least nearly optimal, with respect to the costs in a certain sense, which we specify below.
	The search for such configurations is achieved by designing an algorithm configurator $\mathcal{AC}$ that (i) selects specific configurations in $\Theta$ and (ii) runs them on some (perhaps randomly) chosen  problem instances in $\mathcal{I}$.
	To this end, the algorithm configurator uses a statistic $s:\bigcup_{t\in \N} \R^t \to \R$ that maps the observed cost of a configuration $\theta$ used for a set of problem instances $i_1,\ldots,i_t$ to a representative numerical value $s(c(i_1,\theta), \dots, c(i_t,\theta)),$ that guides the search behavior of $\mathcal{AC}.$
	For example, $s$ could be the arithmetic mean, i.e., $s(c(i_1,\theta), \dots, c(i_t,\theta)) = t^{-1} \sum_{s=1}^t c(i_s,\theta)$.
	
	In this work, we are interested in algorithm configurators that can run several different configurations, up to a certain size $k$, in parallel on a selected problem instance.
	The algorithm configurator is given a fixed computational budget $B$, which represents the maximum number of such parallel runs and is set externally. 
	For this purpose, let $\Theta_{[2,k]} = \{\tilde{\Theta} \subset \Theta ~|~ 2 \leq |\tilde{\Theta}| \leq k\}$ be the set of all possible subsets of parameter configurations that have at least size $2$ and at most size $k$. Furthermore, let $\Theta_{[2,k]}(\theta) = \{\tilde{\Theta} \in \Theta_{[2,k]} ~|~ \theta \in \tilde{\Theta}\}$ be the set of all possible subsets of parameter configurations containing the configuration $\theta \in \Theta.$ 
	Note, that the observed cost $c(i,\theta)$ for running $\theta$ along with other configurations on an instance $i\in\mathcal{I}$ could depend on the respectively chosen configuration set $\tilde{\Theta} \in \Theta_{[2,k]}(\theta).$
	For example, the algorithm configurator could stop the parallel solution process as soon as one of the configurations provides a solution, and set a default cost (penalty) for the configurations that did not complete.
	Hence, we write $c_{\tilde{\Theta}}$ for the cost function in the following to take this contingency into account.
	Finally, we introduce $s_{\theta|\tilde{\Theta}}(t) = s(c_{\tilde{\Theta}}(i_{1},\theta), \dots, c_{\tilde{\Theta}}(i_{t},\theta))$ which is the statistic of $\theta$ after running it in parallel with the configurations in $\tilde{\Theta}\backslash\{\theta\}$ on $t$ problem instances $i_1,\ldots,i_t.$ 
	
	\paragraph{$\epsilon$-optimal Configurations.}
    Since the observed costs are potentially dependent on the chosen set of configurations to evaluate, we first introduce the following assumption on the limit behavior of the statistics.
	\begin{align*}
		(A1)~:~ \forall \tilde{\Theta} \in \Theta_{[2,k]} \ \forall \theta \in \tilde{\Theta} ~:~ S_{\theta|\tilde{\Theta}} = \lim_{t \rightarrow \infty} s_{\theta|\tilde{\Theta}}(t) \text{ exists.}
	\end{align*}
	In words, for each possible set of configurations the (possibly dependent) statistic of each configuration involved converges to some limit value if run on infinitely many problem instances.
	Recalling the example of $s$ being the arithmetic mean, this is arguably a mild assumption and implicitly assumed by most approaches for AC problems, due to the considered i.i.d. setting. Since our assumption is more general, it would also allow considering non-stationary scenarios of AC.

	The natural notion of an optimal configuration $\theta^{\ast}$ is a configuration that has the  largest (configuration-set dependent) limit value over all configurations.
	Indeed, if we would replace $\Theta_{[2,k]}$ by the singleton sets of all configurations in $\Theta,$ this would correspond to the commonly used definition of the optimal configuration (see \ \cite{ACSurvey}), as the limit value would be then $\mathbb{E}[c(i,\theta)]$\footnote{The expectation is w.r.t.\ $\mathcal{P}$ and the possible randomness due to $\mathcal{A}_{\theta}$ and/or the cost generation.}.  
	However, in our case this notion of optimality has two decisive drawbacks, as first of all such a $\theta^{\ast}$ may not exist. Moreover, even if it exists, the search for it might be hopeless as the configuration space is infinite (or very large).
	The latter issue arises in the ``usual'' AC problem scenario as well, and is resolved by relaxing the objective to finding a ``good enough'' configuration.
	We adapt this notion of near optimality by resorting to the definition of an $\epsilon$-best arm from the preference-based bandit literature \cite{bengs2021preference}. For some fixed relaxation parameter $\varepsilon>0,$ we call a configuration $\theta$ an  \emph{$\epsilon$-best configuration} iff
	\begin{align} \label{def_epsilon_best}
		\forall \tilde{\Theta} \in \Theta_{[2,k]}(\theta)~: ~ S_{\theta|\tilde{\Theta}} \geq S_{(1)|\tilde{\Theta}} - \epsilon,
	\end{align}
    where $S_{(1)|\tilde{\Theta}} \geq\ldots \geq S_{(|\tilde{\Theta}|)|\tilde{\Theta}}$ is the ordering of $\{S_{\theta|\tilde{\Theta}}\}_{\theta\in \tilde{\Theta}}.$
	
	\noindent
	Although we have relaxed the notion of optimality, finding $\epsilon$-best configurations is still often like searching for a needle in a haystack.
	Hence, we need to ensure that there is a sufficiently high probability that an $\epsilon$-best configuration is included in a large random sample set of $\Theta$: 
	\begin{align*}
			(A2):\mbox{the proportion of $\epsilon$-best configurations is $\alpha\in(0,1).$}
	\end{align*}
	Note this assumption is once again inspired by the bandit literature dealing with infinitely many arms \cite{NEURIPS2021_bd33f02c}.
	By fixing the probability for the non-occurrence of an $\epsilon$-best configuration to some $\delta \in (0,1),$ Assumption $(A2)$ ensures that a uniformly at random sampled set of configurations with size  $N_{\alpha, \delta} = \lceil \log_{1-\alpha}(\delta)\rceil$ contains at least one $\epsilon$-best configuration with probability at least $1-\delta$.
	
	Of course, an efficient algorithm configurator $\mathcal{AC}$ that aims to find an $\epsilon$-best configuration $\theta^{\ast}$ cannot verify the condition $S_{\theta^{\ast}|\tilde{\Theta}} \geq S_{(1)|\tilde{\Theta}} - \epsilon$ for every possible query set $\tilde{\Theta} \in \Theta_{[2,k]}(\theta^{\ast})$, in particular when the number of configurations, and thus the cardinality of $\Theta_{[2,k]}(\theta^{\ast})$, is infinite. Instead, $\mathcal{AC}$ can only guarantee the above condition for a finite number of query sets and therefore it will always find a proxy for an $\epsilon$-best configuration that does not have to be a true $\epsilon$-best configuration. To guarantee that $\mathcal{AC}$ finds a true $\epsilon$-best configuration with high probability, we introduce the following assumption.
	\begin{align*}
		&(A3):\forall M \in \N, \forall \theta\in \Theta : \\
		&\P_{} \Big( (\forall i\in [M]: S_{\theta|\tilde{\Theta}_i} \geq S_{(1)|\tilde{\Theta}_i} - \epsilon)  \\
		&\ \Ra  (\forall \tilde{\Theta} \in \Theta_{[2,k]}(\theta) : S_{\theta|\tilde{\Theta}} \geq S_{(1)|\tilde{\Theta}}-\epsilon )  \Big) \geq 1- \psi(M),
    \end{align*}
	where $\tilde{\Theta}_1, \dots ,\tilde{\Theta}_M \sim \mathrm{Uniform}(\Theta_{[2,k]}(\theta))$ and $\psi:\N \to [0,1]$ is a strictly monotone decreasing function.
	In words, the probability that a configuration $\theta$ is a global $\epsilon$-best configuration increases with the number of (randomly chosen) configuration sets on which it is a local $\epsilon$-best configuration, i.e., the characteristic condition in \eqref{def_epsilon_best} is fulfilled.
	
	Note that $(A3)$ is a high-level assumption on the difficulty of the underlying AC problem. The ``easier'' the problem, the steeper the form of $\psi$ and vice versa. As we do not impose further assumptions on $\psi$ other than monotonicity, our theoretical results below are valid for a variety of AC problems.
	\section{AC-Band Algorithm}
    
    The AC-Band algorithm consists of iterative calls to CSE~\cite{brandt2022finding} that allow it to successively reduce the size of a candidate set of configurations. We first describe how CSE works, then elaborate on its use in the AC-Band approach.
	\paragraph{Combinatorial Successive Elimination.}
	CSE (Algorithm \ref{alg:Framework}) is a combinatorial bandit algorithm that, given a finite set of arms (configurations), finds an optimal arm\footnote{Its existence is simply assumed.}  defined similarly to \eqref{def_epsilon_best} for $\epsilon=0$ using only a limited amount of feedback observations (budget). 
	To this end, CSE proceeds on a round-by-round basis, in each of which  (i) the non-eliminated arms are partitioned into groups of a given size $k$ (line $3$)  if possible (lines 4--8, 18--19) and (ii) feedback for each group is queried under a round-specific budget, which, once exhausted, leads to the elimination of a certain fraction of arms in each group (Algorithm \ref{alg:ae} called in lines 12 and 19). 	
	Here, the feedback observed for an arm is mapped to a numerical value using a statistic $s$ that indicates the (group-dependent) utility of the arm, and is used as the basis for elimination in each round.
	The fraction of eliminated arms is steered via a function $f_{\rho}:[k] \rightarrow [k]$ with $f_{\rho}(x) = \lfloor \nicefrac{x}{2^\rho} \rfloor,$ where the predetermined parameter $\rho \in (0, \log_2(k)]$ controls the aggressiveness of the elimination. A large value of $\rho$ corresponds to a very aggressive elimination, retaining only the arm(s) with (the) highest statistics, while a small $\rho$ eliminates only the arm(s) with the lowest statistics.
	The overall available budget is first split equally for each round, and then for all partitions in each round (line 2).

			\begin{algorithm}[H]
				\caption{Combinatorial Successive Elimination (CSE)} \label{alg:Framework}
				\textbf{Input:} set of configurations $\tilde{\Theta}$ with $|\tilde{\Theta}| = n$, subset size $k\le n$, budget $B,$ 
				$\rho \in (0, \log_2(k)]$, problem instances $I$ with $|I|=B$ which can be partitioned into \\ 
				$R^{\rho,k,n} = \min_x\{ g^{\circ x}(n)\leq k\} + \min_x\{f_{\rho}^{\circ x}(k)\leq 1 \}$ 
				problem instances $I_r$ with $|I_r| = P_{r}^{\rho,k,n} \cdot b_r$ where $\left\{P^{\rho,k,n}_{r}\right\}_r = \{ \lfloor \nicefrac{n}{k} \left(\nicefrac{f_{\rho}(k)}{k} \right)^{r-1} \rfloor \}_r,$  and $g(x) = f_{\rho}\left(k\right) \cdot \left\lfloor\nicefrac{x}{k} \right\rfloor + x\mod k$ \\
				\textbf{Initialization:}
				$ \tilde{\Theta}_1 \leftarrow \tilde{\Theta}$,
				$ r \leftarrow 1 $ \\
				\vspace*{-.3cm}
				\begin{algorithmic}[1]
					\WHILE {$|\tilde{\Theta}_r| \ge k$}
					\STATE    $b_{r} \leftarrow \lfloor \nicefrac{B}{(P_{r}^{\rho,k,n} \cdot R^{\rho,k,n})}\rfloor$ , $J \leftarrow P_{r}^{\rho,k,n}$
					\STATE $\tilde{\Theta}_{r,1}, \tilde{\Theta}_{r,2},\dots,\tilde{\Theta}_{r,J} \leftarrow \mathrm{Partition}(\tilde{\Theta}_r, k)$
					\IF {$|\tilde{\Theta}_{r,J}| < k $}
					\STATE $\mathcal{R} \leftarrow \tilde{\Theta}_{r,J}$, $J \leftarrow J-1$
					\ELSE 
					\STATE $\mathcal{R} \leftarrow \emptyset$
					\ENDIF
					\STATE $I_{r,1},\dots,I_{r,J} \leftarrow \mathrm{Partition}(I_r, b_r)$
					\STATE $\tilde{\Theta}_{r+1} \leftarrow \mathcal{R}$
					\FOR {$j \in [J]$}
					\STATE $\mathcal{R} \leftarrow \mathrm{ArmElimination}(\tilde{\Theta}_{r,j}, b_r, f_{\rho}(|\tilde{\Theta}_{r,j}|),I_{r,j})$
					\STATE $ \tilde{\Theta}_{r+1} \leftarrow \tilde{\Theta}_{r+1} \cup \mathcal{R}$
					\ENDFOR
					\STATE $r \leftarrow r+1$
					\ENDWHILE
					\STATE $\tilde{\Theta}_{r+1} \leftarrow \emptyset$
					\WHILE {$|\tilde{\Theta}_r| > 1$}
					\STATE $\tilde{\Theta}_{r+1} \leftarrow $ $\mathrm{ArmElimination}(\tilde{\Theta}_{r},b_r,f_{\rho}(|\tilde{\Theta}_{r}|), I_{r})$
					\STATE $r \leftarrow r+1$
					\ENDWHILE
				\end{algorithmic} 
				\hspace*{3pt} \textbf{Output:} The remaining item in $\tilde{\Theta}_r$
			\end{algorithm}
			\vspace*{-0.1cm}

			\vspace*{-0.5cm}
			\begin{algorithm}[H]
				\caption{ArmElimination$(\tilde{\Theta}',b,l,I')$} 
				\label{alg:ae}
				\begin{algorithmic}[1]
					\STATE Use $\tilde{\Theta}'$ for $b$ times on problem instances $I'$
					\STATE For all $\theta \in \tilde{\Theta}'$, update $s_{\theta|\tilde{\Theta}'}(b)$
					\STATE Choose an ordering $\theta_{1},\dots,\theta_{|\tilde{\Theta}'|}$ of $(s_{\theta|\tilde{\Theta}'}(b))_{\theta\in \tilde{\Theta}'}$
					\STATE \textbf{Output: } $\{\theta_{1},\dots,\theta_{l}\}$
				\end{algorithmic} 
			\end{algorithm}

	In light of the AC problem we are facing, \emph{querying feedback for a group of arms} corresponds to running a subset of configurations in parallel on a problem instance, which results in observations in the form of costs.
	Moreover, we do not reuse a single problem instance for any parallel run so that the budget is in fact equal to the number of problem instances used.
	Accordingly, the overall budget of CSE corresponds to the number of problem instances used in total, which are split into disjoint problem instance sets of size $b_r,$ i.e., the round-specific budget (line 9).
	
	Since CSE initially assumes a finite set of arms and a fixed parameter $\rho$ guiding the overall elimination aggressiveness, we face two trade-offs.
	The first is regarding the interplay between the number of initial arms and the round-wise budget:
	\begin{itemize}[noitemsep,topsep=2pt,leftmargin=8.5mm]
		\item [(T1):] If the initial number of configurations for CSE is small (large), the more (fewer) runs can be carried out on different instances, leading to potentially more reliable (unreliable) statistics, but only on a few (many) subsets of configurations.
	\end{itemize}
	The second trade-off arises through the interplay between the round-wise budget and the elimination aggressiveness: 
	\begin{itemize}[noitemsep,topsep=2pt,leftmargin=8.5mm]
		\item [(T2):] If the elimination behavior of CSE is aggressive (conservative), then more (fewer) runs can be carried out on different instances, leading to potentially more reliable (unreliable) statistics, but only on a few (many) subsets of configurations.
	\end{itemize}
	The challenge now is to reconcile these two trade-offs and, above all, to take into account the specifics of AC problems.

			\begin{algorithm}[H]
				\caption{AC-Band} \label{alg:ACband}
				\textbf{Input:} target algorithm $\mathcal{A}$, configuration space $\Theta$, problem instance space $\mathcal{I}$, Budget $B$, subset size $k$, suboptimality  $\epsilon$ of "good enough" configuration, proportion of $\epsilon$-best configurations $\alpha$, failure probability $\delta$, $n_0 > \lceil \nicefrac{\ln(\delta)}{\ln(1-\alpha)} \rceil \in \mathbb{N}$\\
				\textbf{Initialization:} $E \leftarrow \left\lceil \log_2\left(\nicefrac{n_0}{n_0 - N_{\alpha, \delta}} \right) \right\rceil,$ $q \leftarrow 1+\nicefrac{k-1}{E}$\\
				$C_1 \leftarrow \log_{q}\left(2\right)$, 
				$C_2 \leftarrow 1 + \log_{q}\left(n_0 +  \frac{4 n_0}{n_0-N_{\alpha,\delta}} \right)$, \\
				$C_3 \leftarrow \lceil \log_{q}(k) \rceil$
				\begin{algorithmic}[1]
					\STATE sample $\theta_0 \in \Theta$
					\FOR { $e \in [E]$}
					\STATE $n_e = \left\lceil \nicefrac{n_0}{2^{e}} \right\rceil + 1$, $\rho_{e} =  \log_2\left(\nicefrac{e+k-1}{e}\right)$, 
					\STATE sample  $\theta_{e,1}, \dots, \theta_{e,{n_e-1}} \in \Theta$
					\STATE sample  $I_e \subset \mathcal{I} \backslash \cup_{e'=1}^{e-1} I_{e'}$ with $|I_e| = \nicefrac{B}{c_e},$ \\ 
					where $c_e = \frac{(C_1 E - (2^{E}-1)(2C_1-C_2-C_3))2^{e}}{2^{E}(-eC_1+C_2+C_3)}$
					\STATE $\theta_e = \mathrm{CSE}( \{\theta_{e-1}, \theta_{e,1}, \dots, \theta_{e,{n_e-1}}\}, k,$ $|I_e|,\rho_{e},I_e)$
					\ENDFOR
				\end{algorithmic}
				\hspace*{3pt} \textbf{Output:} $\theta_{E}$
			\end{algorithm}

	    \begin{figure}
\centering
\scalebox{0.6}{
\begin{tikzpicture}
		\draw[blue, dotted, thick , rounded corners] (0.4,1.5) rectangle ++(4,4.75);
		\node[blue] at (2.4,1.75) {$e=1$};
		\draw[blue,  dotted, thick, rounded corners] (4.5,1.5) rectangle ++(4,4.75);
		\node[blue] at (6.5,1.75) {$e=2$};
		\node[rectangle, draw, rounded corners] (A) at (0,4) {$\theta_{0}$};
		\node[rectangle, draw, rounded corners] (B) at (2,4) {$\begin{matrix} \theta_{0} \\ \theta_{1,1} \\ \theta_{1,2} \\ \vdots \\ \vdots \\ \theta_{1,n_{1}-1} \end{matrix}$};
		\node[rectangle, draw, rounded corners] (C) at (4,4) {$\theta_{1}$};		
		\node[rectangle, draw, rounded corners] (D) at (6,4) {$\begin{matrix} \theta_{1} \\ \theta_{2,1} \\  \vdots \\ \theta_{2,n_{2}-1} \end{matrix}$};
		\node[rectangle, draw, rounded corners] (E) at (8,4) {$\theta_{2}$};	
		\node (F) at (10,4) {{\Large $\cdots$}};	
		\node[ellipse, draw, black,minimum width = 2cm, minimum height = 1.2cm ] (X) at (10,1) {};
		\node[black] at (10,1) {$\Theta$};
		\node[ellipse, draw, black,minimum width = 2cm, minimum height = 1.2cm ] (Y) at (10,7) {};
		\node[black] at (10,7) {$\mathcal{I}$};
		\draw[->,thick] (A) -- (B);
		\draw[->,very thick] (B) -- node[below] {{\small CSE}} (C);
		\draw[->,thick] (C) -- (D);
		\draw[->,thick, very thick] (D) -- node[below] {{\small CSE}} (E);
		\draw[->,thick] (E) -- (F);
		\draw[-,thick, dashed] (Y) -- node[above] {{\small sample new problem instances $I_{e}$ }} (3.25,7);
		\draw[-,thick, dashed] (X) -- node[below] {{\small sample $n_{e}-1$ configurations}} (0.75,1);
		\draw[->,thick,dashed] (0.75,1) -- (0.75,4);
		\draw[->,thick,dashed] (4.75,1) -- (4.75,4);
		\draw[->,thick,dashed] (3.25,7) -- (3.25,4);
		\draw[->,thick,dashed] (7.25,7) -- (7.25,4);
\end{tikzpicture} 
}
\caption{Illustration of AC-Band's solving process.}
\label{fig_AC_Band}
\end{figure}
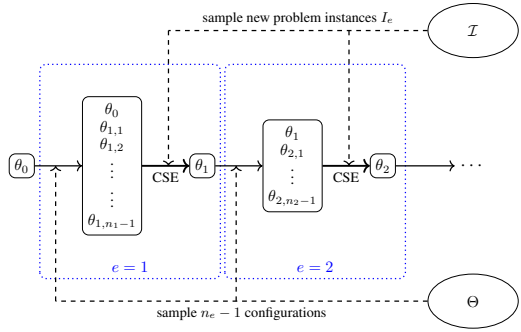

	\paragraph{AC-Band.}
	The design of AC-Band (Algorithm \ref{alg:ACband}) seeks to find a good balance for both tradeoffs (T1) and (T2) by calling CSE iteratively.
	Initially, CSE is invoked with larger sets of configurations, and an aggressive elimination strategy is applied. 
	Over time, the size of the candidate sets is successively reduced, and the aggressiveness of the elimination strategy is also gradually decreased. 
	Roughly speaking, the idea is to have high exploration in the beginning, and thus more risk that good configurations are discarded, and become more and more careful towards the end.
	
	More specifically, AC-Band proceeds in epochs $e\in\{1,\ldots,E\}$, in each of which CSE is called on a specific set of problem instances using a specified degree of elimination aggressiveness $\rho_e$ and a set of configurations of size $n_e,$ with both $\rho_e$ and $n_e$ decreasing w.r.t.\ $e$ (line 3). 
	At the end of an epoch, i.e., when CSE terminates, a single high quality configuration among the considered set of configurations is returned (line 6).
	The set of configurations used in an epoch consists of the high quality configuration of the previous epoch, which is sampled randomly for the first epoch (line 1), and $n_e-1$ randomly sampled configurations, which have not been considered before (line 4). 

	This epoch-wise procedure of AC-Band is depicted in Figure~\ref{fig_AC_Band}.
		Although AC-Band is similar in design to Hyperband in that it tries to find a good balance between specific trade-offs by successively invoking a bandit algorithm (CSE vs.\ SH), AC-Band differs in the way it defines the quality of the search objects\footnote{Hyperband uses hyperparameters of machine learning models as search objects and AC-Band algorithm configurations.}.  
	Unlike Hyperband, we do not run configurations individually on problem instances and consider the cost of each configuration on its own, rather configurations are run in parallel and we consider potential interactions. 
	Accordingly, we do not have one global quality value that we can compare for all configurations seen, but several at once.

	The overall number of considered problem instances is $B$ (the evaluation budget), which is a parameter that we analyze below.
	Besides the ``usual'' parameters of an algorithm configurator, i.e., the target algorithm $\mathcal{A},$ its configuration space $\Theta$ and the problem instance space $\mathcal{I},$ AC-Band requires:
	\begin{itemize}[noitemsep,topsep=0pt,leftmargin=3.5mm]
		\item the maximum number of configurations that can be run in parallel $k,$
		\item some relevant summary statistic $s:\bigcup_{t\in \N} \R^t \to \R$ for the observed costs (see Section \ref{sec_problem_formulation}),
		\item the theoretically motivated guarantee parameters $\epsilon>0,$  $\alpha \in (0,1),$ and $\delta \in(0,1)$ (see Section \ref{sec_problem_formulation})
        \item a reference size  $n_0$ for the set of epoch-wise sampled configurations (must be  $\geq \lceil \frac{\ln(\delta)}{\ln(1-\alpha)} \rceil$ for technical reasons).
	\end{itemize}
	With these parameters specified, AC-Band determines the overall number of epochs $E$ and the sufficient number of problem instances $B/c_e$ (line 5) for an epoch-wise CSE to return a high quality configuration (see Section \ref{sec_theoretical_guarantuess}).
	Moreover, the overall number of considered configurations is guaranteed to be at least $N_{\alpha, \delta} = \lceil \frac{\ln(\delta)}{\ln(1-\alpha)} \rceil,$ which in light of the random sampling of the configurations ensures that at least one $\epsilon$-best configuration is sampled with a probability of at least $1-\delta$.

\section{Theoretical Guarantees} \label{sec_theoretical_guarantuess}
In Appendix \ref{sec:ACBand}, we prove the following theoretical guarantee for AC-Band regarding the sufficient evaluation budget (or number of problem instances) to find an $\epsilon$-best configuration with high probability w.r.t.\ $\mathcal{P}$ as well as the randomness invoked by AC-Band.
For the proof, we need to extend the theoretical guarantees for CSE to the setting of finding $\epsilon$-best configurations (see Appendix \ref{sec:CSE}).
\begin{Thm} \label{Thm:ACBandSufficientBudget}
    Let $R^e$ be the number of rounds of CSE in epoch $e\in\{1,\ldots,E\}$, let $C_1$, $C_2$ and $C_3$ be as defined in Alg.\ \ref{alg:ACband} and further let $\mathbb{A}_{r}(\theta^{\ast})$\footnote{$\mathbb{A}_{r}(\theta) = \emptyset$ if $\theta$ is not contained anymore in the set of active configurations in round $r$.} be the partition in round $r$ of CSE containing $\theta^{\ast}$ and
    \begin{align*}
        &\bar{\gamma}^{-1} = \max_{e \in [E],r\in [R^{e}]} \big(1+ \bar{\gamma}_{\mathbb{A}_{r}(\theta^{\ast})}^{-1}\big( \\
        &\qquad \qquad \qquad \max\big\{ \frac{\epsilon}{2},\max_{r \in [R^{e}]}\frac{\Delta_{(f_{\rho}(|\mathbb{A}_{r}(\theta^{\ast})|)+1)|\mathbb{A}_{r}(\theta^{\ast})}}{2}\big\}\big) \big), \\
        &\Delta_{(i)|\tilde{\Theta}}=S_{(1)|\tilde{\Theta}}-S_{(i)|\tilde{\Theta}}, ~~~\\
        &\bar{\gamma}^{-1}_{\mathbb{A}_{r}(\theta^{\ast})}(t)= \min_{\theta \in \mathbb{A}_{r}(\theta^{\ast})} \inf\limits_{t' \in \mathbb{N} }\{ |s_{\theta|\mathbb{A}_{r}(\theta^{\ast})}(t') - S_{\theta | \mathbb{A}_{r}(\theta^{\ast})} | \leq t \}.  
    \end{align*}
    Under Assumptions $(A1)$--$(A3)$, Algorithm \ref{alg:ACband} used with a subset size $k$, an $\epsilon$-best configuration proportion of $\alpha$, and a failure probability of $\delta$ finds an $\epsilon$-best configuration $\theta^{\ast}$ with probability at least $\min\{ 1-\delta,  1 - \psi((R^{E})^{-1}) \}$
    if 
    \begin{align*}
       B \geq \bar{\gamma}^{-1} \cdot \frac{n_0}{k} \cdot \frac{C_1 E - (2^{E}-1)(2C_1-C_2-C_3)}{2^{E}}.
    \end{align*}
\end{Thm}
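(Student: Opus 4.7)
The plan is to decompose AC-Band's failure event into three contributions and bound each separately. The three bad events are: (i) no $\epsilon$-best configuration ever enters the pool of candidates sampled across the $E$ epochs; (ii) at some epoch $e$, the call to CSE fails to carry an $\epsilon$-best configuration from its input pool through to its output $\theta_e$; (iii) the final output $\theta_E$, while locally $\epsilon$-best on every partition it encountered, is not globally $\epsilon$-best in the sense of \eqref{def_epsilon_best}. A union bound over these three events yields the theorem's probability statement.

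For (i), AC-Band samples at least $n_0 \geq N_{\alpha,\delta}= \lceil \ln(\delta)/\ln(1-\alpha)\rceil$ configurations uniformly and independently from $\Theta$, so by Assumption $(A2)$ the chance that none of them is $\epsilon$-best is at most $(1-\alpha)^{n_0}\leq\delta$. For (ii), I invoke the extension of the CSE guarantee proved in Appendix B applied to each epoch with input size $n_e$, aggressiveness $\rho_e=\log_2((e+k-1)/e)$ and instance budget $|I_e|=B/c_e$. That guarantee produces a sufficient-budget condition of the form $|I_e|\geq \bar{\gamma}^{-1}\cdot (n_e/k)\cdot h(e,E,k)$, where $\bar{\gamma}^{-1}$ encodes the slowest concentration of $s_{\theta|\tilde{\Theta}}(t)$ to $S_{\theta|\tilde{\Theta}}$ at the threshold $\max\{\epsilon/2,\Delta_{(f_{\rho}(|\mathbb{A}_r(\theta^{\ast})|)+1)|\mathbb{A}_r(\theta^{\ast})}/2\}$; this threshold is precisely the precision needed so that $\theta^{\ast}$ is never among the $|\mathbb{A}_r(\theta^{\ast})|-f_{\rho}(|\mathbb{A}_r(\theta^{\ast})|)$ arms discarded by \textbf{ArmElimination}. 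The constants $C_1,C_2,C_3$ in Algorithm \ref{alg:ACband} are chosen so that summing these per-epoch demands collapses to the stated global bound; concretely, a direct computation using $\sum_{e=1}^E e/2^e = 2 - (E+2)/2^E$ and $\sum_{e=1}^E 1/2^e = 1 - 1/2^E$ verifies that $\sum_{e=1}^E 1/c_e = 1$, so the tight allocation $|I_e|=B/c_e$ exhausts exactly the total budget $B$, and the right-hand side of the theorem's budget inequality matches the sum of the per-epoch CSE requirements. For (iii), each round $r\in[R^E]$ of the last CSE call has evaluated $\theta_E$ inside a partition $\mathbb{A}_r(\theta^{\ast})$ that was obtained by a uniform-at-random partitioning of the candidate pool, which can be regarded as a sample from $\Theta_{[2,k]}(\theta_E)$; since $\theta_E$ passed the local $\epsilon$-best test on all $R^E$ of them, Assumption $(A3)$ bounds the probability that $\theta_E$ is not globally $\epsilon$-best by $\psi(R^E)$.

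The main obstacle I expect is controlling step (ii) tightly enough to keep the $\epsilon$-best property intact at every single elimination. The CSE statement in Appendix B has to be strengthened from the $\epsilon=0$ guarantee (preservation of a strict optimum) to the $\epsilon$-best setting, which requires replacing the gap $\Delta_{(f_{\rho}(|\cdot|)+1)|\cdot}/2$ inside $\bar{\gamma}^{-1}$ by the larger of that gap and $\epsilon/2$ — exactly the $\max\{\epsilon/2,\cdot\}$ appearing in the theorem. A secondary subtlety is the chaining of epochs: the output $\theta_{e-1}$ is mandatorily injected into the next pool, so the event that an $\epsilon$-best configuration survives must be propagated conditionally from one epoch to the next, rather than handled by $E$ independent union bounds. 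Finally, I need to argue that the partitions formed inside CSE are distributionally equivalent (or at least stochastically dominate) uniform samples from $\Theta_{[2,k]}(\theta^{\ast})$, so that Assumption $(A3)$ can be applied legitimately with $M=R^E$; this uses that the $n_e-1$ new configurations in epoch $e$ are freshly drawn and that \textbf{Partition} acts by uniformly shuffling them.
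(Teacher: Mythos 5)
Your skeleton matches the paper's: (A2) guarantees an $\epsilon$-best configuration is among the sampled ones with probability $1-\delta$ (in the paper this comes from summing $\sum_{e}\lceil n_0/2^e\rceil \geq n_0 - n_0/2^{E} \geq N_{\alpha,\delta}$, which is precisely how $E$ is chosen, not from a single draw of $n_0$ configurations); the $\epsilon$-best extension of the CSE guarantee (Theorem \ref{Thm_generalSufficient_eps_max}) is applied epoch-wise; the per-epoch budgets are summed via the geometric-series identity (your check that $\sum_e c_e^{-1}=1$ is correct and is exactly the role of Lemma \ref{Le_geom_series}); and (A3) is invoked with $M=R^{E}$ because in the worst case the returned configuration was sampled only in the last epoch. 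However, there is a genuine gap at the computational heart of the argument: you assert that ``$C_1,C_2,C_3$ are chosen so that summing the per-epoch demands collapses to the stated bound,'' but the whole content of the paper's proof is to \emph{derive} this, namely to show $R_1^{\rho_e,k,n_e}\leq C_2 - eC_1$ and $R_2^{\rho_e,k}\leq C_3$ from the specific choices $\rho_e=\log_2\big(\tfrac{e+k-1}{e}\big)$, $n_e=\lceil n_0/2^{e}\rceil+1$ and $E=\lceil\log_2(n_0/(n_0-N_{\alpha,\delta}))\rceil$, together with $\max_r P_r^{\rho_e,k,n_e}\leq\lfloor n_e/k\rfloor$, before Lemma \ref{Le_geom_series} can be applied. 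Without that derivation the claim that the right-hand side of the theorem's budget inequality equals the sum of the per-epoch CSE requirements is circular (and in fact the paper's own computation produces additional lower-order terms and a cruder $n_0+2^{E+1}$ factor that the theorem statement suppresses), so this step cannot simply be quoted as given.

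A second, smaller divergence is the probability bookkeeping. In the paper, the CSE step is \emph{deterministic} once the epoch-wise budget exceeds $z(\rho_e,k,n_e,\epsilon)$ (Theorem \ref{Thm_generalSufficient_eps_max} is a sufficient-budget statement under (A1), not a high-probability one), so there is no per-epoch failure event to union over, no conditioning across epochs, and no need to argue that the internal partitions are uniform samples; only two sources of randomness remain, which the paper combines as $\min\{1-\delta,\,1-\psi(\cdot)\}$ with $\psi$ applied to the number of query sets in the last epoch. Your three-event union bound with a probabilistic middle event would instead deliver a guarantee of the form $1-\delta-\psi(R^{E})$, which is weaker than the stated minimum, so as written your accounting does not reproduce the theorem's probability bound (your use of $\psi(R^{E})$ rather than the statement's $\psi((R^{E})^{-1})$ is, admittedly, the more natural reading of (A3), and coincides with the intent of the appendix argument).
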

Roughly speaking, AC-Band finds a near-optimal configuration with a probability depending on the allowed failure probability $\delta$ and the probability that a locally optimal configuration is also a globally optimal one (see Assumption (A3)) if the budget is large enough. The sufficient budget, in turn, is essentially driven by $\bar{\gamma}^{-1},$ which depends on the difficulty of the underlying AC problem by two characteristics: the maximal inverse convergence speed $\bar{\gamma}^{-1}_{\mathbb{A}_r(\theta^*)}$ of the used statistic $s,$ and the maximal (halved) suboptimality gap $\Delta$ of the limits of the statistic between the best configuration and the best one that will be discarded from the query set $\theta^*$ is contained.
The remaining terms of the sufficient budget can be computed explicitly once the theoretical guarantee parameters $\alpha$ and $\delta$, as well as the subset size $k$, are fixed. The sufficient budget in dependence of the mentioned parameters is discussed and plotted in Appendix \ref{sec:SuffBudgetDiscussion}.

\color{black}

Note that the theoretical guarantee in Theorem \ref{Thm:ACBandSufficientBudget} is not directly comparable to the ones by the theoretical AC approaches  \cite{kleinberg2019procrastinating,weisz2018leapsandbounds,weisz2019capsandruns,weisz2020impatientcapsandruns}.
This is due to the major differences of our approach and the later ones on how we approach the AC problem.
Indeed, we do not restrict ourselves to runtime as the target metric (or the costs), and we also take possible dependencies in the parallel runs into account.
As a consequence, the notion of near optimality of a configuration in the other works is tailored more towards runtimes in an absolute sense, i.e., without considering interaction effects, while ours is more general and in particular focusing on such interaction effects.
Thus, the theoretical guarantee in Theorem \ref{Thm:ACBandSufficientBudget} in the form of a sufficient evaluation budget to obtain a nearly optimal configuration is sensible, as we do not commit to a specific target metric.

\section{Experiments}\label{sec:experiments}
We examine AC-Band on several standard datasets for evaluating theoretical approaches for AC. We note that while these datasets refer exclusively to runtimes, AC-Band is applicable to other target metrics (see Section \ref{sec_problem_formulation}).
In our experiments, we address the following two research questions: 
\textbf{Q1}: Is AC-Band able to find high quality configurations in less CPU time than state-of-the-art AC methods with guarantees? \textbf{Q2}: How does AC-Band scale with $k$?

\textbf{Datasets.} We use one SAT and two MIP datasets that have been used before to assess theoretical works on AC~\cite{weisz2020impatientcapsandruns}.
Due to space constraints, we only consider one of the MIP datasets here, while the appendix also discusses the other. The SAT dataset contains precomputed runtimes of configurations of the MiniSat SAT solver~\cite{een2003extensible} obtained by solving instances that are generated using CNFuzzDD %
\cite{weisz2018leapsandbounds}. The dataset contains runtimes for $948$ configurations on $20118$ instances. The MIP datasets curated by \citet{weisz2020impatientcapsandruns} are generated using an Empirical Performance Model (EPM)\cite{hutter2014algorithm}. In particular, the EPM is trained on the CPLEX solver~\cite{cplexv20} separately using instances from a combinatorial auction (Regions200 \cite{leyton2000towards}) and wildlife conservation (RCW \cite{ahmadizadeh2010empirical}) dataset. The resulting model is used to predict runtimes for $2000$ configurations and $50000$ and $35640$ new instances, respectively. Since all runtimes are precomputed (a timeout of $900$ seconds is used), the evaluation of configuration-instance pairs only required table look-ups in our experiments.

\begin{figure*}[ht!]

    \centering
    \includegraphics[width=0.9\textwidth]{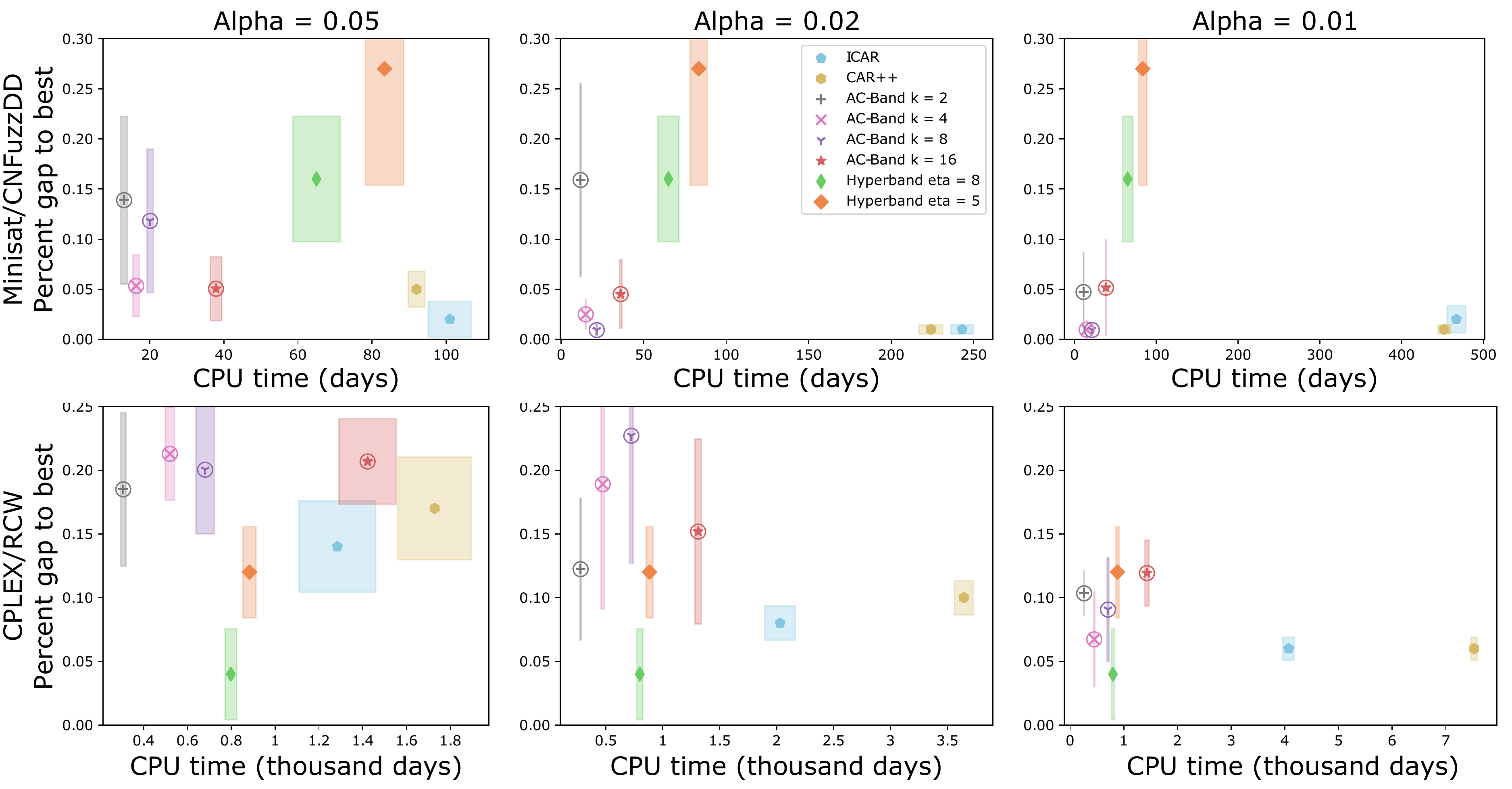}
  
  \caption{Mean CPU time and percent gap to best over $5$ seeds for $\delta=0.05$ and different $\alpha$ (columns) for AC-Band, ICAR, CAR++ and Hyperband on CNFuzzDD (top) and RCW (bottom). Circles indicate variants of AC-Band. Rectangles represent the standard error over the seeds. The number of configurations tried for CAR++: $\{97, 245, 492\}$, ICAR: $\{134, 351, 724\},$ AC-Band: $\{60, 153, 303\}$, Hyperband($\eta=5$): $\{842\},$ Hyperband($\eta=8$): $\{ 618\}$.} \label{ACBand results d05}
\end{figure*}

\textbf{Evaluation.} To compare methods, we consider two metrics: (i) the accumulated CPU time needed by a method to find a configuration, and (ii) the percent gap to the best configuration. This second metric measures in percent how much more time the configuration returned by the method needs to solve all instances compared to the best overall configuration for the dataset. 
Smaller values indicate that the configuration found is closer to the best configuration in terms of runtime and the best configuration has a value of $0$.
This allows for comparing the quality between methods, as well as to determine how ``far'' a configuration is from the optimal one. In practical applications of AC, wall-clock time is often a bottleneck, and speeding up the process of finding a suitable configuration is the main focus. For these speedups, practitioners are (usually) willing to sacrifice configuration quality to a certain extent.
The other theoretical works 
use the $R^{\delta}$ metric (note that $\delta$ has a different meaning in this work) to evaluate the quality of a returned configuration. This metric is a variation of the mean runtime, where the mean runtime of a configuration is only computed over the $(1-\delta)$ portion of instances with the lowest runtime. In real-world settings, we do not have the luxury of ignoring a part of the instance set, thus we do not view this metric as suitable for evaluating our approach.
For the sake of completeness, we nevertheless report the $R^{\delta}$  values in Appendix \ref{sec_further_results}.

    \textbf{Initialization of AC-Band.} Due to the generality of our approach, a summary statistic $s$ that measures the quality of a configuration needs to be determined. In our case, the $k$ configurations in a subset of CSE can be evaluated in parallel for an instance given that $k$ CPUs are available. When running $k$ configurations in parallel, time can be saved by stopping all remaining configuration runs as soon as the first configuration finishes on the given instance. Through this capping, we obtain right-censored feedback where a runtime is only observed for the ``finisher''. A statistic that is able to deal with this censored feedback is needed to avoid using an imputation technique that could potentially add bias to the feedback. In line with \citet{brandt2022finding}, we interpret the obtained feedback as a preference:  the finishing configuration is preferred over the remaining, unfinished configurations. Once we have obtained these preferences for multiple instances, we can use a preference-based statistic such as the relative frequency to establish an ordering over the configurations in $k$. In particular, we count how many times a configuration finishes first over a set of instances\footnote{Code: https://github.com/DOTBielefeld/ACBand}.

\textbf{Competitors.} AC-Band is compared against ICAR,  CAR++~\cite{weisz2020impatientcapsandruns} and Hyperband~\cite{Li2016Hyperband}. At the moment, ICAR is the best performing AC method that comes with theoretical guarantees. We use the implementation provided by \citet{weisz2020impatientcapsandruns} with the same hyperparameter settings. Since AC-Band is inspired by Hyperband, we also adapt Hyperband for the AC setting for a comparison. Specifically, we set the parameter $R$ of Hyperband such that it uses the same total budget (number of instances) as AC-Band. 
In addition, we use the average runtime over instances as the validation loss and consequently return the configuration with smallest average runtime.
Finally, we set $s_{max} = \lceil (\log_{\eta}(n_{max}))\rceil$, adjust the  calculation of $r_i$ slightly to account for instances that were already seen, and try different values of $\eta.$ 

\textbf{Choice of $\delta.$}
Varying $\delta$ can lead to significantly different performance of AC-Band and other techniques.
Due to space constraints, we only show results for two datasets for $\delta = 0.05$ since this setting has also been used in previous work~\cite{weisz2020impatientcapsandruns}. We note, however, that other settings are just as valid, and therefore also provide results for  $\delta = 0.01$ and additional datasets in Appendix \ref{sec_further_results}. In fact, using $\delta = 0.01$ can result in finding better configurations, albeit it is up to the user of the AC approach to decide which $\delta$ best suits their needs.

\paragraph{Q1}
Figure \ref{ACBand results d05} shows the CPU time used by each method and the percent gap to best configuration. With a small value of $k$, AC-Band lies on the Pareto front of percent gap versus CPU time for both datasets (for the third, Regions200, as well). This allows us to answer \textbf{Q1} in the affirmative. In particular, with $k=2$ AC-Band is $72\%$ percent faster than ICAR and $73\%$ faster than Hyperband for $\delta = 0.05$ over all $\alpha$ and datasets, while providing configurations that are only $7\%$ and $6\%$ worse in terms of the gap to the best configuration. For most real-world applications, this is an acceptable trade-off in time versus quality. For all datasets, the percent gap to best decreases when AC-Band samples more configurations (smaller $\alpha$). This increase in exploration does not lead to a significant increase in CPU time for a fixed $k$, since AC-Band still has the same budget, i.e., additional configurations are evaluated on fewer instances. 

Hyperband samples more configurations in total than AC-Band, which helps it to achieve a better percent gap to best on datasets where the majority of configurations have a high mean runtime. On these datasets, only a few good configurations are present. This is the case for the RCW dataset (and the Regions200 dataset in the appendix) where only a few instances are needed to determine that a configuration should be discarded. On datasets where the runtime of configurations is more evenly distributed, such as the CNFuzzDD dataset, using too few instances may lead to discarding promising configurations early, giving AC-Band an edge by evaluating less configurations more thorough. Lastly, since Hyperband does not use  capping, its CPU time deteriorates.

\paragraph{Q2}
Our experiments clearly indicate that lower values of $k$ are preferable. With $k=2$, more CSE rounds are performed and thus the number of configurations decreases slower than with a higher $k$. With a larger $k$, 
the opposite occurs, and significant amounts of CPU time are expended with little information gain.
However, note that higher $k$'s have a lower wall-clock time, so a user would receive answers sooner.

\section{Conclusion}
In this paper we introduced AC-Band, a versatile approach for AC problems that comes with theoretical guarantees even for a range of target metrics in addition to runtime.
We showed that AC-Band returns a nearly optimal configuration w.r.t.\ the target metric with high probability if used with a sufficient number of problem instances and the underlying AC problem satisfies some mild assumptions.
In our experimental study, we considered an instantiation of AC-Band based on preference feedback, which generally leads to faster average CPU times than other theoretical approaches, while still returning a suitable configuration in the end. 

Our results open up several possibilities for future work.
First, it would be interesting to analyze AC-Band specifically for the case in which runtime is the relevant target metric and investigate whether a similar worst-case overall runtime guarantee can be derived as for the other theoretical approaches in this vein.
Next, a theoretical as well as empirical analysis regarding the interplay between the explicit instantiation of AC-Band w.r.t.\ the underlying statistic $s$ and the characteristics of the underlying AC would be desirable. 
In other words, on what  types of AC problems does a specific instantiation of AC perform well or poorly? 
Also, our mild Assumption (A1) would even allow some leeway in the configuration or problem instance sampling strategy of the algorithm configurator, which is currently simply uniformly at random for AC-Band.
Finally, real-world AC applications generally have only a handful of instances available, thus it would be advantageous to have strong theoretical guarantees even for scenarios without thousands of instances.

\section*{Acknowledgements}

This research was supported by the research training group Dataninja (Trustworthy AI for Seamless Problem Solving: Next Generation Intelligence Joins Robust Data Analysis) funded by the German federal state of North Rhine-Westphalia and by the German Research Foundation (DFG) within the project ``Online Preference Learning with Bandit Algorithms'' (project no. 317046553).

\bibliography{Camera-ready}

\clearpage

\appendix
\onecolumn
\section{List of Symbols}
The following table contains a list of symbols that are frequently used in the main paper as well as in the following supplementary material. \\ \medskip
\small
\begin{tabularx}{\columnwidth}{lX}
    \hline
	\multicolumn{2}{c}{\textbf{Basics}} \\
	\hline
	$\id\{ \cdot  \}$ & Indicator function \\
	$\N$  & Set of natural numbers (without 0), i.e., $\N = \{1,2,3,\dots\}$ \\
	$\R$ & Set of real numbers\\
    $\mathcal{I}$ & Space of problem instances\\
    $\mathcal{P}$ & Probability distribution over instance space $\mathcal{I}$\\
    $\mathcal{A}$ & Target algorithm that can be run on problem instance $i\in\mathcal{I}$\\
    $\Theta$ & Configuration search space consisting all feasible parameter configurations\\
    $\mathcal{A}_{\theta}$ & Instantiation of $\mathcal{A}$ with configuration $\theta$\\
    $c(i,\theta)$ & $c:\mathcal{I} \times \Theta \rightarrow \mathbb{R}$ costs of using $\mathcal{A}_{\theta}$ on problem instance $i\in \mathcal{I}$\\
    $\mathcal{AC}$ & Algorithm configurator  \\
    $k$ & Maximal possible subset size \\
    $B$ & Budget for the learner (algorithm configurator) \\ 
    $\Theta_{[2,k]}$ & All subsets of $\Theta$ of size at least 2 and at most $k$: $\{\tilde{\Theta} \subseteq \Theta ~|~ 2 \leq |\tilde{\Theta}| \leq k \}$\\
    $\Theta_{[2,k]}(\theta)$ & All subsets in $\Theta_{[2,k]}$ which contain configuration $\theta$: $\{\tilde{\Theta} \in \Theta_{[2,k]} ~|~ \theta \in \tilde{\Theta}\}$\\
    $c_{\tilde{\Theta}}$ & Cost of running $\theta$ on problem instance $i\in\mathcal{I}$ in parallel with configurations $\theta' \in \tilde{\Theta} \in \Theta_{[2,k]} \backslash\{\theta\} $\\
    \hline
	\multicolumn{2}{c}{\textbf{Modelling related}} \\
	\hline
	$s$ & Statistic mapping costs to a numerical value: $s: \cup_{t\in \mathbb{N}} \mathbb{R}^t \rightarrow \mathbb{R}$, $\left(c(i_1, \theta) , \ldots , c(i_t, \theta) \right) \mapsto s(c(i_1, \theta), \dots, c(i_t,\theta))$ \\
    $s_{\theta|\tilde{\Theta}}(t)$ & $s_{\theta|\tilde{\Theta}}(t) = s(c_{\tilde{\Theta}}(i_1, \theta), \dots, c_{\tilde{\Theta}}(i_t, \theta))$ statistic of $\theta \in \Theta$ after running $\tilde{\Theta}$ in parallel\\
    $S_{\theta|\tilde{\Theta}}$ & Limit of the statistics for configuration $\theta$ in query set $\tilde{\Theta},$ i.e., $\lim_{t\rightarrow \infty} s_{\theta|\tilde{\Theta}}(t)$ (see Assumption (A1))\\
    $\epsilon$ &  Near-optimality parameter, $\epsilon \in (0,1)$\\
    $\alpha$ & Proportion of $\epsilon$-best configurations in $\Theta$, $\alpha \in (0,1)$ (see Assumption (A2)) \\
    $\delta$ & Fixed error probability for identifying an $\epsilon$-best configuration, $\delta \in (0,1)$\\
    $N_{\alpha,\delta}$ & $N_{\alpha, \delta} = \lceil \log_{1-\alpha}(\delta)\rceil$ number of configurations that have to be sampled to ensure that at least one $\epsilon$-best configuration is contained with probability at least $1-\delta$\\
    $\gamma_{\theta|\tilde{\Theta}}$ &  
    Pointwise minimal function such that $|s_{\theta|\tilde{\Theta}}(t) - S_{\theta|\tilde{\Theta}}| \leq \gamma_{\theta|\tilde{\Theta}}(t)$ for all $t$
    \\ 
    $\gamma_{\theta|\tilde{\Theta}}^{-1}(t)$ & $\min\{t' \in \mathbb{N} ~|~ |s_{\theta|\tilde{\Theta}}(t') - S_{\theta|\tilde{\Theta}}| \leq t \}$ (exists due to Assumption (A1))\\
    $\bar{\gamma}_{\tilde{\Theta}}^{-1}(t)$ & Minimal $\gamma_{\theta|\tilde{\Theta}}^{-1}(t) = \min\{t' \in \mathbb{N} ~|~ |s_{\theta|\tilde{\Theta}}(t') - S_{\theta|\tilde{\Theta}}| \leq t \}$ over all $\theta\in \tilde{\Theta}$ (exists due to Assumption (A1))\\
    $S_{(l)|\tilde{\Theta}}$,  $\Delta_{(l)|\tilde{\Theta}}$   & $l$-th order statistic of $\{S_{\theta|\tilde{\Theta}}\}_{\theta\in \tilde{\Theta}}$ for $l\in\{1,2,\ldots,|\tilde{\Theta}|\}$ and its gap $\Delta_{(l)|\tilde{\Theta}} = S_{\theta^{\ast}|\tilde{\Theta}} - S_{(l)|\tilde{\Theta}}$ \\
    \hline
	\multicolumn{2}{c}{\textbf{Algorithm related}} \\
	\hline
	\textsc{CSE} & The generic \emph{combinatorial successive elimination} algorithm (Algorithm \ref{alg:Framework}) \\
    $f_{\rho}$ & Function from $[k]$ to $[k]$, $f_{\rho}(x) = \lfloor \nicefrac{x}{2^{\rho}} \rfloor$ for a $\rho \in (0, \log_2(k)]$ specifying the nature of the configuration elimination strategy \\
    $n_0$ & Variable to specify the initial sample size, $n_0 \in (N_{\alpha,\delta}, 2N_{\alpha,\delta}]$\\
    $n_e$ & $n_e = \lceil \nicefrac{n_0}{2^{e}} \rceil + 1$ number of considered configurations in epoch $e \in [E]$\\
    $E$ & Number of epochs $E = \lceil \log_2(\nicefrac{n_0}{n_0-N_{\alpha, \delta}}) \rceil$\\
    $C_1$ & Internal constant variable for AC-Band, $C_1 = \log_{1 + \frac{k-1}{E}}\left(2\right)$\\
    $C_2$ & Internal constant variable for AC-Band, $C_2 = 1 + \log_{1 + \frac{k-1}{E}}\left(n_0 + 4 \frac{n_0}{n_0-N_{\alpha,\delta}} \right)$\\
    $C_3$ & Internal constant variable for AC-Band, $C_3 = \left\lceil \log_{1+\frac{k-1}{E}}(k) \right\rceil$ \\
    $c_e$ & Internal variable for AC-Band in epoch $e\in[E]$: $c_e = \frac{(C_1 E - (2^{E}-1)(2C_1-C_2-C_3))2^{e}}{2^{E}(-eC_1+C_2+C_3)}$\\
    $B_e$ & Budget for call of CSE in epoch $e\in[E]$: $B_e = \nicefrac{B}{c_e}$ for an overall budget of $B$ for AC-Band\\
    $R^{\rho_e,k,n_e}$ & Number of rounds in CSE in epoch $e \in [E]$ \\
    $P^{\rho_e,k,n_e}_{r}$ & Number of partitions in CSE in epoch $e \in [E]$ and round $r \in [R^{\rho_e,k,n_e}]$ \\
    $\mathbb{A}_{r}(\theta)$ & The partition in round $r$ of CSE containing $\theta$ (emptyset otherwise)\\
    $b_r$ & Budget used in round $r$ of CSE for a partition \\
\end{tabularx}
\normalsize
\clearpage

\newpage
\section{Extension of CSE for Finding $\epsilon$-best Arms} \label{sec:CSE}
\noindent
\subsection{Adjustments of Algorithm Parameters}
\noindent
We modify the definition of the function f in \cite{brandt2022finding} and thus define $f_{\rho}(x) = \lfloor \frac{x}{2^{\rho}} \rfloor$ for a $\rho \in (0, \log_2(k)]$ and obtain for
\begin{itemize}
    \item $\rho=1$ : Combinatorial Succesive Halving (CSH) with $f_{\rho}(k) = \lfloor \frac{k}{2} \rfloor$ 
    \item $\rho = \log_2(k)$ : Combinatorial Succesive Winner Stays (CSWS) with $f_{\rho}(k) = 1$
    \item $\rho \rightarrow 0$ : Combinatorial Succesive Rejects (CSR) with $f_{\rho}(k) = k-1$.
\end{itemize}
Note that for a fixed subset size $k$ and  for a fixed $\rho \in (0,\log_2(k)]$, one can derive the number of rounds and number of partitions per round as follows. The number of rounds in the first while loop of Algorithm \ref{alg:Framework} can be computed as
\begin{align*}
R^{\rho,k,n}_{1} = \{\min x ~:~ g^{(\circ x)}(n)\leq k\} ~\text{for}~ g(x) = f_{\rho}\left(k\right) \cdot \left\lfloor\frac{x}{k} \right\rfloor + x\mod k,
\end{align*}
where $g^{(\circ x)}$ denotes the $x$-times composition of $g.$ 
Furthermore, it holds that $R_1^{\rho,k,n} \leq \lceil \log_{\frac{k}{f_\rho(k)}}(n) \rceil$, which we use for the sake of ease to estimate $R_1^{\rho,k,n}$ in the following in our theoretical analyses.
In the second while-loop of Algorithm \ref{alg:Framework}, we have only $k$ arms left, thus we can calculate the number of rounds as
\begin{align*}
    R^{\rho,k}_{2} = \left\{\min x ~:~f_{\rho}^{(\circ x)}(k)\leq 1\right\}. 
\end{align*}
For all $k \in \mathbb{N}$, $R_2^{\rho, k} \leq \lceil \log_{\frac{k}{f_{\rho}(k)}}{(k)} \rceil$.
The overall number of rounds is therefore $R^{\rho,k,n} = R^{\rho,k,n}_{1} + R^{\rho,k}_{2} \leq \lceil \log_{\frac{k}{f_{\rho}(k)}}{(n)} \rceil + \lceil \log_{\frac{k}{f_{\rho}(k)}}{(k)} \rceil$. 
The number of partitions per round $r \in \{1, \dots, R^{f_{\rho},k,n}\}$  is given by
\begin{align*}
    P^{\rho,k,n}_{r}  =  \left\lfloor \frac{n}{k} \left(\frac{f_{\rho}(k)}{k} \right)^{r-1} \right\rfloor   .
\end{align*}
Thus, we can automatically compute the number of rounds and partitions in Algorithm \ref{alg:Framework} if the parameter $\rho$ for the discard function $f_{\rho}$, the subset size $k$, and the number of arms $n$ are given. In contrast to \cite{brandt2022finding}, we do not need to estimate and specify $R$ and $\{P_r\}$ by hand before we can run Algorithm \ref{alg:Framework}.
\subsection{Theoretical Guarantees}
The first step for extending Algorithm \ref{alg:Framework} to the context of AC is to extend the theoretical guarantees to the context of finding an $\epsilon$-best arm of a finite set of arms (configurations in our terminology) and to derive a sufficient budget for CSE that is necessary to find such an $\epsilon$-best arm, provided that an $\epsilon$-best arm is defined as follows.
\begin{Def}
    Assume we have $n$ arms (configurations) $\theta_1,\ldots,\theta_n.$ Let $\theta^{\ast}$ be such that
    \begin{align*}
        \forall \theta^{\ast} \in \Theta_{[2,k]}(\theta^{\ast}) ~ S_{\theta^{\ast}|\tilde{\Theta}} \geq S_{(1)|\tilde{\Theta}} - \epsilon.
    \end{align*}
    We call $\theta^{\ast}$ an $\epsilon$-best arm.
\end{Def}
\noindent
    Note that we only have a finite set of $n$ arms (configurations) $\theta_1,\ldots,\theta_n,$ which we identify simply by their indices $1,\ldots,n$ in the following.
    Further assume in the following that an $\epsilon$-best arm exists. We identify this arm by $i^{\ast},$ and write $\Delta_{i|\tilde{\Theta}} = S_{(1)|\tilde{\Theta}}-S_{i|\tilde{\Theta}}$. In addition, let $\mathbb{A}_r$ be the set of active arms in round $r \in [R^{\rho,k,n}]$, which will then be partitioned into $\mathbb{A}_{r,1}, \dots, \mathbb{A}_{r, P_r^{\rho,k,n}}$.
\begin{Thm}[Sufficient budget for CSE for finding an $\epsilon$-best arm] \label{Thm_generalSufficient_eps_max}
    Using Algorithm \ref{alg:Framework} with $n$ arms, a discard function $f_{\rho}$, and a subset size $k$ returns an arm $i^{\ast},$ which is an $\epsilon$-best arm if the budget $B$ is larger than 
	\begin{align*}
	    z&\left(\rho,k,n,\epsilon \right)
    	\coloneqq R^{\rho,k,n} \max_{r\in [R^{\rho,k,n}]} P^{\rho,k,n}_{r} \cdot \max_{r\in [R^{\rho,k,n}]} \left(1+ \bar{\gamma}_{\mathbb{A}_{r}(i^{\ast})}^{-1}\left(\max\left\{ \frac{\epsilon}{2},\max_{r \in [R^{\rho,k,n}]}\frac{\Delta_{(f_{\rho}(|\mathbb{A}_{r}(i^{\ast})|)+1)|\mathbb{A}_{r}(i^{\ast})}}{2}\right\}\right) \right),	\end{align*}	
    	where $\mathbb{A}_{r}(i^{\ast})$ denotes the partition in round $r$ of CSE containing $i^{\ast}$ (or $\theta^{\ast}$).
\end{Thm}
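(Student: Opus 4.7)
The plan is to fix an $\epsilon$-best arm $i^*$ (whose existence is assumed in the extension setup) and to show that $i^*$ is the arm returned by CSE, from which the conclusion follows immediately. The first step is to translate the total budget $B$ into a per-partition, per-round budget. Since CSE allocates $B$ equally across the $R^{\rho,k,n}$ rounds and then, within each round $r$, equally across the $P_r^{\rho,k,n}$ partitions, each partition in round $r$ receives $b_r = \lfloor B/(R^{\rho,k,n} \cdot P_r^{\rho,k,n}) \rfloor$ instances. Bounding $P_r^{\rho,k,n} \leq \max_r P_r^{\rho,k,n}$ and plugging in the lower bound on $B$ from the theorem statement, I obtain $b_r \geq 1 + \bar{\gamma}_{\mathbb{A}_r(i^*)}^{-1}(t)$, where $t = \max\{\epsilon/2, \max_r \Delta_{(f_\rho(|\mathbb{A}_r(i^*)|)+1)|\mathbb{A}_r(i^*)}/2\}$. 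By the definition of $\bar{\gamma}^{-1}$, which is well-defined under Assumption (A1), this choice of $b_r$ guarantees $|s_{\theta|\mathbb{A}_r(i^*)}(b_r) - S_{\theta|\mathbb{A}_r(i^*)}| \leq t$ simultaneously for every $\theta \in \mathbb{A}_r(i^*)$.

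With this uniform concentration in hand, I proceed by induction on $r \in \{1,\ldots,R^{\rho,k,n}\}$ to show that $i^*$ remains in the active set after round $r$. Write $m_r = f_\rho(|\mathbb{A}_r(i^*)|)$; the elimination in round $r$ keeps the top $m_r$ arms of $\mathbb{A}_r(i^*)$ by empirical statistic. I split the induction step into two cases based on the local gap $\Delta_{(m_r+1)|\mathbb{A}_r(i^*)}$. In Case A (gap $\geq \epsilon$, so $t = \Delta_{(m_r+1)|\mathbb{A}_r(i^*)}/2$), the $\epsilon$-best property yields $S_{i^*|\mathbb{A}_r(i^*)} \geq S_{(1)|\mathbb{A}_r(i^*)} - \epsilon \geq S_{(m_r+1)|\mathbb{A}_r(i^*)}$, placing $i^*$ in the true top $m_r$; a concentration-based pigeonhole argument, analogous to the classical CSE correctness proof (any arm with true statistic below $S_{(m_r+1)|\mathbb{A}_r(i^*)}$ has empirical value at most $S_{(m_r+1)|\mathbb{A}_r(i^*)} + t = S_{(1)|\mathbb{A}_r(i^*)} - t$, which is no larger than $\hat{s}_{i^*}$), then shows that $i^*$ also lies in the empirical top $m_r$. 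In Case B (gap $< \epsilon$, so $t = \epsilon/2$), any arm that empirically beats $i^*$ has true statistic at least $S_{i^*|\mathbb{A}_r(i^*)} - 2t = S_{i^*|\mathbb{A}_r(i^*)} - \epsilon$, hence is itself locally $\epsilon$-best in $\mathbb{A}_r(i^*)$. The invariant to maintain across rounds is that the tracked arm is globally $\epsilon$-best, possibly updating the tracker to the displacer. After the final round, a single arm remains and it is $\epsilon$-best.

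The hard part is the Case B induction step: the elimination may remove $i^*$ in favor of a merely locally $\epsilon$-best arm, and I need to ensure the replacement is still globally $\epsilon$-best, i.e., $S_{\theta|\tilde{\Theta}} \geq S_{(1)|\tilde{\Theta}} - \epsilon$ for every $\tilde{\Theta} \in \Theta_{[2,k]}(\theta)$ and not merely on the partitions actually probed by CSE. My plan is to couple the concentration bounds across subsets by invoking Assumption (A1) for every $\tilde{\Theta} \in \Theta_{[2,k]}(\text{displacer})$ in order to show that the displacer's closeness to $i^*$ in $\mathbb{A}_r(i^*)$ propagates to every such subset. A secondary, routine technical concern is absorbing the flooring in $b_r = \lfloor B/(R^{\rho,k,n} \cdot P_r^{\rho,k,n}) \rfloor$ into the $+1$ appearing in the sufficient budget expression, and verifying that the bound $P_r^{\rho,k,n} \leq \max_r P_r^{\rho,k,n}$ does not introduce enough slackness to invalidate the per-round concentration.
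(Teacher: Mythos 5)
Your overall scaffolding (splitting $B$ over rounds and partitions, using the inverse convergence function to get per-round concentration, then an induction over rounds with a case split on whether the relevant gap exceeds $\epsilon$) is the same as the paper's proof, and the flooring/$+1$ bookkeeping you flag is indeed routine and handled the same way there. The genuine gap is exactly the step you yourself single out as "the hard part": maintaining the invariant that the tracked arm is \emph{globally} $\epsilon$-best, i.e., $S_{\theta|\tilde{\Theta}} \geq S_{(1)|\tilde{\Theta}} - \epsilon$ for \emph{every} $\tilde{\Theta} \in \Theta_{[2,k]}(\theta)$, and proving the Case B propagation via Assumption (A1). This cannot work: (A1) only asserts that each limit $S_{\theta|\tilde{\Theta}}$ exists, and the model deliberately allows the statistics to depend on the co-run set (interaction effects), so closeness of the displacer to $i^{\ast}$ on the single partition $\mathbb{A}_r(i^{\ast})$ carries no information whatsoever about $S_{\text{displacer}|\tilde{\Theta}}$ on subsets $\tilde{\Theta}$ that were never evaluated. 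The paper does not attempt this lift inside this theorem. Its Case 2/Case 3 argument only establishes the \emph{local} property along CSE's trajectory (the surviving arm satisfies $S_{i|\mathbb{A}_{r1}} > S_{1|\mathbb{A}_{r1}} - \epsilon$ on the partitions actually encountered, and the paper simply relabels that arm as $i^{\ast}$); the local-to-global conversion is deferred to Assumption (A3) and is applied, probabilistically, only in the proof of Theorem \ref{Thm:ACBandSufficientBudget} for AC-Band, which is precisely why that guarantee carries the extra $1-\psi(\cdot)$ term. So you should weaken your induction invariant to the local notion; with the global invariant your proof cannot be completed from the stated assumptions, and a deterministic global conclusion would in fact contradict the need for (A3).

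A secondary issue: your Case A pigeonhole chain compares $\hat{s}_{i^{\ast}}$ against $S_{(1)|\mathbb{A}_r(i^{\ast})} - t$, but concentration only gives $\hat{s}_{i^{\ast}} \geq S_{i^{\ast}|\mathbb{A}_r(i^{\ast})} - t \geq S_{(1)|\mathbb{A}_r(i^{\ast})} - \epsilon - t$, and $i^{\ast}$ need not be the actual maximizer on that partition, so the inequality "which is no larger than $\hat{s}_{i^{\ast}}$" does not follow as written. The argument should instead conclude only that arms $j$ with $S_{j|\mathbb{A}_r(i^{\ast})} \leq S_{i^{\ast}|\mathbb{A}_r(i^{\ast})} - 2t$ are eliminated before $i^{\ast}$ (this is how the paper sets up its $\tau_i$ and the implication that elimination of $i^{\ast}$ forces $b_r < \tau_{f_{\rho}(|\mathbb{A}_{r}(i^{\ast})|)+1}$), with the case split then ensuring that whenever $i^{\ast}$ is nevertheless displaced, the displacer is within $\epsilon$ on the partitions seen.
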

\begin{proof}[Proof of Theorem \ref{Thm_generalSufficient_eps_max}]   

    \underline{Step 1:} Algorithm \ref{alg:Framework} never requires a number of problem instances that exceeds the budget $B$:
    \begin{align*}
        \sum_{r=1}^{R^{\rho,k,n}} P_{r}^{\rho,k,n} \cdot b_{r} 
        &= \sum_{r=1}^{R^{\rho,k,n}} P_{r}^{\rho,k,n} \cdot \left\lfloor \frac{B}{P_{r}^{\rho,k,n}\cdot R^{\rho,k,n}} \right\rfloor 
        \leq \sum_{r=1}^{R^{\rho,k,n}} \frac{B}{R^{\rho,k,n}} = B.
    \end{align*}
    \underline{Step 2:} Assume in the following $B \geq z(\rho,k,n,\epsilon)$, then we have for each round $r \in [R^{\rho,k,n}]$ that
    \begin{align*}
        b_r &\geq \frac{B}{P_{r}^{\rho,k,n} \cdot R^{\rho,k,n}} -1 \\
        &\geq \max_{r \in [R^{\rho,k,n}]} \left( 1+ \bar{\gamma}^{-1}_{\mathbb{A}_r(i^{\ast})}\left(\max\left\{ \frac{\epsilon}{2}, \max_{r\in [R^{\rho,k,n}]} \frac{\Delta_{(f_{\rho}(|\mathbb{A}_{r}(i^{\ast})|)+1)|\mathbb{A}_{r}(i^{\ast})}}{2}\right\}\right) \right) - 1 \\
        &= \max_{r \in [R^{\rho,k,n}]}\bar{\gamma}^{-1}_{\mathbb{A}_r(i^{\ast})}\left(\max\left\{ \frac{\epsilon}{2}, \max_{r\in [R^{\rho,k,n}]} \frac{\Delta_{(f_{\rho}(|\mathbb{A}_{r}(i^{\ast})|)+1)|\mathbb{A}_{r}(i^{\ast})}}{2}\right\}\right).
    \end{align*} 
    We can assume in the following w.l.o.g.\ $i^{\ast} = 1$ and $\mathbb{A}_{r}(1) = \mathbb{A}_{r1}$ by relabeling the arms (configurations) and query sets (sets of configurations). 
    Now, we first show, that $s_{1|\mathbb{A}_{r1}}(t) - s_{i|\mathbb{A}_{r1}}(t) \geq 0$ for all rounds $r\in [R^{\rho,k,n}]$, all arms $i \in \mathbb{A}_{r1}$ and all $t\geq \tau_{i} := \max_{r\in [R^{\rho,k,n}]} \bar{\gamma}^{-1}_{\mathbb{A}_{r1}}\left(\max_{r\in [R^{\rho,k,n}]} \frac{\Delta_{i|\mathbb{A}_{r1}}}{2} \right)$. 
    Define  $\gamma_{\theta|\tilde{\Theta}}$ as the 
    pointwise minimal function such that $|s_{\theta|\tilde{\Theta}}(t) - S_{\theta|\tilde{\Theta}}| \leq \gamma_{\theta|\tilde{\Theta}}(t)$ for all $t$ and $\gamma_{\tilde{\Theta}} = \max_{\theta \in \tilde{\Theta}} \gamma_{\theta|\tilde{\Theta}}.$
    Thus, $\tau_i \geq \bar{\gamma}^{-1}_{\mathbb{A}_{r1}}\left(\frac{\Delta_{i|\mathbb{A}_{r1}}}{2} \right)$ for all $r \in [R^{\rho,k,n}]$, and according to the definition of $\gamma_{\tilde{\Theta}}$ we have
    \begin{align*}
        |s_{i|\mathbb{A}_{r1}}(t) - S_{i|\mathbb{A}_{r1}}| 
        \leq \gamma_{\mathbb{A}_{r1}}(t) 
        \leq \frac{\Delta_{i|\mathbb{A}_{r1}}}{2} 
        ~~\text{ for } t\geq \tau_i.
    \end{align*}
    Thus, for all $t\geq \tau_i$,
    \begin{align*}
        s_{1|\mathbb{A}_{r1}}(t) - s_{i|\mathbb{A}_{r1}}(t) 
        &= s_{1|\mathbb{A}_{r1}}(t) - S_{1|\mathbb{A}_{r1}} + S_{1|\mathbb{A}_{r1}} - S_{i|\mathbb{A}_{r1}} + S_{i|\mathbb{A}_{r1}} - s_{i|\mathbb{A}_{r1}}(t)\\
        &= s_{1|\mathbb{A}_{r1}}(t) - S_{1|\mathbb{A}_{r1}} - (s_{i|\mathbb{A}_{r1}}(t) - S_{i|\mathbb{A}_{r1}}) + S_{1|\mathbb{A}_{r1}} - S_{i|\mathbb{A}_{r1}} \\
        &\geq -2\gamma_{\mathbb{A}_{r1}}(t) + S_{1|\mathbb{A}_{r1}} - S_{i|\mathbb{A}_{r1}} \\
        &\geq -2 \frac{S_{1|\mathbb{A}_{r1}} - S_{i|\mathbb{A}_{r1}}}{2} + S_{1|\mathbb{A}_{r1}} - S_{i|\mathbb{A}_{r1}}  = 0.
    \end{align*}
    In this scenario, arm $i$ will be eliminated before arm $1$, since the $f_{\rho}(|\mathbb{A}_{r}|)$ arms with the lowest statistic $s$ are discarded in each round $r \in [R^{\rho,k,n}]$.
    
    Now consider a round $r \in [R^{\rho,k,n}]$ and assume that by reordering the arms, w.l.o.g., $S_{1|\mathbb{A}_{r1}} \geq S_{2|\mathbb{A}_{r1}} \geq \dots \geq S_{|\mathbb{A}_{r1}||\mathbb{A}_{r1}}$. Since $S_{i|\mathbb{A}_{r1}}$ is non-increasing in $i$, the $\tau_i$ are non-increasing in $i$: $\tau_2 \geq \tau_3 \geq \dots$. Thus,
    \begin{equation} \label{eq:tau}
        t \geq \tau_i \Rightarrow s_{1|\mathbb{A}_{r1}}(t) \geq s_{i|\mathbb{A}_{r1}}(t).
    \end{equation}
    Assume now that $1 \in \mathbb{A}_{r} \wedge 1 \notin \mathbb{A}_{r+1}$
    \begin{align*}
        &\Rightarrow \sum_{i \in \mathbb{A}_{r1}} \id \{ s_{i|\mathbb{A}_{r1}}(b_r) > s_{1|\mathbb{A}_{r1}}(b_r) \} > f_{\rho}(|\mathbb{A}_{r1}|) \\
        &\Rightarrow \sum_{i \in \mathbb{A}_{r1}} \id \{ b_r < \tau_i \} > f_{\rho}(|\mathbb{A}_{r1}|) \\
        &\Rightarrow b_r < \tau_{f_{\rho}(|\mathbb{A}_{r1}|) + 1}.
    \end{align*}
    This implies
    \begin{equation} \label{eq:brbigenough}
        1 \in \mathbb{A}_{r} \wedge b_r \geq \tau_{f_{\rho}(|\mathbb{A}_{r1}|) + 1} \Rightarrow 1 \in \mathbb{A}_{r+1}.
    \end{equation}
    Recall that $b_r \geq \max_{r \in [R^{\rho,k,n}]}\bar{\gamma}^{-1}_{\mathbb{A}_r(i^{\ast})}\left(\max\left\{ \frac{\epsilon}{2}, \max_{r\in [R^{\rho,k,n}]} \frac{\Delta_{(f_{\rho}(|\mathbb{A}_{r}(i^{\ast})|)+1)|\mathbb{A}_{r}(i^{\ast})}}{2}\right\}\right)$ and $\tau_{f(|\mathbb{A}_{r1}|)+1} = \max_{r\in [R^{\rho,k,n}]} \bar{\gamma}^{-1}_{\mathbb{A}_{r1}}\left(\max_{r\in [R^{\rho,k,n}]} \frac{\Delta_{f_{\rho}(|\mathbb{A}_{r1}|)+1|\mathbb{A}_{r1}}}{2} \right)$.\\
    \begin{itemize} [leftmargin=1.3cm]
        \item[\underline{Case 1:}] $\max_{r\in [R^{\rho,k,n}]} \frac{\Delta_{f_{\rho}(|\mathbb{A}_{r1}|)+1|\mathbb{A}_{r1}}}{2} \geq \frac{\epsilon}{2}$ and $1 \in \mathbb{A}_r$.\\
        We have $b_r \geq \max_{r\in [R^{\rho,k,n}]} \bar{\gamma}^{-1}_{\mathbb{A}_{r1}}\left(\max_{r\in[R^{\rho,k,n}]} \frac{\Delta_{f_{\rho}(|\mathbb{A}_{r1}|)+1|\mathbb{A}_{r1}}}{2}\right) = \tau_{f_{\rho}(|\mathbb{A}_{r1}|) + 1}$. By  \eqref{eq:brbigenough} we have that $1\in \mathbb{A}_{r+1}$.
        \item[\underline{Case 2:}] $\max_{r\in [R^{\rho,k,n}]} \frac{\Delta_{f_{\rho}(|\mathbb{A}_{r1}|)+1|\mathbb{A}_{r1}}}{2} < \frac{\epsilon}{2}$ and $1 \in \mathbb{A}_r$.\\
        We have $b_r \geq \max_{r\in [R^{\rho,k,n}]} \bar{\gamma}^{-1}_{\mathbb{A}_{r1}}\left(\frac{\epsilon}{2}\right)$ and\\
        $\max_{r\in [R^{\rho,k,n}]} \bar{\gamma}^{-1}_{\mathbb{A}_{r1}}\left(\frac{\epsilon}{2}\right) < \max_{r\in [R^{\rho,k,n}]} \bar{\gamma}^{-1}_{\mathbb{A}_{r1}}\left(\max_{r\in[R^{\rho,k,n}]} \frac{\Delta_{f_{\rho}(|\mathbb{A}_{r1}|)+1|\mathbb{A}_{r1}}}{2}\right) = \tau_{f_{\rho}(|\mathbb{A}_{r1}|) +1}$.\\
        If $1 \in \mathbb{A}_{r+1}$, Algorithm \ref{alg:Framework} continues in round $r+1$ with case 1 or case 2. Now assume $1 \notin \mathbb{A}_{r+1}$ and let
        \begin{equation*}
            p \coloneqq \min \left\{i \in \mathbb{A}_{r1} ~:~ \max_{r\in[R^{\rho,k,n}]}\frac{S_{1|\mathbb{A}_{r1}} - S_{i|\mathbb{A}_{r1}}}{2} \geq \frac{\epsilon}{2} \right\}.
        \end{equation*}
        By the assumption of the case 2, we have $p > f_{\rho}(|\mathbb{A}_{r1}|) +1$ and 
        \begin{align*}
            b_r \geq \max_{r \in [R^{\rho,k,n}]} \bar{\gamma}^{-1}_{\mathbb{A}_{r1}}\left(\frac{\epsilon}{2}\right) 
            \geq \max_{r\in [R^{\rho,k,n}]} \bar{\gamma}^{-1}_{\mathbb{A}_{r1}}\left(\max_{r\in[R^{\rho,k,n}]} \frac{\Delta_{i|\mathbb{A}_{r1}}}{2}\right) 
            = \tau_{i} \text{ for all } i\geq p.
        \end{align*}
        Thus, by \eqref{eq:tau} we have $s_{1|\mathbb{A}_{r1}}(b_r) \geq s_{i|\mathbb{A}_{r1}}(b_r)$ for $i\geq p$, so all arms $i\geq p$ are eliminated before arm $1$. \\
        Moreover, we have 
        \begin{align*}
            \max_{i\in \mathbb{A}_{r+1}} S_{i|\mathbb{A}_{r1}} 
            \geq \max_{i<p} S_{i|\mathbb{A}_{r1}} 
            \geq S_{1|\mathbb{A}_{r1}} - \epsilon
        \end{align*}
        since 
        \begin{align*}
            \frac{S_{1|\mathbb{A}_{r1}} - S_{i|\mathbb{A}_{r1}}}{2}
            < \max_{r\in[R^{\rho,k,n}]}\frac{S_{1|\mathbb{A}_{r1}} - S_{i|\mathbb{A}_{r1}}}{2}
            < \frac{\epsilon}{2}.
        \end{align*}
        In addition, by definition of $p$ it holds even for all $r \in [R^{\rho,k,n}]$ that $S_{i|\mathbb{A}_{r1}} > S_{1|\mathbb{A}_{r1}} -\epsilon$.
        Thus, although CSE might discard $i^{\ast},$ it is ensured that an $\eps$-best arm is still active. Consequently, we relabel the arms such that the still active $\eps$-best arm is now denoted by $i^{\ast}.$ 
        \item[\underline{Case 3:}] $1\notin \mathbb{A}_{r}$.\\
        Since $1 \in \mathbb{A}_0$, there was a round $\tilde{r} < r$ such that $1\in\mathbb{A}_{\tilde{r}}$ and $1 \notin \mathbb{A}_{\tilde{r}+1}$. For this round $\tilde{r}$ only case 2 was possible, otherwise $1 \in \mathbb{A}_{\tilde{r}+1}$. Since case 2 was true and $1\notin \mathbb{A}_{\tilde{r}+1}$, we have 
        \begin{align*}
            \max_{i \in \mathbb{A}_{\tilde{r}+1}} S_{i|\mathbb{A}_{\tilde{r}}}
            \geq S_{1|\mathbb{A}_{\tilde{r}}} - \epsilon
        \end{align*}
        and also for all other rounds $r \in [R^{\rho,k,n}]$ that $S_{i|\mathbb{A}_{r1}} > S_{1|\mathbb{A}_{r1}} - \epsilon$.
    \end{itemize}
\end{proof}
\section{Guarantees for AC-Band} \label{sec:ACBand}
\subsection{Number of Epochs}
Let $N_{\alpha, \delta} = \lceil \frac{\ln(\delta)}{\ln(1-\alpha)}\rceil$ be the number of sampled configurations necessary to ensure that an $\epsilon$-best configuration is contained in the samples with probability at least $1-\delta$ provided the proportion of $\epsilon$-best configurations in the configuration space is $\alpha$ (see Assumption (A2)). In each epoch $e$, we consider in total $n_e = \lceil \frac{n_0}{2^{e}}\rceil+1$ configurations and keep the winner of the last epoch, such that we have to sample at least $\lceil \frac{n_0}{2^{e}} \rceil$ new configurations in epoch $e$. To be precise, we sample in one run of AC-Band,  the following number of configurations in total:
\begin{align*}
    \sum_{e=1}^{E} \left\lceil \frac{n_0}{2^{e}} \right\rceil &\geq  \sum_{e=1}^{E} \frac{n_0}{2^{e}} \\
    &= n_0  \sum_{e=1}^{E} \left( \frac{1}{2} \right)^{e}\\
    &= n_0  \left(\sum_{e=0}^{E} \left( \frac{1}{2} \right)^{e} - 1 \right)\\
    &= 2n_0\left(1- \frac{1}{2^{E+1}}\right) -n_0 \\
    &= n_0 - \frac{n_0}{2^{E}}. 
\end{align*}
This value must be greater than $N_{\alpha, \delta}$ to guarantee that we have an $\epsilon$-best configuration contained in the sampled configurations with probability at least $1-\delta$. Rearranging the above inequality leads to
\begin{align*}
    2^{E} &\geq \frac{n_0}{n_0-N_{\alpha, \delta}}\\
    \Rightarrow ~ E &\geq \log_2\left(\frac{n_0}{n_0-N_{\alpha,\delta}} \right),
\end{align*}
where we consider that $n_0 > N_{\alpha, \delta}$. Since we want our algorithm to finish as fast as possible, $E = \left\lceil \log_2\left(\frac{n_0}{n_0-N_{\alpha,\delta}}\right) \right\rceil$ is a suitable choice.
In addition, we need to guarantee that the number of epochs is well defined, thus we must ensure that
\begin{align*}
    E &\geq \log_2\left(\frac{n_0}{n_0-N_{\alpha,\delta}}\right) \overset{!}{\geq} 1 \\
    &\Leftrightarrow ~~ \frac{n_0}{n_0- N_{\alpha,\delta}} \geq 2 \\
    &\Leftrightarrow ~~ n_0 \leq 2 N_{\alpha, \delta}.
\end{align*}
Putting everything together, we get the condition that $n_0 \in \Big( N_{\alpha, \delta}, 2 N_{\alpha, \delta} \Big]$.
\subsection{Sufficient Budget}
\begin{Le} \label{Le_geom_series}
    For $N\in \mathbb{N}$ and any $a,b,c\in \R$ it holds that 
    \begin{align*}
        \sum_{i=1}^{N} \frac{-ia + b + c}{2^{i}} = \frac{aN-(2^{N}-1)(2a-b-c)}{2^{N}}.
    \end{align*}
\end{Le}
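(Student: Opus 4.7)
The plan is to verify the identity by direct computation, reducing everything to two elementary geometric-type sums. By linearity in the summation index, write
\begin{align*}
\sum_{i=1}^{N} \frac{-ia+b+c}{2^{i}} = -a\sum_{i=1}^{N}\frac{i}{2^{i}} + (b+c)\sum_{i=1}^{N}\frac{1}{2^{i}}.
\end{align*}
For the second sum I would use the standard finite geometric series formula, giving $\sum_{i=1}^{N}2^{-i}=(2^{N}-1)/2^{N}$. For the first, the classical identity $\sum_{i=1}^{N} i/2^{i}=(2^{N+1}-N-2)/2^{N}$ can be obtained either by differentiating the geometric series $\sum_{i=0}^{N} x^{i}$ at $x=1/2$ or, more elementarily, by the $S-\tfrac{1}{2}S$ telescoping trick applied to $S=\sum_{i=1}^{N} i/2^{i}$.

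Substituting these two closed forms and putting everything over the common denominator $2^{N}$, I would collect the terms of the numerator as
\begin{align*}
-a(2^{N+1}-N-2) + (b+c)(2^{N}-1) = aN + 2a(1-2^{N}) + (b+c)(2^{N}-1),
\end{align*}
and then factor $(2^{N}-1)$ out of the last two groups to obtain $aN + (2^{N}-1)(b+c-2a)$, which is exactly $aN-(2^{N}-1)(2a-b-c)$, matching the claimed right-hand side.

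As an alternative (and perhaps cleaner) route, I would consider a short induction on $N$: for $N=1$ both sides equal $(-a+b+c)/2$, and the inductive step requires only adding $(-(N+1)a+b+c)/2^{N+1}$ to the right-hand side at $N$ and simplifying, which is a purely mechanical algebraic check. There is no real obstacle here — the lemma is a routine algebraic identity whose only subtlety is careful bookkeeping of signs and of the factor $(2^{N}-1)$ in the numerator; I would double-check the $N=1$ and $N=2$ cases at the end as a sanity test against off-by-one errors.
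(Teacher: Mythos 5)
Your proof is correct and follows essentially the same route as the paper: split the sum by linearity into $-a\sum_{i=1}^{N} i/2^{i}$ and $(b+c)\sum_{i=1}^{N} 2^{-i}$, insert the closed-form geometric and arithmetic-geometric sums, and simplify over the common denominator $2^{N}$. The only cosmetic difference is that you quote the identity $\sum_{i=1}^{N} i/2^{i} = (2^{N+1}-N-2)/2^{N}$ directly, whereas the paper writes out the raw closed-form formula before simplifying; the algebra and conclusion are identical.
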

\begin{proof}
    \begin{align*}
        \sum_{i=1}^{N} \frac{-ia + b + c}{2^{i}} &= -a\cdot\sum_{i=1}^{N} \frac{i}{2^{i}} + (b+c)\cdot \sum_{i=1}^{N} \frac{1}{2^{i}} \\
        &= -a\cdot\sum_{i=0}^{N} \frac{i}{2^{i}} + (b+c)\cdot \left(\sum_{i=0}^{N} \frac{1}{2^{i}} -1 \right)\\ 
        &= -a \frac{\frac{N}{2^{N+2}} - \frac{N+1}{2^{N+1}}+\frac{1}{2}}{\left(\frac{1}{2}-1\right)^2} + (b+c)\left(\frac{1-\frac{1}{2^{N+1}}}{1-\frac{1}{2}} - 1\right) \\
        &= \frac{-aN}{2^N}+\frac{a(N+1)}{2^{N-1}}-2a + (b+c)\left(1-\frac{1}{2^N}\right)\\
        &= \frac{-aN+a(2N+2)-a2^{N+1}}{2^N}+\frac{(b+c)(2^N-1)}{2^N} \\
        &= \frac{aN-(2^N-1)(2a-b-c)}{2^N},
    \end{align*}
    where the closed-form sum formulas of the geometric series are used in the third equality. 
\end{proof}
\allowdisplaybreaks
\begin{proof}[Proof of Thm. \ref{Thm:ACBandSufficientBudget}]
If the total budget $B$ for AC-Band is such that the epoch-wise budget for CSE is at least $z(\rho_e, k, n_e, \epsilon)$ for each epoch $e$ (see Theorem \ref{Thm_generalSufficient_eps_max}), AC-Band will return a configuration that is locally $\epsilon$-best. 
With the help of Assumption (A3), we can then infer the claim.

Thus, the sufficient budget to guarantee that AC-Band finds a local $\epsilon$-best configuration $i^\ast$ can be computed as
\begin{align*}
    &\sum_{e=1}^{E} z(\rho_e, k, n_e, \epsilon)\\
    &= \sum_{e=1}^{E} R^{\rho_e,k,n_e} \max_{r\in [R^{\rho_e,k,n_e}]} P^{\rho_e,k,n_e}_{r} \cdot \max_{r\in [R^{\rho_e,k,n_e}]} \left(1+ \bar{\gamma}_{\mathbb{A}_{r}(i^{\ast})}^{-1}\left(\max\left\{ \frac{\epsilon}{2},\max_{r \in [R^{\rho_e,k,n_e}]}\frac{\Delta_{(f_{\rho}(|\mathbb{A}_{r}(i^{\ast})|)+1)|\mathbb{A}_{r}(i^{\ast})}}{2}\right\}\right) \right) \\
    &= \sum_{e=1}^{E} (R_1^{\rho_e,k,n_e} +  R_2^{\rho_e,k}) \left\lfloor \frac{n_e}{k} \right\rfloor \cdot \max_{r\in [R^{\rho_e,k,n_e}]} \left(1+ \bar{\gamma}_{\mathbb{A}_{r}(i^{\ast})}^{-1}\left(\max\left\{ \frac{\epsilon}{2},\max_{r \in [R^{\rho_e,k,n_e}]}\frac{\Delta_{(f_{\rho}(|\mathbb{A}_{r}(i^{\ast})|)+1)|\mathbb{A}_{r}(i^{\ast})}}{2}\right\}\right) \right) \\
    &=: (*)
\end{align*}
Note that for $\rho_e = \log_2 \left(\frac{e+k-1}{e} \right)$, we have $f_{\rho_e}(x) = \left\lfloor \frac{x}{2^\rho} \right\rfloor = \left\lfloor \frac{xe}{e+k-1} \right\rfloor$. We can estimate $R_1^{\rho_e,k,n_e}$ now by
\begin{align*}
    R_1^{\rho_e,k,n_2} &\leq \left\lceil \log_{\frac{k}{f_{\rho_e}(k)}}\left( n_e\right) \right\rceil \\
    &= \left\lceil \log_{ 
     \frac{k}{\left\lfloor \frac{ke}{e+k-1}\right\rfloor}
     }\left( \left\lceil \frac{n_0}{2^{e}} \right\rceil + 1 \right) \right\rceil \\
    &\leq \left\lceil \log_{ 
     \frac{e+k-1}{e} 
     }\left( \left\lceil \frac{n_0}{2^{e}} \right\rceil + 1 \right) \right\rceil \\
    &= \left\lceil \frac{\log\left(\left\lceil \frac{n_0}{2^{e}} \right\rceil + 1 \right)}{\log\left( 1 + \frac{k-1}{e} \right)} \right\rceil 
    \\
    &\leq~ \left\lceil \frac{\log\left(\frac{n_0}{2^{e}} + 2 \right)}{\log\left( 1 + \frac{k-1}{E} \right)} \right\rceil \\
    &= \left\lceil \frac{\log\left(n_0 + 2^{e+1} \right) - \log(2^{e})}{\log\left( 1 + \frac{k-1}{E} \right)} \right\rceil\\ 
    &\leq \left\lceil \frac{\log\left(n_0 + 2^{E+1} \right) - e\log(2)}{\log\left( 1 + \frac{k-1}{E} \right)} \right\rceil\\
    &\leq  \left\lceil \frac{\log\left(n_0 + 2^{\log_2\left(\frac{n_0}{n_0-N_{\alpha,\delta}}\right)+2} \right) - e\log(2)}{\log\left( 1 + \frac{k-1}{E} \right)} \right\rceil\\
    &= \left\lceil \frac{\log\left(n_0 + 
    4\frac{n_0}{n_0-N_{\alpha,\delta}}\right)}{\log\left( 1 + \frac{k-1}{E} \right)} - e \cdot \frac{\log(2)}{\log\left( 1 + \frac{k-1}{E} \right)} \right\rceil\\
    &\leq \underbrace{1 + \log_{1 + \frac{k-1}{E}}\left(n_0 + 4 \frac{n_0}{n_0-N_{\alpha,\delta}} \right)}_{=:C_2} - e \cdot \underbrace{\log_{1 + \frac{k-1}{E}}\left(2\right)}_{=:C_1}.
\end{align*}
We can proceed analogously to get an upper bound for $R_2^{\rho_e,k}$:
\begin{align*}
    R_2^{\rho_e,k} &\leq \left\lceil \log_{\frac{k}{f_{\rho_e}(k)}}\left(k\right) \right\rceil 
    ~=~ \left\lceil \log_{ 
     \frac{k}{\left\lfloor \frac{ke}{e+k-1}\right\rfloor}
     }(k) \right\rceil \\
    &\leq \left\lceil \log_{\frac{e+k-1}{e}}(k) \right\rceil 
    ~=~ \left\lceil \log_{1+\frac{k-1}{e}}(k) \right\rceil \\
    &\leq \left\lceil \log_{1+\frac{k-1}{E}}(k) \right\rceil
    ~=:~ C_3 \leq C_2.
\end{align*}
In the next step we can put the above estimations together.
\begin{align*}
    (*) &\leq \sum_{e=1}^{E} \left\lfloor \left(\frac{n_0}{2^{e}} + 2\right)\frac{1}{k} \right\rfloor \cdot \left( -eC_1 + C_2 + C_3\right) \\
    &~~~~~ \cdot \max_{r\in [R^{\rho_e,k,n_e}]} \left(1+ \bar{\gamma}_{\mathbb{A}_{r}(i^{\ast})}^{-1}\left(\max\left\{ \frac{\epsilon}{2},\max_{r \in [R^{\rho_e,k,n_e}]}\frac{\Delta_{(f_{\rho}(|\mathbb{A}_{r}(i^{\ast})|)+1)|\mathbb{A}_{r}(i^{\ast})}}{2}\right\}\right) \right) \\
    &\leq \underbrace{\max_{e \in [E]} \max_{r\in [R^{\rho_e,k,n_e}]} \left(1+ \bar{\gamma}_{\mathbb{A}_{r}(i^{\ast})}^{-1}\left(\max\left\{ \frac{\epsilon}{2},\max_{r \in [R^{\rho_e,k,n_e}]}\frac{\Delta_{(f_{\rho}(|\mathbb{A}_{r}(i^{\ast})|)+1)|\mathbb{A}_{r}(i^{\ast})}}{2}\right\}\right) \right)}_{=:\bar{\gamma}^{-1}}\\
    &~~~~~ \cdot \left(\frac{n_0}{k} \sum_{e=1}^{E} \frac{-eC_1 + C_2 + C_3}{2^{e}} 
    - \frac{2C_1}{k} \sum_{e=1}^{E} e + \frac{2E(C_2 + C_3)}{k}
    \right) \\
    &= \bar{\gamma}^{-1} \cdot \Bigg( \frac{n_0}{k} \cdot \frac{C_1 E - (2^{E}-1)(2C_1-C_2-C_3)}{2^{E}} 
    + \frac{2E(C_2 + C_3) - C_{1}E(E+1)}{k}    \Bigg)
\end{align*}
according to Lemma \ref{Le_geom_series} and the Gaussian sum formula.\\
A cruder bound for the sufficient budget is given by
\begin{align} \label{eq_more_rough_suff_budget}
    \bar{\gamma}^{-1} \cdot \frac{n_0 + 2^{E+1}}{k} \cdot \frac{C_1 E - (2^{E}-1)(2C_1-C_2-C_3)}{2^{E}}.
\end{align}
Recall that the number of parallel runs is decreasing with the epoch $e,$ so that in the worst case the configuration returned by AC-Band is sampled only in the last epoch $E.$
Consequently it will be run in parallel only with configurations, which are considered in the last epoch $E,$ and guaranteed to be a local $\epsilon$-best configuration (see Theorem \ref{Thm_generalSufficient_eps_max}).    
By Assumption (A3), the local $\epsilon$-best property corresponds to a global $\epsilon$-best property with probability of at least
\begin{align*}
    &1- \frac{1}{\#\text{query sets containing $i^{\ast}$ in epoch } E}
    = 1 - \left(R^{\rho_{E}, k, n_{E}}\right)^{-1}
\end{align*}
for this worst case.
In addition we have to take into account that an $\epsilon$-best configuration is contained only with probability at least $1-\delta$, such that we get an overall probability of at least
\begin{align*}
    \min\{ 1- \delta, 1 - \left(R^{\rho_{E}, k, n_{E}}\right)^{-1} \}
\end{align*}
that AC-Band returns an $\epsilon$-best configuration.
\end{proof}
Note that the total budget of AC band is divided among the epoch-wise calls of CSE by means of the quotient $c_e$: 
\begin{align*}
    c_e = \frac{(C_1 E - (2^{E}-1)(2C_1-C_2-C_3))2^{e}}{2^{E}(-eC_1+C_2+C_3)}.
\end{align*}
This quotient is obtained by bounding the sufficient budget for CSE similar as in the proof of Theorem \ref{Thm:ACBandSufficientBudget} to obtain \eqref{eq_more_rough_suff_budget}:
\begin{align*}
    z_(\rho_e, k, n_e, \epsilon) &= R^{\rho_e,k,n_e} \max_{r \in [R^{\rho_e,k,n_e}]} P_r^{\rho_e,k,n_e} \\
    &~~~~~\cdot \max_{r\in [R^{\rho_e,k,n_e}]} \left(1+ \bar{\gamma}_{\mathbb{A}_{r}(i^{\ast})}^{-1}\left(\max\left\{ \frac{\epsilon}{2},\max_{r \in [R^{\rho_e,k,n_e}]}\frac{\Delta_{(f_{\rho}(|\mathbb{A}_{r}(i^{\ast})|)+1)|\mathbb{A}_{r}(i^{\ast})}}{2}\right\}\right) \right)\\
    &\leq \bar{\gamma}^{-1} (R_1^{\rho_e,k,n_e} + R_2^{\rho_e,k})  \left\lfloor \frac{n_e}{k} \right\rfloor  \\
    &\leq \bar{\gamma}^{-1}\cdot \frac{n_0 + 2^{E+1}}{k} \cdot \frac{-eC_1+C_2+C_3}{2^{e}}.
\end{align*}
AC-Band thus allocates its entire budget such that any call to CSE is guaranteed to return an appropriate configuration once the total budget is sufficiently large.

\subsection{Discussion of Sufficient Budget} \label{sec:SuffBudgetDiscussion}
The behavior of the sufficient budget with respect to $k,$ $\alpha$ and $\delta$ is illustrated in Figure \ref{fig:suffBudgets}.
Note that we ignore the $\gamma^{-1}$ terms in these plots, as these depend on the underlying AC problem and occur as a multiplicative constant in the sufficient budget.

\begin{figure}[h]
    \centering
    \includegraphics[width=0.98\linewidth]{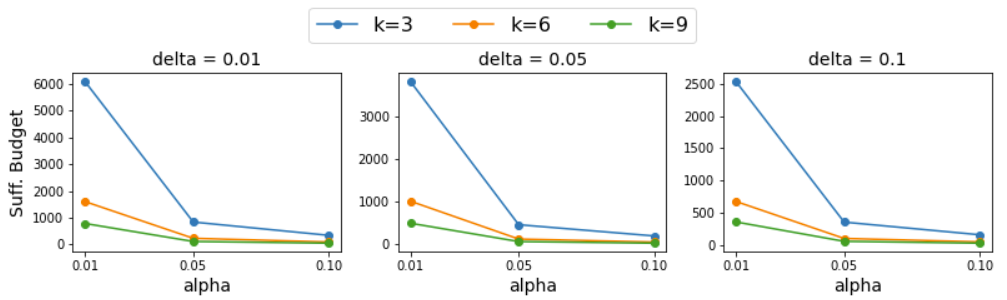}
    \caption{Sufficient budget for different values for k, $\alpha$ and $\delta$.}
    \label{fig:suffBudgets}
\end{figure}

\noindent The dependency of the sufficient budget on $k,\alpha$ and $\delta$ is as expected, since it decreases with increasing $k,\alpha$ and $\delta,$ respectively.

\section{Extended Experiments} \label{sec_further_results}

We provide further details regarding the experiments in Section \ref{sec:experiments}. In particular, the results from Figure \ref{ACBand results d05} are reported in more detail in Table \ref{table_d05_rcw} and augmented by the results for the Regions200 dataset, for which we provide a similar illustration of the results in Figure \ref{ACBand results d05 Regions200}  Moreover, we outline additional experimental results and provide additional metrics to evaluate the quality of the configurations found. The experimental setup used here is the same as in the main paper.
We report two additional metrics to provide additional insights: the percent gap to subset-best and the $R^{\delta}$ metric used in previous works \cite{kleinberg2019procrastinating, weisz2018leapsandbounds, weisz2019capsandruns,weisz2020impatientcapsandruns} within the following tables. The percent gap to subset-best is a variation of the percent gap to best metric where only the configurations that were sampled by the method during a run are considered. In this way, we can see how good a method performs within the sample it selects. A value of $0$ means the configuration returned is the best in the subset. A $10\%$ cutoff is used for the $\delta$-capped runtime. We provide results for two additional experiments: (i) varying the $\delta$ for AC-Band and (I)CAR(++) and, (ii) increasing the configuration sampling budget of AC-band.

\begin{figure*}[ht]

    \centering
    \includegraphics[width=0.9\textwidth]{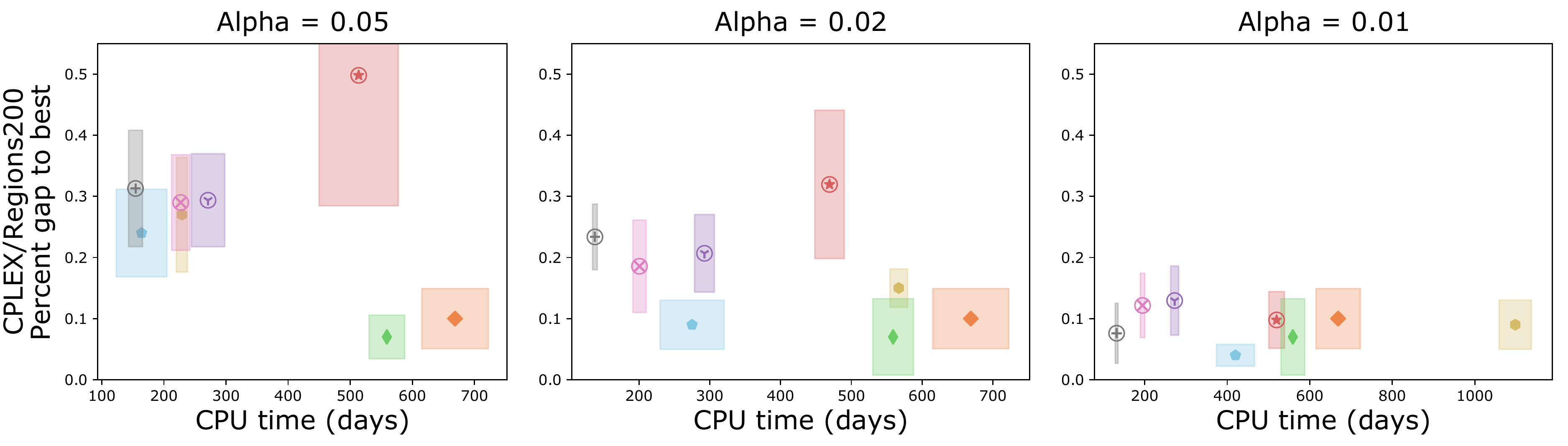}
  
  \caption{Mean CPU time and percent gap to best over $5$ seeds for $\delta=0.05$ and different $\alpha$ (columns) for AC-Band, ICAR, CAR++ and Hyperband on the Regions200 dataset. Circles indicate variants of AC-Band. Rectangles represent the standard error over the seeds. The number of configurations tried for CAR++: $\{97, 245, 492\}$, ICAR: $\{134, 351, 724\},$ AC-Band: $\{60, 153, 303\}$, Hyperband($\eta=5$): $\{842\},$ Hyperband($\eta=8$): $\{ 618\}$.} \label{ACBand results d05 Regions200}
\end{figure*}

\begin{table}[h]
\setcounter{table}{0}
\renewcommand{\thetable}{(\alph{table})}
\renewcommand{\tablename}{}
\tiny
\center

\begin{tabular}{@{\hspace{0\tabcolsep}}c@{\hspace{0.3\tabcolsep}}|@{\hspace{0.2\tabcolsep}}  r @{\hspace{0.6\tabcolsep}}  r @{\hspace{0.6\tabcolsep}} r @{\hspace{0.6\tabcolsep}} r @{\hspace{0.6\tabcolsep}} r @{\hspace{0.6\tabcolsep}} r@{\hspace{0.4\tabcolsep}}| @{\hspace{0.4\tabcolsep}} r @{\hspace{0.6\tabcolsep}}  r @{\hspace{0.6\tabcolsep}} r  @{\hspace{0.6\tabcolsep}} r @{\hspace{0.6\tabcolsep}} r @{\hspace{0.6\tabcolsep}} r@{\hspace{0.4\tabcolsep}}| @{\hspace{0.4\tabcolsep}} r @{\hspace{0.6\tabcolsep}} r @{\hspace{0.6\tabcolsep}} r  @{\hspace{0.6\tabcolsep}} r @{\hspace{0.6\tabcolsep}} r @{\hspace{0.6\tabcolsep}} r@{\hspace{0.4\tabcolsep}}| @{\hspace{0.4\tabcolsep}} r @{\hspace{0.6\tabcolsep}} r @{\hspace{0.6\tabcolsep}} r  @{\hspace{0.6\tabcolsep}} r @{\hspace{0.6\tabcolsep}} r @{\hspace{0.6\tabcolsep}} r @{\hspace{0.6\tabcolsep}}  r @{\hspace{0.6\tabcolsep}} r @{\hspace{0.6\tabcolsep}} r @{\hspace{0.6\tabcolsep}} r @{\hspace{0.6\tabcolsep}} r @{\hspace{0.6\tabcolsep}} r @{\hspace{0.6\tabcolsep}}}

 & \multicolumn{6}{c}{CPU Time (thousand days)}  & \multicolumn{6}{c}{Percent gap to best}  & \multicolumn{6}{c}{Percent gap to subset-best}  & \multicolumn{6}{c}{$R^{\delta}$}\\
\hline
$\alpha$ & \multicolumn{2}{@{\hspace{0.5\tabcolsep}} c @{\hspace{0.5\tabcolsep}}}{$0.05$} & \multicolumn{2}{@{\hspace{0.5\tabcolsep}} c @{\hspace{0.5\tabcolsep}}}{$0.02$} & \multicolumn{2}{@{\hspace{0.5\tabcolsep}} c @{\hspace{0.5\tabcolsep}}}{$0.01$}& \multicolumn{2}{@{\hspace{0.5\tabcolsep}} c @{\hspace{0.5\tabcolsep}}}{$0.05$} & \multicolumn{2}{@{\hspace{0.5\tabcolsep}} c @{\hspace{0.5\tabcolsep}}}{$0.02$} & \multicolumn{2}{@{\hspace{0.5\tabcolsep}} c @{\hspace{0.5\tabcolsep}}}{$0.01$}& \multicolumn{2}{@{\hspace{0.5\tabcolsep}} c @{\hspace{0.5\tabcolsep}}}{$0.05$} & \multicolumn{2}{@{\hspace{0.5\tabcolsep}} c @{\hspace{0.5\tabcolsep}}}{$0.02$} & \multicolumn{2}{@{\hspace{0.5\tabcolsep}} c @{\hspace{0.5\tabcolsep}}}{$0.01$}& \multicolumn{2}{@{\hspace{0.5\tabcolsep}} c @{\hspace{0.5\tabcolsep}}}{$0.05$} & \multicolumn{2}{@{\hspace{0.5\tabcolsep}} c @{\hspace{0.5\tabcolsep}}}{$0.02$} & \multicolumn{2}{@{\hspace{0.5\tabcolsep}} c @{\hspace{0.5\tabcolsep}}}{$0.01$}\\
\hline
 Method & $\mu$ & $\sigma$ & $\mu$ & $\sigma$ & $\mu$ & $\sigma$ & $\mu$ & $\sigma$ & $\mu$ & $\sigma$ & $\mu$ & $\sigma$ & $\mu$ & $\sigma$ & $\mu$ & $\sigma$ & $\mu$ & $\sigma$ & $\mu$ & $\sigma$ & $\mu$ & $\sigma$ & $\mu$ & $\sigma$\\
\hline
ICAR & 100.64 & 12.74 & 242.74 & 14.96 & 467.32 & 25.08  & 0.02 & 0.04 &0.01 & 0.01 &0.02 & 0.03 &0.01 & 0.02 &0.01 & 0.01 &0.02 & 0.03 & 5.0 & 0.1 & 4.9 & 0.1 & 4.9 & 0.1 \\
CAR++ & 92.30 & 5.48 & 224.21 & 16.38 & 452.06 & 18.08  &0.05 & 0.04 &0.01 & 0.01 &0.01 & 0.01 &0.02 & 0.03 &0.00 & 0.01 & 0.01 & 0.01 &  5.2 & 0.2 & 4.9 & 0.1 & 4.9 & 0.1 \\
CAR & 157.76 & 18.07 & 367.86 & 7.24 & 771.45 & 21.72  & 0.04 & 0.03 & 0.01 & 0.01 & 0.01 & 0.01 & 0.01 & 0.02 & 0.00 & 0.01 & 0.01 & 0.01 & 5.2 & 0.2 & 4.9 & 0.1 & 4.9 & 0.1 \\
\hline
$k = 2$ & 13.05 & 2.04 & 11.76 & 0.90 & 11.02 & 0.56 & 0.14 & 0.17 & 0.16 & 0.19 & 0.05 & 0.08 & 0.09 & 0.12 & 0.14 & 0.19 & 0.05 & 0.08 & 5.7 & 0.90 & 5.7 & 1.0 & 5.1 & 0.5 \\ 
$k = 4$  & 16.32 & 1.91 & 14.91 & 0.68 & 14.82 & 0.53 & 0.05 & 0.06 & 0.02 & 0.03 & 0.01 & 0.02 & 0.00 & 0.00 & 0.00 & 0.01 & 0.01 & 0.02 & 5.3 & 0.3 & 5.1 & 0.2 & 4.9 & 0.1 \\ 
$k = 8$  & 20.09 & 1.92 & 21.58 & 1.24 & 21.21 & 0.91  & 0.12 & 0.14 & 0.01 & 0.01 & 0.01 & 0.01 & 0.06 & 0.11 & 0.00 & 0.01 & 0.01 & 0.01 & 5.6 & 0.8 & 5.0 & 0.1 & 4.9 & 0.1 \\ 
$k = 16$ & 37.87 & 3.46 & 36.07 & 1.71 & 38.45 & 0.32 & 0.05 & 0.06 & 0.05 & 0.07 & 0.05 & 0.10 & 0.02 & 0.03 & 0.03 & 0.05 & 0.05 & 0.10 & 5.2 & 0.3 & 5.0 & 0.3 & 5.2 & 0.6 \\ 
\end{tabular}\caption{CNFuzzDD dataset}

\begin{tabular}{@{\hspace{0\tabcolsep}}c@{\hspace{0.3\tabcolsep}}|@{\hspace{0.2\tabcolsep}}  r @{\hspace{0.6\tabcolsep}}  r @{\hspace{0.6\tabcolsep}} r @{\hspace{0.6\tabcolsep}} r @{\hspace{0.6\tabcolsep}} r @{\hspace{0.6\tabcolsep}} r@{\hspace{0.4\tabcolsep}}| @{\hspace{0.4\tabcolsep}} r @{\hspace{0.6\tabcolsep}}  r @{\hspace{0.6\tabcolsep}} r  @{\hspace{0.6\tabcolsep}} r @{\hspace{0.6\tabcolsep}} r @{\hspace{0.6\tabcolsep}} r@{\hspace{0.4\tabcolsep}}| @{\hspace{0.4\tabcolsep}} r @{\hspace{0.6\tabcolsep}} r @{\hspace{0.6\tabcolsep}} r  @{\hspace{0.6\tabcolsep}} r @{\hspace{0.6\tabcolsep}} r @{\hspace{0.6\tabcolsep}} r@{\hspace{0.4\tabcolsep}}| @{\hspace{0.4\tabcolsep}} r @{\hspace{0.6\tabcolsep}} r @{\hspace{0.6\tabcolsep}} r  @{\hspace{0.6\tabcolsep}} r @{\hspace{0.6\tabcolsep}} r @{\hspace{0.6\tabcolsep}} r @{\hspace{0.6\tabcolsep}}  r @{\hspace{0.6\tabcolsep}} r @{\hspace{0.6\tabcolsep}} r @{\hspace{0.6\tabcolsep}} r @{\hspace{0.6\tabcolsep}} r @{\hspace{0.6\tabcolsep}} r @{\hspace{0.6\tabcolsep}}}

 & \multicolumn{6}{c}{CPU Time (days)}  & \multicolumn{6}{c}{Percent gap to best}  & \multicolumn{6}{c}{Percent gap to subset-best}  & \multicolumn{6}{c}{$R^{\delta}$}\\
\hline
$\alpha$ & \multicolumn{2}{@{\hspace{0.5\tabcolsep}} c @{\hspace{0.5\tabcolsep}}}{$0.05$} & \multicolumn{2}{@{\hspace{0.5\tabcolsep}} c @{\hspace{0.5\tabcolsep}}}{$0.02$} & \multicolumn{2}{@{\hspace{0.5\tabcolsep}} c @{\hspace{0.5\tabcolsep}}}{$0.01$}& \multicolumn{2}{@{\hspace{0.5\tabcolsep}} c @{\hspace{0.5\tabcolsep}}}{$0.05$} & \multicolumn{2}{@{\hspace{0.5\tabcolsep}} c @{\hspace{0.5\tabcolsep}}}{$0.02$} & \multicolumn{2}{@{\hspace{0.5\tabcolsep}} c @{\hspace{0.5\tabcolsep}}}{$0.01$}& \multicolumn{2}{@{\hspace{0.5\tabcolsep}} c @{\hspace{0.5\tabcolsep}}}{$0.05$} & \multicolumn{2}{@{\hspace{0.5\tabcolsep}} c @{\hspace{0.5\tabcolsep}}}{$0.02$} & \multicolumn{2}{@{\hspace{0.5\tabcolsep}} c @{\hspace{0.5\tabcolsep}}}{$0.01$}& \multicolumn{2}{@{\hspace{0.5\tabcolsep}} c @{\hspace{0.5\tabcolsep}}}{$0.05$} & \multicolumn{2}{@{\hspace{0.5\tabcolsep}} c @{\hspace{0.5\tabcolsep}}}{$0.02$} & \multicolumn{2}{@{\hspace{0.5\tabcolsep}} c @{\hspace{0.5\tabcolsep}}}{$0.01$}\\
\hline
 Method & $\mu$ & $\sigma$ & $\mu$ & $\sigma$ & $\mu$ & $\sigma$ & $\mu$ & $\sigma$ & $\mu$ & $\sigma$ & $\mu$ & $\sigma$ & $\mu$ & $\sigma$ & $\mu$ & $\sigma$ & $\mu$ & $\sigma$ & $\mu$ & $\sigma$ & $\mu$ & $\sigma$ & $\mu$ & $\sigma$\\
\hline
ICAR & 164.30 & 91.05 & 274.84 & 100.72 & 420.15 & 103.24 & 0.24 & 0.16 & 0.09 & 0.09 & 0.04 & 0.04 & 0.00 & 0.00 & 0.00 & 0.00 & 0.00 & 0.00 & 34.8 & 4.3 & 29.8 & 2.2 & 28.5 & 1.8 \\
CAR++ & 229.29 & 19.93 & 566.99 & 28.21 & 1097.90 & 88.41 & 0.27 & 0.17 & 0.15 & 0.07 & 0.09 & 0.09 & 0.00 & 0.00 & 0.00 & 0.00 & 0.00 & 0.00 & 35.3 & 4.3 & 32.0 & 2.2 & 29.8 & 1.8 \\
CAR & 523.69 & 53.34 & 1294.87 & 64.11 & 2549.22 & 199.00 & 0.27 & 0.17 & 0.16 & 0.09 & 0.09 & 0.09 & 0.00 & 0.00 & 0.10 & 0.30 & 0.00 & 0.00 & 35.3 & 4.5 & 31.9 & 1.6 & 29.8 & 2.2 \\
\hline
$k = 2$ & 154.27 & 25.29 & 137.79 & 7.59 & 132.00 & 7.78 & 0.31 & 0.21 & 0.23 & 0.12 & 0.08 & 0.11 & 0.00 & 0.00 & 0.01 & 0.02 & 0.00 & 0.00 & 36.6 & 5.9 & 34.6 & 5.0 & 30.2 & 1.6  \\ 
$k = 4$  & 227.07 & 33.18 & 200.72 & 20.76 & 194.88 & 12.07 & 0.29 & 0.17 & 0.19 & 0.17 & 0.12 & 0.12 & 0.00 & 0.00 & 0.00 & 0.00 & 0.01 & 0.02 & 36.2 & 4.4 & 34.3 & 5.4 & 31.8 & 2.7  \\ 
$k = 8$ & 271.02 & 60.26 & 292.56 & 31.26 & 272.56 & 21.64 & 0.29 & 0.17 & 0.21 & 0.14 & 0.13 & 0.13 & 0.00 & 0.00 & 0.01 & 0.02 & 0.01 & 0.02 & 36.2 & 6.3 & 33.5 & 3.5 & 31.1 & 2.2  \\
$k = 16$ & 513.57 & 142.68 & 469.19 & 46.49 & 519.70 & 42.30 & 0.50 & 0.48 & 0.32 & 0.27 & 0.10 & 0.10 & 0.00 & 0.00 & 0.03 & 0.06 & 0.00 & 0.00 & 40.0 & 9.0 & 36.8 & 6.9 & 30.6 & 1.7  \\ 

\end{tabular}\caption{Regions200 dataset}

\begin{tabular}{@{\hspace{0\tabcolsep}}c@{\hspace{0.3\tabcolsep}}|@{\hspace{0.2\tabcolsep}}  r @{\hspace{0.6\tabcolsep}}  r @{\hspace{0.6\tabcolsep}} r @{\hspace{0.6\tabcolsep}} r @{\hspace{0.6\tabcolsep}} r @{\hspace{0.6\tabcolsep}} r@{\hspace{0.4\tabcolsep}}| @{\hspace{0.4\tabcolsep}} r @{\hspace{0.6\tabcolsep}}  r @{\hspace{0.6\tabcolsep}} r  @{\hspace{0.6\tabcolsep}} r @{\hspace{0.6\tabcolsep}} r @{\hspace{0.6\tabcolsep}} r@{\hspace{0.4\tabcolsep}}| @{\hspace{0.4\tabcolsep}} r @{\hspace{0.6\tabcolsep}} r @{\hspace{0.6\tabcolsep}} r  @{\hspace{0.6\tabcolsep}} r @{\hspace{0.6\tabcolsep}} r @{\hspace{0.6\tabcolsep}} r@{\hspace{0.4\tabcolsep}}| @{\hspace{0.4\tabcolsep}} r @{\hspace{0.6\tabcolsep}} r @{\hspace{0.6\tabcolsep}} r  @{\hspace{0.6\tabcolsep}} r @{\hspace{0.6\tabcolsep}} r @{\hspace{0.6\tabcolsep}} r @{\hspace{0.6\tabcolsep}}  r @{\hspace{0.6\tabcolsep}} r @{\hspace{0.6\tabcolsep}} r @{\hspace{0.6\tabcolsep}} r @{\hspace{0.6\tabcolsep}} r @{\hspace{0.6\tabcolsep}} r @{\hspace{0.6\tabcolsep}}}

 & \multicolumn{6}{c}{CPU Time (thousand days)}  & \multicolumn{6}{c}{Percent gap to best}  & \multicolumn{6}{c}{Percent gap to subset-best}  & \multicolumn{6}{c}{$R^{\delta}$}\\
\hline
$\alpha$ & \multicolumn{2}{@{\hspace{0.5\tabcolsep}} c @{\hspace{0.5\tabcolsep}}}{$0.05$} & \multicolumn{2}{@{\hspace{0.5\tabcolsep}} c @{\hspace{0.5\tabcolsep}}}{$0.02$} & \multicolumn{2}{@{\hspace{0.5\tabcolsep}} c @{\hspace{0.5\tabcolsep}}}{$0.01$}& \multicolumn{2}{@{\hspace{0.5\tabcolsep}} c @{\hspace{0.5\tabcolsep}}}{$0.05$} & \multicolumn{2}{@{\hspace{0.5\tabcolsep}} c @{\hspace{0.5\tabcolsep}}}{$0.02$} & \multicolumn{2}{@{\hspace{0.5\tabcolsep}} c @{\hspace{0.5\tabcolsep}}}{$0.01$}& \multicolumn{2}{@{\hspace{0.5\tabcolsep}} c @{\hspace{0.5\tabcolsep}}}{$0.05$} & \multicolumn{2}{@{\hspace{0.5\tabcolsep}} c @{\hspace{0.5\tabcolsep}}}{$0.02$} & \multicolumn{2}{@{\hspace{0.5\tabcolsep}} c @{\hspace{0.5\tabcolsep}}}{$0.01$}& \multicolumn{2}{@{\hspace{0.5\tabcolsep}} c @{\hspace{0.5\tabcolsep}}}{$0.05$} & \multicolumn{2}{@{\hspace{0.5\tabcolsep}} c @{\hspace{0.5\tabcolsep}}}{$0.02$} & \multicolumn{2}{@{\hspace{0.5\tabcolsep}} c @{\hspace{0.5\tabcolsep}}}{$0.01$}\\
\hline
 Method & $\mu$ & $\sigma$ & $\mu$ & $\sigma$ & $\mu$ & $\sigma$ & $\mu$ & $\sigma$ & $\mu$ & $\sigma$ & $\mu$ & $\sigma$ & $\mu$ & $\sigma$ & $\mu$ & $\sigma$ & $\mu$ & $\sigma$ & $\mu$ & $\sigma$ & $\mu$ & $\sigma$ & $\mu$ & $\sigma$\\
\hline
ICAR & 1.28 & 0.39 & 2.03 & 0.30 & 4.07 & 0.24  & 0.14 & 0.08 & 0.08 & 0.03 & 0.06 & 0.02 & 0.00 & 0.01 & 0.00 & 0.00 & 0.00 & 0.00 & 156.1 & 11.9 & 146.5 & 4.1 & 143.3 & 4.9  \\
CAR++ & 1.73 & 0.37 & 3.64 & 0.18 & 7.52 & 0.13 & 0.17 & 0.09 & 0.10 & 0.03 & 0.06 & 0.02 & 0.00 & 0.01 & 0.00 & 0.00 & 0.00 & 0.00 & 162.1 & 11.9 & 149.1 & 4.1 & 143.3 & 4.9 \\
CAR  & 3.30 & 0.50 & 7.59 & 0.19 & 15.65 & 0.25 & 0.17 & 0.09 & 0.10 & 0.03 & 0.06 & 0.02 & 0.00 & 0.01 & 0.00 & 0.00 & 0.00 & 0.00 & 160.1 & 13.3 & 149.1 & 4.7 & 143.3 & 4.9 \\
\hline
$k = 2$ & 0.30 & 0.02 & 0.27 & 0.01 & 0.26 & 0.01 & 0.18 & 0.13 & 0.12 & 0.12 & 0.10 & 0.04 & 0.02 & 0.03 & 0.04 & 0.07 & 0.04 & 0.04 & 164.1 & 32.2 & 152.2 & 17.8 & 144.7 & 2.9  \\ 
$k = 4$  & 0.52 & 0.04 & 0.47 & 0.03 & 0.45 & 0.02 & 0.21 & 0.08 & 0.19 & 0.22 & 0.07 & 0.08 & 0.03 & 0.03 & 0.08 & 0.16 & 0.02 & 0.03 & 169.8 & 18.5 & 165.2 & 36.5 & 139.8 & 8.4  \\ 
$k = 8$ & 0.69 & 0.09 & 0.72 & 0.03 & 0.71 & 0.04 & 0.20 & 0.11 & 0.23 & 0.22 & 0.09 & 0.09 & 0.04 & 0.06 & 0.14 & 0.15 & 0.06 & 0.07 & 165.6 & 24.0 & 173.0 & 42.5 & 143.0 & 14.0  \\ 
$k = 16$ & 1.42 & 0.29 & 1.31 & 0.06 & 1.43 & 0.08 & 0.21 & 0.08 & 0.15 & 0.16 & 0.12 & 0.06 & 0.05 & 0.07 & 0.03 & 0.06 & 0.05 & 0.04 & 167.6 & 15.5 & 158.2 & 30.5 & 146.4 & 4.6  \\ 
\end{tabular}\caption{RCW dataset}

\renewcommand{\tablename}{Table}
\renewcommand{\thetable}{1}
\caption{Mean ($\mu$) and standard derivation ($\sigma$) for CPU time, percent gap to best, percent gap to subset-best and mean $\delta$-capped runtime over $5$ seeds for $\delta=\textbf{0.05}$ and different $\alpha$ (columns) for AC-Band, ICAR, CAR++, CAR on the CNFuzzDD (top), Regions200 (middle) and RCW (bottom) dataset. The number of configurations tried for CAR(++): $\{97, 245, 492\}$, ICAR: $\{134, 351, 724\},$ AC-Band: $\{60, 153, 303\}$.}\label{table_d05_rcw}
\end{table}

\paragraph{Varying $\delta$}
The user must decide $\delta$ based on their preferences, thus there is no ``correct'' value to set $\delta$ to in our experiments. Therefore, we also experiment with $\delta = 0.01$ in addition to the results in Section \ref{sec:experiments} where $\delta = 0.05$ is used. The results for a lower $\delta$ (see Figure \ref{fig_d01} or Table \ref{table_d01_rcw}) are consistent with the results reported for $\delta = 0.05$. In particular, AC-Band with $k = 2$ lies on the Pareto front of percent gap to best and CPU time, backing up the claim that a lower value of $k$ is preferable. With $\delta = 0.01$ and $k = 2$, AC-Band is $80\%$ percent faster than ICAR and $74\%$ faster than Hyperband over all $\alpha$ and all datasets, while providing configurations that are only $7\%$ and $6\%$ worse in terms of the gap to the best configuration. Furthermore, AC-Band exhibits a nearly linear speedup with the number of available cores, leading to a low wallclock time for increasing $k$. We note that a real parallel implementation would, of course, suffer from some extra overhead.
\begin{figure*}[h]
  \begin{subfigure}[t]{0.3\textwidth}
    \centering
    \includegraphics[width=\textwidth]{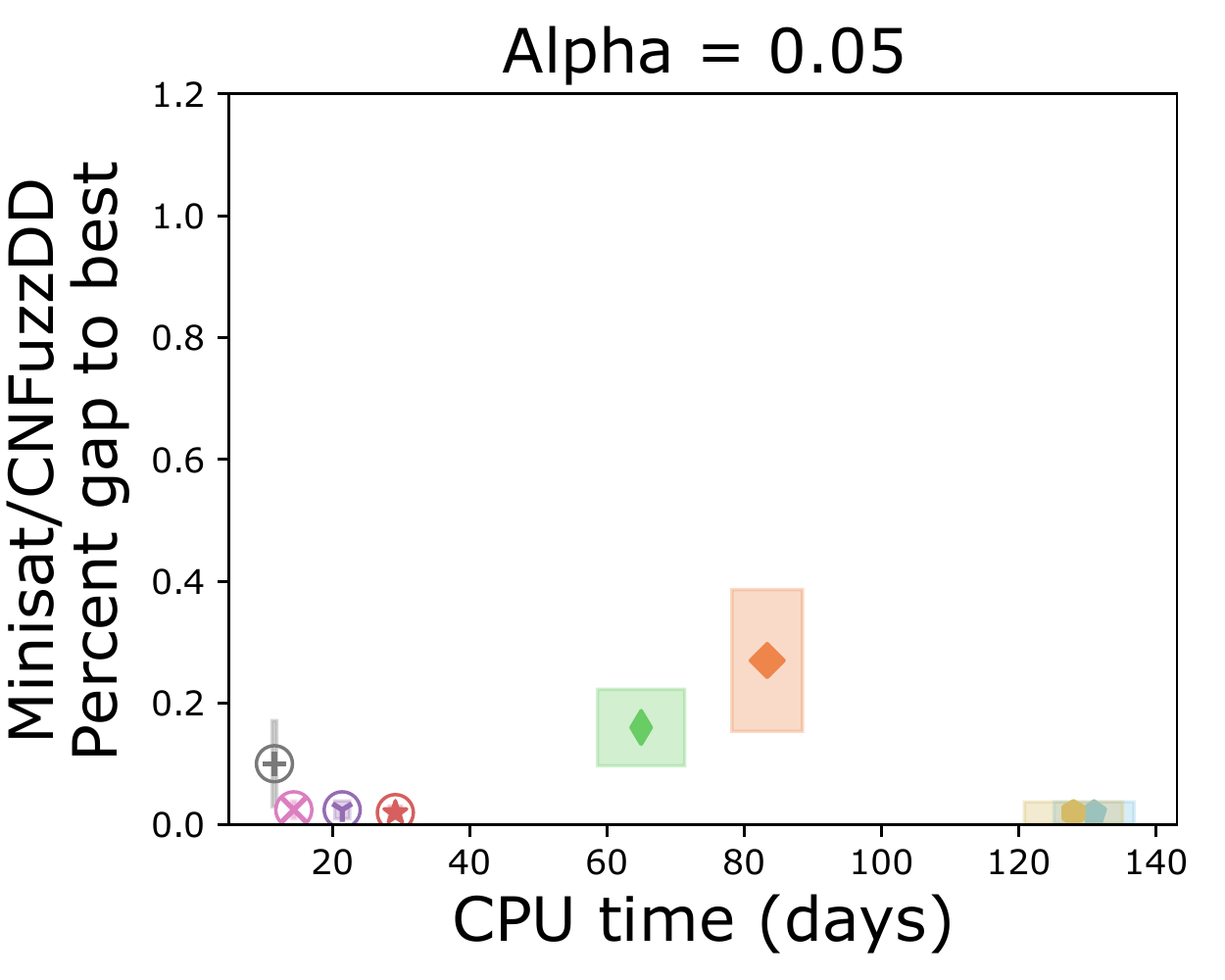}
  \end{subfigure}
  \begin{subfigure}[t]{0.3\textwidth}
    \centering
    \includegraphics[width=\textwidth]{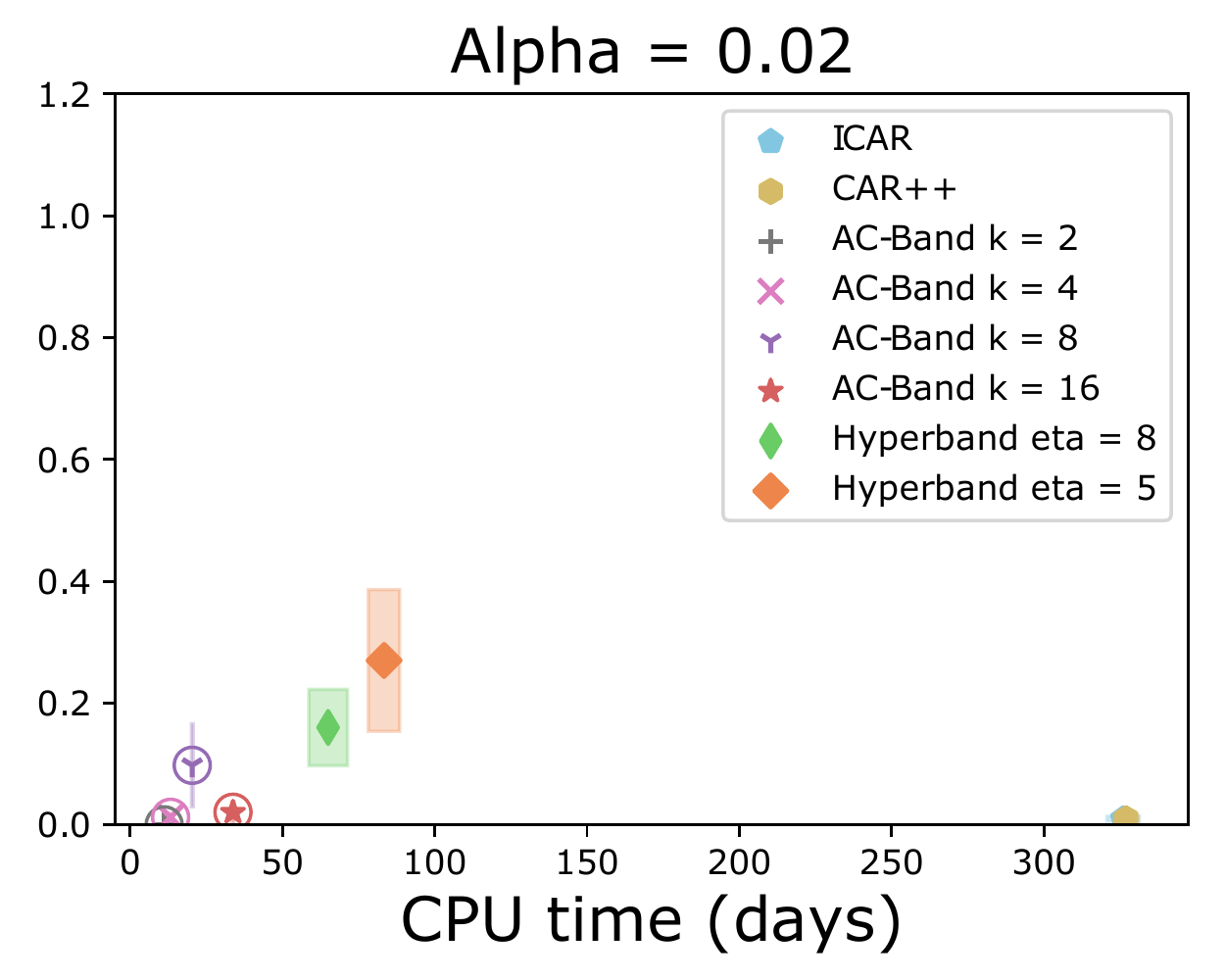}
  \end{subfigure}
  \begin{subfigure}[t]{0.3\textwidth}
    \centering
    \includegraphics[width=\textwidth]{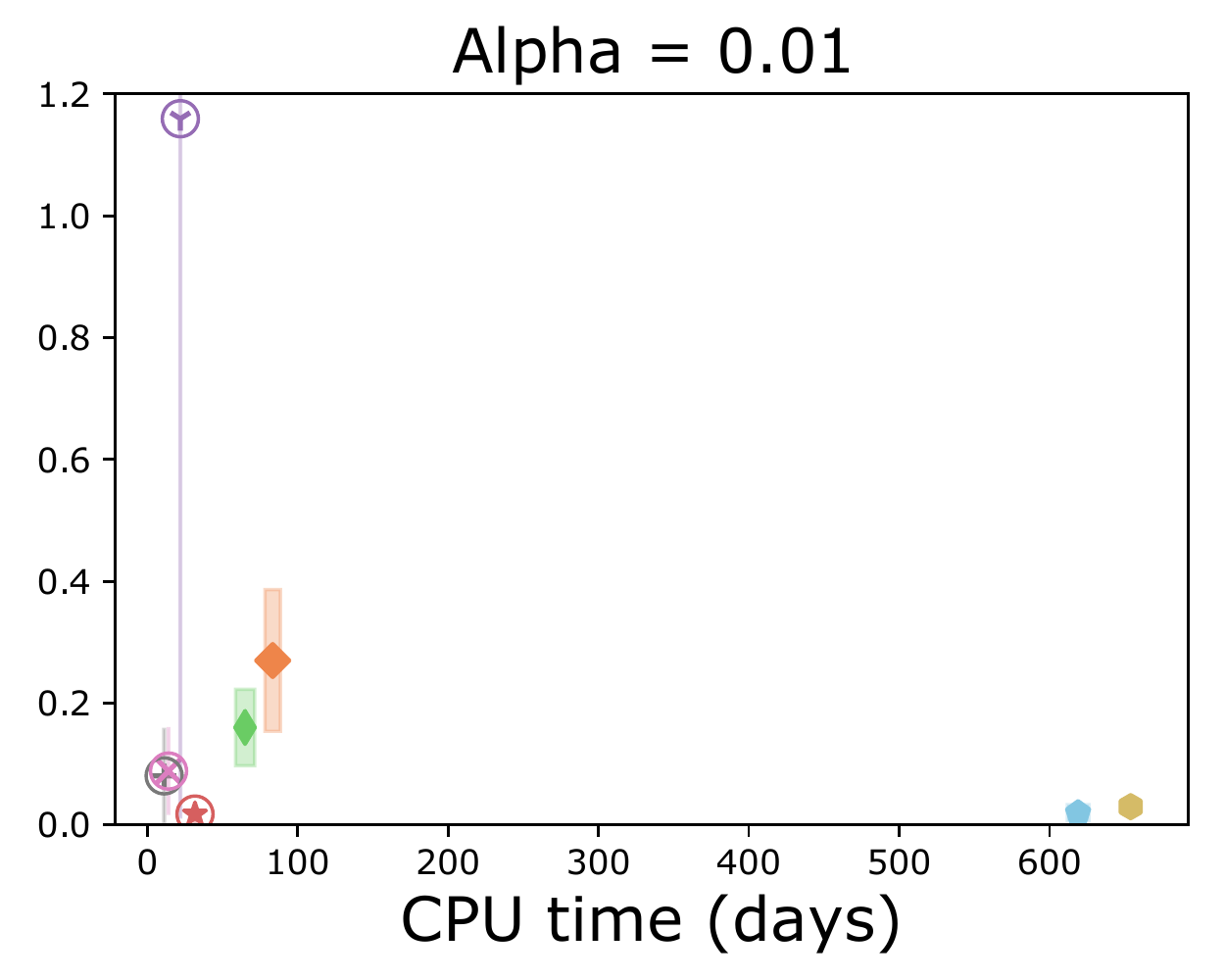}
  \end{subfigure}

  \begin{subfigure}[t]{0.3\textwidth}
    \centering
    \includegraphics[width=\textwidth]{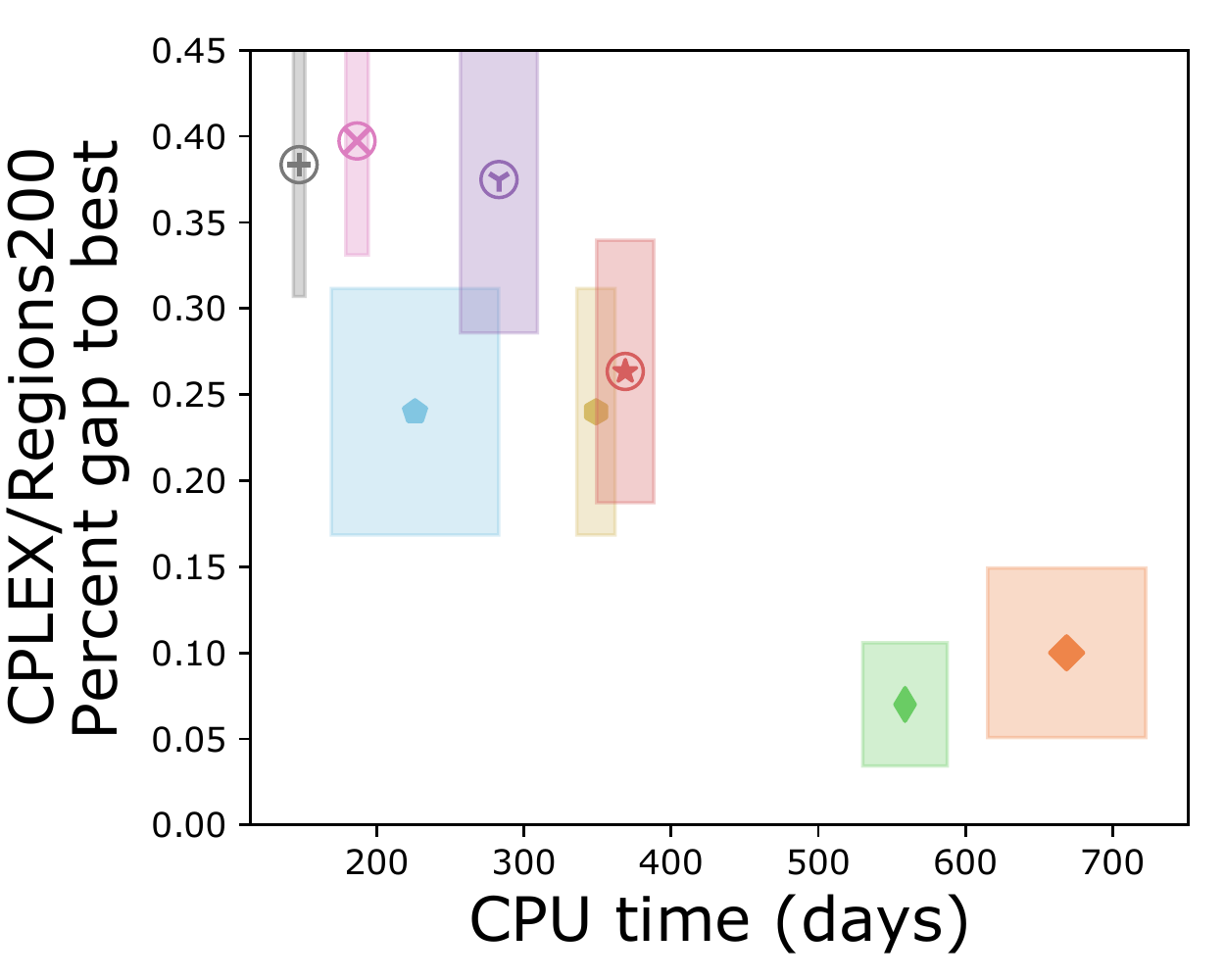}
  \end{subfigure}
  \begin{subfigure}[t]{0.3\textwidth}
    \centering
    \includegraphics[width=\textwidth]{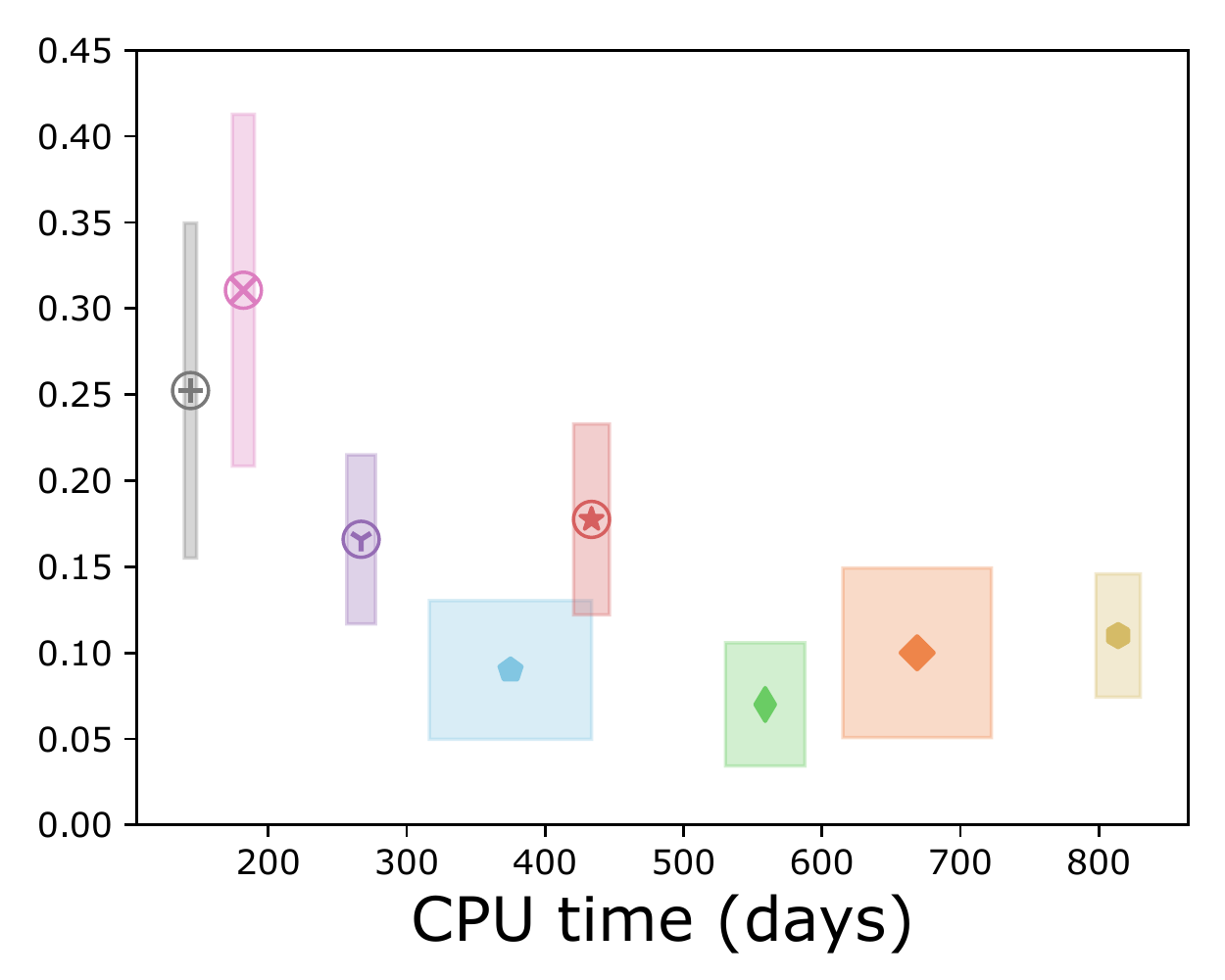}
  \end{subfigure}
  \begin{subfigure}[t]{0.3\textwidth}
    \centering
    \includegraphics[width=\textwidth]{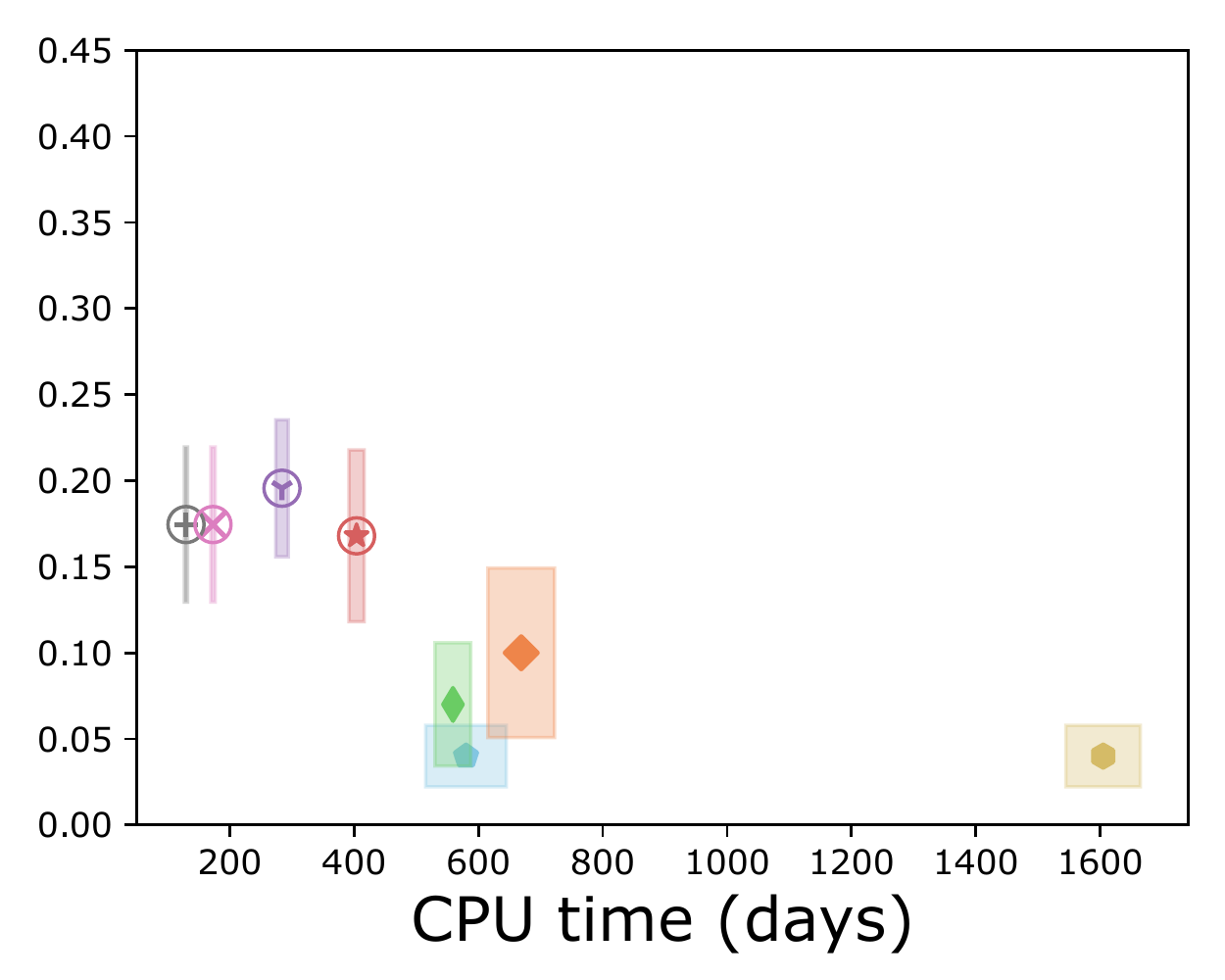}
  \end{subfigure}
  
    \begin{subfigure}[t]{0.3\textwidth}
    \centering
    \includegraphics[width=\textwidth]{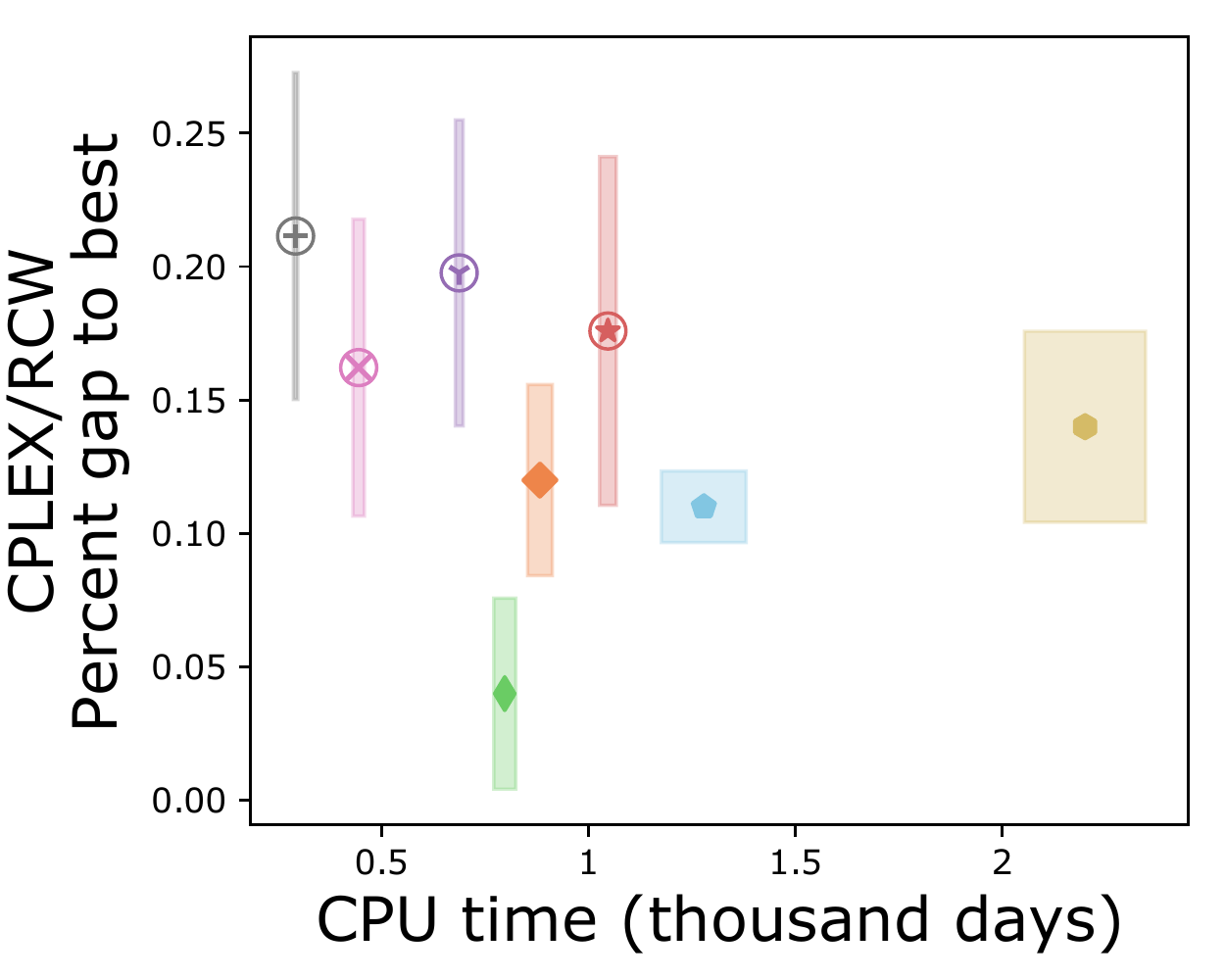}
  \end{subfigure}
  \begin{subfigure}[t]{0.3\textwidth}
    \centering
    \includegraphics[width=\textwidth]{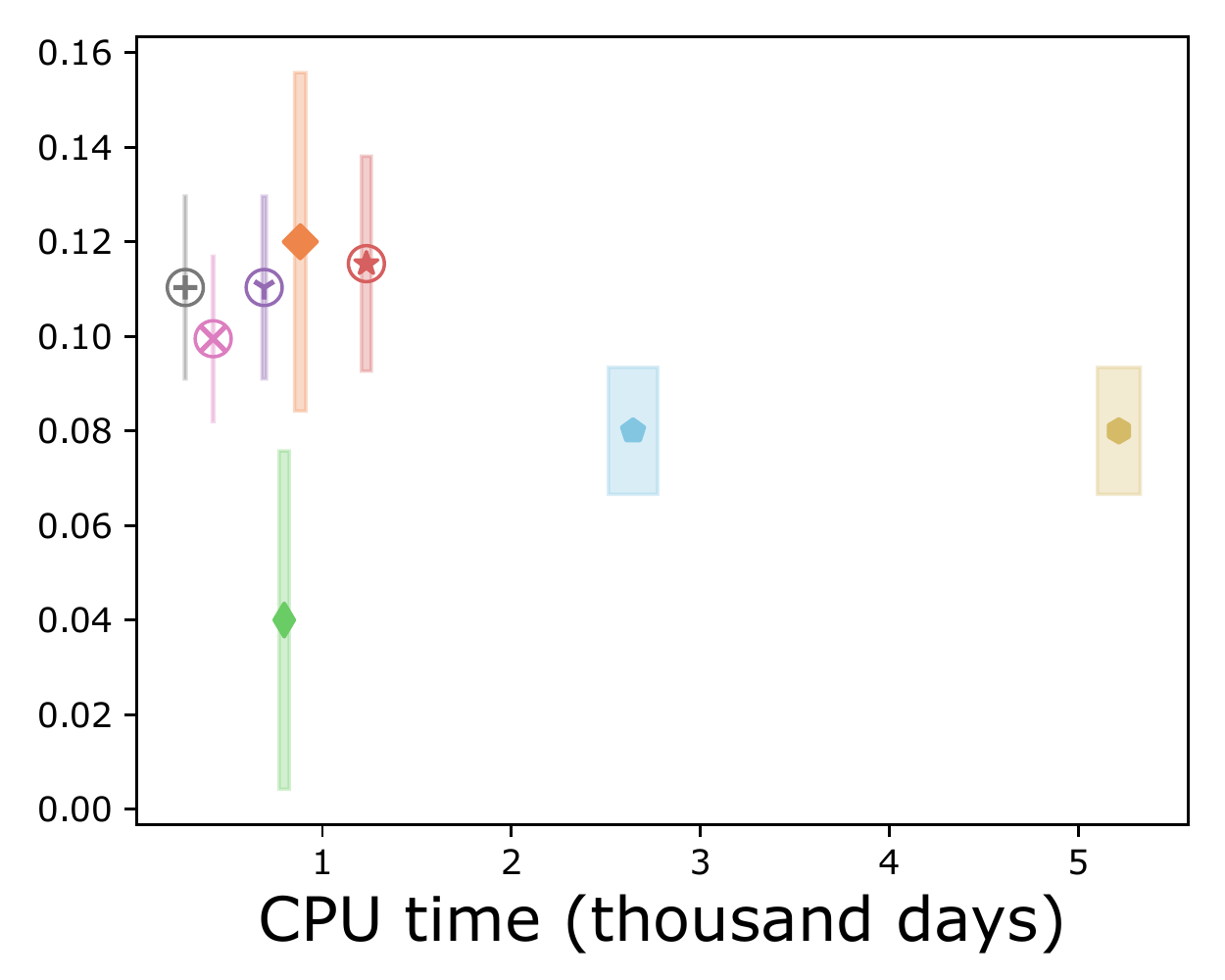}
  \end{subfigure}
  \begin{subfigure}[t]{0.3\textwidth}
    \centering
    \includegraphics[width=\textwidth]{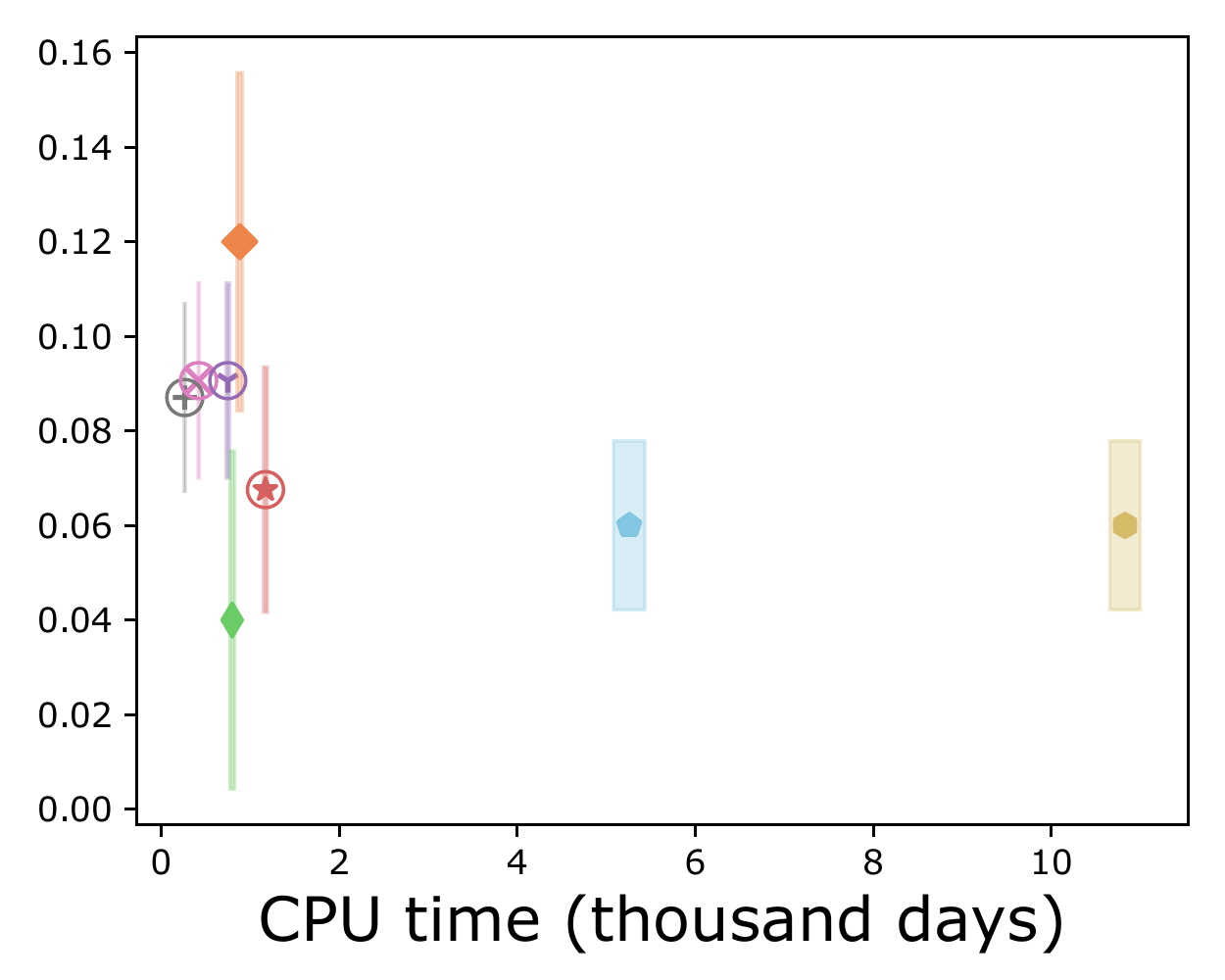}
  \end{subfigure}

  \caption{
  Mean CPU time and percent gap to best over $5$ seeds for $\delta=0.01$ and different $\alpha$ (columns) for AC-Band, ICAR, CAR++ and Hyperband on CNFuzzDD (top), Regions200 (middle) and RCW (bottom). Circles indicate variants of AC-Band. Rectangles represent the standard error over the seeds. The number of configurations tried for CAR++: $\{128, 325, 652\}$, ICAR: $\{166, 431, 884\},$ AC-Band: $\{93, 232, 462\}$, Hyperband($\eta=5$): $\{842\},$ Hyperband($\eta=8$): $\{ 618\}$.}\label{fig_d01}
\end{figure*}

Looking closer at the results obtained for the CNFuzzDD dataset for $\alpha = 0.01$, we note that the percent gap to best increases for $k = 8$. This increase is solely due to one seed for which a percent gap to best value of $5.74$ is obtained. AC-Band uses most of its sampling budget in the first rounds, where large sets of configurations are evaluated on large sets of instances. Over the course of AC-Band, both the configuration and instance sets become smaller. For the seed in question, AC-Band samples one new configuration in the last round that is able to beat the current incumbent on $18$ instances. Since the configuration wins, it is returned, even though the incumbent has seen more instances and proven worthy over the previous $8$ epochs. The possibility of this happening grows with the number of configurations to sample (decreasing $\alpha$) since the same amount of instances is distributed among more configurations.

\begin{table}[p]
\setcounter{table}{0}
\renewcommand{\thetable}{(\alph{table})}
\renewcommand{\tablename}{}
\tiny
\center
\begin{tabular}{@{\hspace{0.3\tabcolsep}}c@{\hspace{0.3\tabcolsep}}|@{\hspace{0.2\tabcolsep}}  r @{\hspace{0.6\tabcolsep}}  r @{\hspace{0.6\tabcolsep}} r @{\hspace{0.6\tabcolsep}} r @{\hspace{0.6\tabcolsep}} r @{\hspace{0.6\tabcolsep}} r@{\hspace{0.4\tabcolsep}}| @{\hspace{0.4\tabcolsep}} r @{\hspace{0.6\tabcolsep}}  r @{\hspace{0.6\tabcolsep}} r  @{\hspace{0.6\tabcolsep}} r @{\hspace{0.6\tabcolsep}} r @{\hspace{0.6\tabcolsep}} r@{\hspace{0.4\tabcolsep}}| @{\hspace{0.4\tabcolsep}} r @{\hspace{0.6\tabcolsep}} r @{\hspace{0.6\tabcolsep}} r  @{\hspace{0.6\tabcolsep}} r @{\hspace{0.6\tabcolsep}} r @{\hspace{0.6\tabcolsep}} r@{\hspace{0.4\tabcolsep}}| @{\hspace{0.4\tabcolsep}} r @{\hspace{0.6\tabcolsep}} r @{\hspace{0.6\tabcolsep}} r  @{\hspace{0.6\tabcolsep}} r @{\hspace{0.6\tabcolsep}} r @{\hspace{0.6\tabcolsep}} r @{\hspace{0.6\tabcolsep}}  r @{\hspace{0.6\tabcolsep}} r @{\hspace{0.6\tabcolsep}} r @{\hspace{0.6\tabcolsep}} r @{\hspace{0.6\tabcolsep}} r @{\hspace{0.6\tabcolsep}} r @{\hspace{0.6\tabcolsep}}}

 & \multicolumn{6}{c}{CPU Time (days)}  & \multicolumn{6}{c}{Percent gap to best}  & \multicolumn{6}{c}{Percent gap to subset-best}  & \multicolumn{6}{c}{$R^{\delta}$}\\
\hline
$\alpha$ & \multicolumn{2}{@{\hspace{0.5\tabcolsep}} c @{\hspace{0.5\tabcolsep}}}{$0.05$} & \multicolumn{2}{@{\hspace{0.5\tabcolsep}} c @{\hspace{0.5\tabcolsep}}}{$0.02$} & \multicolumn{2}{@{\hspace{0.5\tabcolsep}} c @{\hspace{0.5\tabcolsep}}}{$0.01$}& \multicolumn{2}{@{\hspace{0.5\tabcolsep}} c @{\hspace{0.5\tabcolsep}}}{$0.05$} & \multicolumn{2}{@{\hspace{0.5\tabcolsep}} c @{\hspace{0.5\tabcolsep}}}{$0.02$} & \multicolumn{2}{@{\hspace{0.5\tabcolsep}} c @{\hspace{0.5\tabcolsep}}}{$0.01$}& \multicolumn{2}{@{\hspace{0.5\tabcolsep}} c @{\hspace{0.5\tabcolsep}}}{$0.05$} & \multicolumn{2}{@{\hspace{0.5\tabcolsep}} c @{\hspace{0.5\tabcolsep}}}{$0.02$} & \multicolumn{2}{@{\hspace{0.5\tabcolsep}} c @{\hspace{0.5\tabcolsep}}}{$0.01$}& \multicolumn{2}{@{\hspace{0.5\tabcolsep}} c @{\hspace{0.5\tabcolsep}}}{$0.05$} & \multicolumn{2}{@{\hspace{0.5\tabcolsep}} c @{\hspace{0.5\tabcolsep}}}{$0.02$} & \multicolumn{2}{@{\hspace{0.5\tabcolsep}} c @{\hspace{0.5\tabcolsep}}}{$0.01$}\\
\hline
 Method & $\mu$ & $\sigma$ & $\mu$ & $\sigma$ & $\mu$ & $\sigma$ & $\mu$ & $\sigma$ & $\mu$ & $\sigma$ & $\mu$ & $\sigma$ & $\mu$ & $\sigma$ & $\mu$ & $\sigma$ & $\mu$ & $\sigma$ & $\mu$ & $\sigma$ & $\mu$ & $\sigma$ & $\mu$ & $\sigma$\\
\hline
ICAR & 130.57 & 13.14 & 325.61 & 11.80 & 619.16 & 15.58  & 0.02 & 0.04 & 0.01 & 0.01 & 0.02 & 0.03 & 0.01 & 0.02 & 0.01 & 0.01 & 0.02 & 0.03 & 5.0 & 0.2 & 4.9 & 0.1 & 4.9 & 0.1 \\
CAR++ & 128.14 & 15.59 & 327.18 & 9.35 & 654.46 & 12.68  & 0.02 & 0.04 & 0.01 & 0.01 & 0.03 & 0.02 & 0.01 & 0.02 & 0.01 & 0.01 & 0.03 & 0.02 &  5.0 & 0.2 & 4.9 & 0.1 & 4.9 & 0.1 \\
CAR & 220.33 & 29.17 & 554.95 & 19.47 & 1169.06 & 12.44  & 0.01 & 0.02 & 0.01 & 0.01 & 0.02 & 0.02 & 0.00 & 0.00 & 0.01 & 0.01 & 0.01 & 0.02 & 5.1 & 0.2 & 4.9 & 0.1 & 4.9 & 0.1 \\
\hline
$k = 2$ & 11.56 & 0.77 & 11.17 & 0.32 & 11.20 & 0.65 & 0.10 & 0.14 & 0.00 & 0.00 & 0.08 & 0.15 & 0.08 & 0.15 & 0.00 & 0.00 & 0.08 & 0.14 & 5.5 & 0.8 & 4.9 & 0.1 & 5.4 & 0.8 \\ 
$k = 4$  & 14.38 & 0.63 & 13.29 & 0.50 & 14.11 & 0.36 & 0.02 & 0.03 & 0.01 & 0.02 & 0.09 & 0.14 & 0.00 & 0.00 & 0.01 & 0.01 & 0.09 & 0.14 & 5.1 & 0.1 & 4.9 & 0.0 & 5.3 & 0.8 \\ 
$k = 8$ & 21.43 & 2.45 & 20.38 & 0.63 & 21.98 & 0.52 & 0.02 & 0.03 & 0.10 & 0.14 & 2.16 & 2.29 & 0.00 & 0.00 & 0.09 & 0.12 & 1.16 & 1.29 & 5.1 & 0.1 & 5.4   & 0.7 & 13.0 & 16.1 \\ 
$k = 16$ & 29.18 & 1.99 & 33.77 & 0.82 & 31.70 & 0.45 & 0.02 & 0.02 & 0.02 & 0.03 & 0.02 & 0.04 & 0.00 & 0.00 & 0.01 & 0.02 & 0.02 & 0.04 & 5.0 & 0.1 & 5.0 & 0.2 & 5.0 & 0.3 \\ 

\end{tabular} \caption{CNFuzzDD dataset}

\begin{tabular}{@{\hspace{0\tabcolsep}}c@{\hspace{0.3\tabcolsep}}|@{\hspace{0.2\tabcolsep}}  r @{\hspace{0.6\tabcolsep}}  r @{\hspace{0.6\tabcolsep}} r @{\hspace{0.6\tabcolsep}} r @{\hspace{0.6\tabcolsep}} r @{\hspace{0.6\tabcolsep}} r@{\hspace{0.4\tabcolsep}}| @{\hspace{0.4\tabcolsep}} r @{\hspace{0.6\tabcolsep}}  r @{\hspace{0.6\tabcolsep}} r  @{\hspace{0.6\tabcolsep}} r @{\hspace{0.6\tabcolsep}} r @{\hspace{0.6\tabcolsep}} r@{\hspace{0.4\tabcolsep}}| @{\hspace{0.4\tabcolsep}} r @{\hspace{0.6\tabcolsep}} r @{\hspace{0.6\tabcolsep}} r  @{\hspace{0.6\tabcolsep}} r @{\hspace{0.6\tabcolsep}} r @{\hspace{0.6\tabcolsep}} r@{\hspace{0.4\tabcolsep}}| @{\hspace{0.4\tabcolsep}} r @{\hspace{0.6\tabcolsep}} r @{\hspace{0.6\tabcolsep}} r  @{\hspace{0.6\tabcolsep}} r @{\hspace{0.6\tabcolsep}} r @{\hspace{0.6\tabcolsep}} r @{\hspace{0.6\tabcolsep}}  r @{\hspace{0.6\tabcolsep}} r @{\hspace{0.6\tabcolsep}} r @{\hspace{0.6\tabcolsep}} r @{\hspace{0.6\tabcolsep}} r @{\hspace{0.6\tabcolsep}} r @{\hspace{0.6\tabcolsep}}}

 & \multicolumn{6}{c}{CPU Time (days)}  & \multicolumn{6}{c}{Percent gap to best}  & \multicolumn{6}{c}{Percent gap to subset-best}  & \multicolumn{6}{c}{$R^{\delta}$}\\
\hline
$\alpha$ & \multicolumn{2}{@{\hspace{0.5\tabcolsep}} c @{\hspace{0.5\tabcolsep}}}{$0.05$} & \multicolumn{2}{@{\hspace{0.5\tabcolsep}} c @{\hspace{0.5\tabcolsep}}}{$0.02$} & \multicolumn{2}{@{\hspace{0.5\tabcolsep}} c @{\hspace{0.5\tabcolsep}}}{$0.01$}& \multicolumn{2}{@{\hspace{0.5\tabcolsep}} c @{\hspace{0.5\tabcolsep}}}{$0.05$} & \multicolumn{2}{@{\hspace{0.5\tabcolsep}} c @{\hspace{0.5\tabcolsep}}}{$0.02$} & \multicolumn{2}{@{\hspace{0.5\tabcolsep}} c @{\hspace{0.5\tabcolsep}}}{$0.01$}& \multicolumn{2}{@{\hspace{0.5\tabcolsep}} c @{\hspace{0.5\tabcolsep}}}{$0.05$} & \multicolumn{2}{@{\hspace{0.5\tabcolsep}} c @{\hspace{0.5\tabcolsep}}}{$0.02$} & \multicolumn{2}{@{\hspace{0.5\tabcolsep}} c @{\hspace{0.5\tabcolsep}}}{$0.01$}& \multicolumn{2}{@{\hspace{0.5\tabcolsep}} c @{\hspace{0.5\tabcolsep}}}{$0.05$} & \multicolumn{2}{@{\hspace{0.5\tabcolsep}} c @{\hspace{0.5\tabcolsep}}}{$0.02$} & \multicolumn{2}{@{\hspace{0.5\tabcolsep}} c @{\hspace{0.5\tabcolsep}}}{$0.01$}\\
\hline
 Method & $\mu$ & $\sigma$ & $\mu$ & $\sigma$ & $\mu$ & $\sigma$ & $\mu$ & $\sigma$ & $\mu$ & $\sigma$ & $\mu$ & $\sigma$ & $\mu$ & $\sigma$ & $\mu$ & $\sigma$ & $\mu$ & $\sigma$ & $\mu$ & $\sigma$ & $\mu$ & $\sigma$ & $\mu$ & $\sigma$\\
\hline
ICAR & 226.12 & 127.05 & 374.51 &  131.49 & 579.64 & 145.09  & 0.24 & 0.16 & 0.09 & 0.09 & 0.04 & 0.04 & 0.01 & 0.03 & 0.00 & 0.00 & 0.00 & 0.00 & 34.8 & 4.3 & 29.8 & 2.2 & 28.5 & 1.8 \\
CAR++ & 349.44 & 29.05 & 813.73 & 35.96 & 1604.52 & 133.59  & 0.24 & 0.16 & 0.11 & 0.08 & 0.04 & 0.04 & 0.00 & 0.00 & 0.00 & 0.00 & 0.00 & 0.00 &  34.8 & 4.3 & 30.6 & 2.2 & 28.5 & 1.8 \\
CAR & 798.6 & 74.21 & 1885.13 & 83.41 & 3717.16 & 264.71  & 0.24 & 0.16 & 0.11 & 0.08 & 0.04 & 0.04 & 0.00 & 0.00 & 0.00 & 0.00 & 0.00 & 0.00 & 34.8 & 4.3 & 30.6 & 1.6 & 28.5 & 1.8 \\
\hline
$k = 2$ & 147.24 & 8.72 & 143.81 & 9.80 & 129.57 & 5.53 & 0.38 & 0.17 & 0.25 & 0.22 & 0.17 & 0.10 & 0.00 & 0.00 & 0.02 & 0.03 & 0.05 & 0.07 & 38.9 & 4.9 & 34.4 & 5.2 & 31.6 & 3.5 \\ 
$k = 4$  & 186.61 & 17.15 & 182.01 & 17.80 & 172.98 & 7.71 & 0.40 & 0.15 & 0.31 & 0.23 & 0.17 & 0.10 & 0.00 & 0.00 & 0.03 & 0.03 & 0.01 & 0.02 & 39.6 & 3.5 & 36.0 & 5.6 & 31.6 & 3.5  \\ 
$k = 8$ & 283.07 &  58.47 & 267.06 &  23.24 & 283.98 & 22.41 & 0.37 & 0.20 & 0.17 & 0.11 & 0.20 & 0.09 & 0.02 & 0.04 & 0.02 & 0.02 & 0.03 & 0.02 & 38.1 & 4.9 & 31.5 & 3.3 & 32.0 &  3.0  \\ 
$k = 16$ & 368.87 &  43.25 & 433.53 &  28.73 & 403.93 & 27.27 & 0.26&  0.17 & 0.18 & 0.12 & 0.17  & 0.11 & 0.00 &  0.00 & 0.00 &  0.00 & 0.02 &  0.02 & 35.2 &  5.4 & 33.0 & 2.8 & 31.2 &  3.0  \\

\end{tabular}\caption{Regions200 dataset}

\begin{tabular}{@{\hspace{0\tabcolsep}}c@{\hspace{0.3\tabcolsep}}|@{\hspace{0.2\tabcolsep}}  r @{\hspace{0.6\tabcolsep}}  r @{\hspace{0.6\tabcolsep}} r @{\hspace{0.6\tabcolsep}} r @{\hspace{0.6\tabcolsep}} r @{\hspace{0.6\tabcolsep}} r@{\hspace{0.4\tabcolsep}}| @{\hspace{0.4\tabcolsep}} r @{\hspace{0.6\tabcolsep}}  r @{\hspace{0.6\tabcolsep}} r  @{\hspace{0.6\tabcolsep}} r @{\hspace{0.6\tabcolsep}} r @{\hspace{0.6\tabcolsep}} r@{\hspace{0.4\tabcolsep}}| @{\hspace{0.4\tabcolsep}} r @{\hspace{0.6\tabcolsep}} r @{\hspace{0.6\tabcolsep}} r  @{\hspace{0.6\tabcolsep}} r @{\hspace{0.6\tabcolsep}} r @{\hspace{0.6\tabcolsep}} r@{\hspace{0.4\tabcolsep}}| @{\hspace{0.4\tabcolsep}} r @{\hspace{0.6\tabcolsep}} r @{\hspace{0.6\tabcolsep}} r  @{\hspace{0.6\tabcolsep}} r @{\hspace{0.6\tabcolsep}} r @{\hspace{0.6\tabcolsep}} r @{\hspace{0.6\tabcolsep}}  r @{\hspace{0.6\tabcolsep}} r @{\hspace{0.6\tabcolsep}} r @{\hspace{0.6\tabcolsep}} r @{\hspace{0.6\tabcolsep}} r @{\hspace{0.6\tabcolsep}} r @{\hspace{0.6\tabcolsep}}}

 & \multicolumn{6}{c}{CPU Time (thousand days)}  & \multicolumn{6}{c}{Percent gap to best}  & \multicolumn{6}{c}{Percent gap to subset-best}  & \multicolumn{6}{c}{$R^{\delta}$}\\
\hline
$\alpha$ & \multicolumn{2}{@{\hspace{0.5\tabcolsep}} c @{\hspace{0.5\tabcolsep}}}{$0.05$} & \multicolumn{2}{@{\hspace{0.5\tabcolsep}} c @{\hspace{0.5\tabcolsep}}}{$0.02$} & \multicolumn{2}{@{\hspace{0.5\tabcolsep}} c @{\hspace{0.5\tabcolsep}}}{$0.01$}& \multicolumn{2}{@{\hspace{0.5\tabcolsep}} c @{\hspace{0.5\tabcolsep}}}{$0.05$} & \multicolumn{2}{@{\hspace{0.5\tabcolsep}} c @{\hspace{0.5\tabcolsep}}}{$0.02$} & \multicolumn{2}{@{\hspace{0.5\tabcolsep}} c @{\hspace{0.5\tabcolsep}}}{$0.01$}& \multicolumn{2}{@{\hspace{0.5\tabcolsep}} c @{\hspace{0.5\tabcolsep}}}{$0.05$} & \multicolumn{2}{@{\hspace{0.5\tabcolsep}} c @{\hspace{0.5\tabcolsep}}}{$0.02$} & \multicolumn{2}{@{\hspace{0.5\tabcolsep}} c @{\hspace{0.5\tabcolsep}}}{$0.01$}& \multicolumn{2}{@{\hspace{0.5\tabcolsep}} c @{\hspace{0.5\tabcolsep}}}{$0.05$} & \multicolumn{2}{@{\hspace{0.5\tabcolsep}} c @{\hspace{0.5\tabcolsep}}}{$0.02$} & \multicolumn{2}{@{\hspace{0.5\tabcolsep}} c @{\hspace{0.5\tabcolsep}}}{$0.01$}\\
\hline
 Method & $\mu$ & $\sigma$ & $\mu$ & $\sigma$ & $\mu$ & $\sigma$ & $\mu$ & $\sigma$ & $\mu$ & $\sigma$ & $\mu$ & $\sigma$ & $\mu$ & $\sigma$ & $\mu$ & $\sigma$ & $\mu$ & $\sigma$ & $\mu$ & $\sigma$ & $\mu$ & $\sigma$ & $\mu$ & $\sigma$\\
\hline
ICAR & 1.27 &  0.22 & 2.64 &  0.29 & 5.26 & 0.40   & 0.11 & 0.03 & 0.08 &  0.03 & 0.06 &  0.04 & 0.00 &  0.00 & 0.00 & 0.00 & 0.00 &  0.00 & 150.1 & 2.3 & 146.2 &  3.7 & 142.1 &  5.9  \\
CAR++ & 2.20 &  0.32  & 5.21 &  0.25  & 10.82 &  0.38 & 0.14 &  0.08 & 0.08 &  0.03 & 0.06 & 0.02 & 0.00 &  0.00 & 0.00 & 0.00 & 0.00 & 0.00 & 156.1 &  2.3 & 146.5 & 3.7 & 143.3 &  5.9 \\
CAR  & 4.44 &  0.48 & 10.79 & 0.26 & 22.60 &  0.61 & 0.14 &  0.08 & 0.08 &  0.03 & 0.06 & 0.02 & 0.00 &  0.00 & 0.00 & 0.00 & 0.00 &  0.00 & 156.1 & 11.9 & 146.5 & 4.1 & 143.3 &  4.9 \\

\hline
$k = 2$ & 0.29 & 0.01 & 0.27 &  0.09 & 0.26 &  0.01 & 0.21 &  0.14 & 0.11 &  0.04 & 0.09 &  0.04 & 0.07 &  0.08 & 0.04 &  0.04 & 0.03 &  0.03 & 168.7 &  25.9 & 145.1 & 3.6 & 141.9 & 4.7  \\ 
$k = 4$  & 0.45 &  0.03 & 0.42 &  0.01 & 0.42 &  0.02 & 0.16 & 0.12 & 0.10 &  0.04 & 0.09 & 0.05 & 0.05 &  0.09 & 0.04 & 0.04 & 0.04 & 0.04 & 156.9 &  19.6 & 144.9 & 3.3 & 141.8 &  4.6  \\ 
$k = 8$ & 0.69 &  0.02 & 0.70 & 0.03 & 0.74 &  0.05 & 0.20 &  0.13 & 0.11 & 0.04 & 0.09 & 0.05 & 0.05 &  0.09 & 0.04 &  0.04 & 0.04 &  0.04 & 164.2 & 21.7 & 145.1 &  3.6 & 141.8 &  4.6  \\ 
$k = 16$ & 1.05 &  0.04 & 1.23 &  0.05 & 1.17 &  0.05 & 0.18 &  0.15 & 0.12 &  0.05 & 0.07 & 0.06 & 0.06 & 0.09 & 0.04 &  0.04 & 0.02 &  0.02 & 163.7 & 27.2 & 145.6 &  3.8 & 139.4 &  5.0  \\ 

\end{tabular}\caption{RCW dataset}

\renewcommand{\tablename}{Table}
\renewcommand{\thetable}{2}
\caption{Mean ($\mu$) and standard derivation ($\sigma$) for CPU time, percent gap to best, percent gap to subset-best and mean $\delta$-capped runtime over $5$ seeds for $\delta=\textbf{0.01}$ and different values of $\alpha$ (columns) for AC-Band, ICAR, CAR++, CAR on the CNFuzzDD (top), Regions200 (middle) and RCW (bottom) datasets. The number of configurations tried for CAR(++): $\{128, 325, 652\}$, ICAR: $\{166, 431, 884\},$ AC-Band: $\{93, 232, 462\}$.}\label{table_d01_rcw}
\end{table}

\paragraph{Increasing the sampling budget of AC-Band}
AC-Band samples fewer configurations than Hyperband or (I)CAR(++) and leaves more of the configuration space unexplored. This explains, in part, why AC-Band usually has a worse percent gap to best than its competitors, but less runtime. To investigate this further, we let AC-Band sample the same number of configurations as CAR++ and Hyperband with $\eta = 8$ in Table \ref{table_car_budget} and Table \ref{table_hb}. In particular, we set $N$ to be equal to the number of samples of either method and set $n_{0} = N +1$. Note that we do not sample exactly the same amount of configurations due to the rounding operations within AC-Band.

On the CNFuzzDD and RCW dataset (see Table \ref{table_car_budget}), a larger sampling budget for AC-Band leads to a gap to best that is nearly as good as those obtained by (I)CAR(++), while needing significantly less CPU time. The same can be seen for the Regions200dataset, however with some exceptions where the gap to best increases. Specifically, for $\alpha = 0.05$ where only $101$ configurations out of $2000$ are examined. For a few seeds, no good configuration is contained in the $101$ samples, leading to these results.

Table \ref{table_hb} report the results obtained for Hyperband with different values of $\eta$ as well as the AC-Band results with a sampling budget of $618$ configurations. For all three datasets, AC-Band comes closer to the results of Hyperband in terms of gap to best, while needing significantly less CPU time. This is especially true for the Regions200 and RCW datasets. On the CNFuzzDD dataset, AC-Band's performance is weaker, which may be due to this dataset containing the smallest number of instances. %

\begin{table}[p]
\tiny
\center
\setcounter{table}{0}
\renewcommand{\thetable}{(\alph{table})}
\renewcommand{\tablename}{}
\begin{tabular}{@{\hspace{0.3\tabcolsep}}c@{\hspace{0.3\tabcolsep}}|@{\hspace{0.2\tabcolsep}}  r @{\hspace{0.6\tabcolsep}}  r @{\hspace{0.6\tabcolsep}} r @{\hspace{0.6\tabcolsep}} r @{\hspace{0.6\tabcolsep}} r @{\hspace{0.6\tabcolsep}} r@{\hspace{0.4\tabcolsep}}| @{\hspace{0.4\tabcolsep}} r @{\hspace{0.6\tabcolsep}}  r @{\hspace{0.6\tabcolsep}} r  @{\hspace{0.6\tabcolsep}} r @{\hspace{0.6\tabcolsep}} r @{\hspace{0.6\tabcolsep}} r@{\hspace{0.4\tabcolsep}}| @{\hspace{0.4\tabcolsep}} r @{\hspace{0.6\tabcolsep}} r @{\hspace{0.6\tabcolsep}} r  @{\hspace{0.6\tabcolsep}} r @{\hspace{0.6\tabcolsep}} r @{\hspace{0.6\tabcolsep}} r@{\hspace{0.4\tabcolsep}}| @{\hspace{0.4\tabcolsep}} r @{\hspace{0.6\tabcolsep}} r @{\hspace{0.6\tabcolsep}} r  @{\hspace{0.6\tabcolsep}} r @{\hspace{0.6\tabcolsep}} r @{\hspace{0.6\tabcolsep}} r @{\hspace{0.6\tabcolsep}}  r @{\hspace{0.6\tabcolsep}} r @{\hspace{0.6\tabcolsep}} r @{\hspace{0.6\tabcolsep}} r @{\hspace{0.6\tabcolsep}} r @{\hspace{0.6\tabcolsep}} r @{\hspace{0.6\tabcolsep}}}

 & \multicolumn{6}{c}{CPU Time (days)}  & \multicolumn{6}{c}{Percent gap to best}  & \multicolumn{6}{c}{Percent gap to subset-best}  & \multicolumn{6}{c}{$R^{\delta}$}\\
\hline
$\alpha$ & \multicolumn{2}{@{\hspace{0.5\tabcolsep}} c @{\hspace{0.5\tabcolsep}}}{$0.05$} & \multicolumn{2}{@{\hspace{0.5\tabcolsep}} c @{\hspace{0.5\tabcolsep}}}{$0.02$} & \multicolumn{2}{@{\hspace{0.5\tabcolsep}} c @{\hspace{0.5\tabcolsep}}}{$0.01$}& \multicolumn{2}{@{\hspace{0.5\tabcolsep}} c @{\hspace{0.5\tabcolsep}}}{$0.05$} & \multicolumn{2}{@{\hspace{0.5\tabcolsep}} c @{\hspace{0.5\tabcolsep}}}{$0.02$} & \multicolumn{2}{@{\hspace{0.5\tabcolsep}} c @{\hspace{0.5\tabcolsep}}}{$0.01$}& \multicolumn{2}{@{\hspace{0.5\tabcolsep}} c @{\hspace{0.5\tabcolsep}}}{$0.05$} & \multicolumn{2}{@{\hspace{0.5\tabcolsep}} c @{\hspace{0.5\tabcolsep}}}{$0.02$} & \multicolumn{2}{@{\hspace{0.5\tabcolsep}} c @{\hspace{0.5\tabcolsep}}}{$0.01$}& \multicolumn{2}{@{\hspace{0.5\tabcolsep}} c @{\hspace{0.5\tabcolsep}}}{$0.05$} & \multicolumn{2}{@{\hspace{0.5\tabcolsep}} c @{\hspace{0.5\tabcolsep}}}{$0.02$} & \multicolumn{2}{@{\hspace{0.5\tabcolsep}} c @{\hspace{0.5\tabcolsep}}}{$0.01$}\\
\hline
 Method & $\mu$ & $\sigma$ & $\mu$ & $\sigma$ & $\mu$ & $\sigma$ & $\mu$ & $\sigma$ & $\mu$ & $\sigma$ & $\mu$ & $\sigma$ & $\mu$ & $\sigma$ & $\mu$ & $\sigma$ & $\mu$ & $\sigma$ & $\mu$ & $\sigma$ & $\mu$ & $\sigma$ & $\mu$ & $\sigma$\\
\hline
ICAR & 100.64 & 12.74 & 242.74 & 14.96 & 467.32 & 25.08  & 0.02 & 0.04 &0.01 & 0.01 &0.02 & 0.03 &0.01 & 0.02 &0.01 & 0.01 &0.02 & 0.03 & 5.0 & 0.1 & 4.9 & 0.1 & 4.9 & 0.1 \\
CAR++ & 92.30 & 5.48 & 224.21 & 16.38 & 452.06 & 18.08  &0.05 & 0.04 &0.01 & 0.01 &0.01 & 0.01 &0.02 & 0.03 &0.00 & 0.01 & 0.01 & 0.01 &  5.2 & 0.2 & 4.9 & 0.1 & 4.9 & 0.1 \\
CAR & 157.76 & 18.07 & 367.86 & 7.24 & 771.45 & 21.72  & 0.04 & 0.03 & 0.01 & 0.01 & 0.01 & 0.01 & 0.01 & 0.02 & 0.00 & 0.01 & 0.01 & 0.01 & 5.2 & 0.2 & 4.9 & 0.1 & 4.9 & 0.1 \\
\hline
$k = 2$ & 11.66 & 0.87 & 11.50 & 0.25 & 10.83 & 0.25 & 0.03 & 0.02 & 0.02 & 0.02 & 0.09 & 0.13 & 0.00 & 0.00 & 0.01 & 0.02 & 0.09 & 0.14 & 5.1 & 0.1 & 5.0 & 0.2 & 5.3 & 0.7 \\ 
$k = 4$  & 14.18 & 0.49 & 14.53 & 0.42 & 13.51 & 0.35 & 0.03 & 0.03 & 0.10 & 0.16 & 0.03 & 0.04 & 0.00 & 0.00 & 0.09 & 0.14 & 0.03 & 0.05 & 5.1 & 0.1 & 5.4 & 0.8 & 5.0 & 0.2 \\ 
$k = 8$ & 22.39 & 0.82 & 21.45 & 0.59 & 20.17 & 0.58 & 0.02 & 0.03 & 0.02 & 0.01 & 0.00 & 0.01 & 0.00 & 0.00 & 0.01 & 0.01 & 0.00 & 0.01 & 5.1 & 0.2 & 5.0 & 0.1 & 4.9 & 0.1 \\ 
$k = 16$ & 29.56 & 1.61 & 30.00 & 1.18 & 32.28 & 0.43 & 0.04 & 0.03 & 0.01 & 0.01 & 0.02 & 0.03 & 0.00 & 0.00 & 0.00 & 0.00 & 0.02 & 0.03 & 5.2 & 0.2 & 5.0 & 0.1 & 5.0 & 0.1 \\ 

\end{tabular}\caption{CNFuzzDD dataset}

\begin{tabular}{@{\hspace{0.3\tabcolsep}}c@{\hspace{0.3\tabcolsep}}|@{\hspace{0.2\tabcolsep}}  r @{\hspace{0.6\tabcolsep}}  r @{\hspace{0.6\tabcolsep}} r @{\hspace{0.6\tabcolsep}} r @{\hspace{0.6\tabcolsep}} r @{\hspace{0.6\tabcolsep}} r@{\hspace{0.4\tabcolsep}}| @{\hspace{0.4\tabcolsep}} r @{\hspace{0.6\tabcolsep}}  r @{\hspace{0.6\tabcolsep}} r  @{\hspace{0.6\tabcolsep}} r @{\hspace{0.6\tabcolsep}} r @{\hspace{0.6\tabcolsep}} r@{\hspace{0.4\tabcolsep}}| @{\hspace{0.4\tabcolsep}} r @{\hspace{0.6\tabcolsep}} r @{\hspace{0.6\tabcolsep}} r  @{\hspace{0.6\tabcolsep}} r @{\hspace{0.6\tabcolsep}} r @{\hspace{0.6\tabcolsep}} r@{\hspace{0.4\tabcolsep}}| @{\hspace{0.4\tabcolsep}} r @{\hspace{0.6\tabcolsep}} r @{\hspace{0.6\tabcolsep}} r  @{\hspace{0.6\tabcolsep}} r @{\hspace{0.6\tabcolsep}} r @{\hspace{0.6\tabcolsep}} r @{\hspace{0.6\tabcolsep}}  r @{\hspace{0.6\tabcolsep}} r @{\hspace{0.6\tabcolsep}} r @{\hspace{0.6\tabcolsep}} r @{\hspace{0.6\tabcolsep}} r @{\hspace{0.6\tabcolsep}} r @{\hspace{0.6\tabcolsep}}}

 & \multicolumn{6}{c}{CPU Time (days)}  & \multicolumn{6}{c}{Percent gap to best}  & \multicolumn{6}{c}{Percent gap to subset-best}  & \multicolumn{6}{c}{$R^{\delta}$}\\
\hline
$\alpha$ & \multicolumn{2}{@{\hspace{0.5\tabcolsep}} c @{\hspace{0.5\tabcolsep}}}{$0.05$} & \multicolumn{2}{@{\hspace{0.5\tabcolsep}} c @{\hspace{0.5\tabcolsep}}}{$0.02$} & \multicolumn{2}{@{\hspace{0.5\tabcolsep}} c @{\hspace{0.5\tabcolsep}}}{$0.01$}& \multicolumn{2}{@{\hspace{0.5\tabcolsep}} c @{\hspace{0.5\tabcolsep}}}{$0.05$} & \multicolumn{2}{@{\hspace{0.5\tabcolsep}} c @{\hspace{0.5\tabcolsep}}}{$0.02$} & \multicolumn{2}{@{\hspace{0.5\tabcolsep}} c @{\hspace{0.5\tabcolsep}}}{$0.01$}& \multicolumn{2}{@{\hspace{0.5\tabcolsep}} c @{\hspace{0.5\tabcolsep}}}{$0.05$} & \multicolumn{2}{@{\hspace{0.5\tabcolsep}} c @{\hspace{0.5\tabcolsep}}}{$0.02$} & \multicolumn{2}{@{\hspace{0.5\tabcolsep}} c @{\hspace{0.5\tabcolsep}}}{$0.01$}& \multicolumn{2}{@{\hspace{0.5\tabcolsep}} c @{\hspace{0.5\tabcolsep}}}{$0.05$} & \multicolumn{2}{@{\hspace{0.5\tabcolsep}} c @{\hspace{0.5\tabcolsep}}}{$0.02$} & \multicolumn{2}{@{\hspace{0.5\tabcolsep}} c @{\hspace{0.5\tabcolsep}}}{$0.01$}\\
\hline
 Method & $\mu$ & $\sigma$ & $\mu$ & $\sigma$ & $\mu$ & $\sigma$ & $\mu$ & $\sigma$ & $\mu$ & $\sigma$ & $\mu$ & $\sigma$ & $\mu$ & $\sigma$ & $\mu$ & $\sigma$ & $\mu$ & $\sigma$ & $\mu$ & $\sigma$ & $\mu$ & $\sigma$ & $\mu$ & $\sigma$\\
\hline
ICAR & 164.30 & 91.05 & 274.84 & 100.72 & 420.15 & 103.24 & 0.24 & 0.16 & 0.09 & 0.09 & 0.04 & 0.04 & 0.00 & 0.00 & 0.00 & 0.00 & 0.00 & 0.00 & 34.8 & 4.3 & 29.8 & 2.2 & 28.5 & 1.8 \\
CAR++ & 229.29 & 19.93 & 566.99 & 28.21 & 1097.90 & 88.41 & 0.27 & 0.17 & 0.15 & 0.07 & 0.09 & 0.09 & 0.00 & 0.00 & 0.00 & 0.00 & 0.00 & 0.00 & 35.3 & 4.3 & 32.0 & 2.2 & 29.8 & 1.8 \\
CAR & 523.69 & 53.34 & 1294.87 & 64.11 & 2549.22 & 199.00 & 0.27 & 0.17 & 0.16 & 0.09 & 0.09 & 0.09 & 0.00 & 0.00 & 0.10 & 0.30 & 0.00 & 0.00 & 35.3 & 4.5 & 31.9 & 1.6 & 29.8 & 2.2 \\
\hline
$k = 2$ & 148.29 & 10.60 & 136.75 & 12.34 & 125.29 & 3.91 & 0.35 & 0.20 & 0.15 & 0.13 & 0.04 & 0.04 & 0.01 & 0.03 & 0.01 & 0.02 & 0.00 & 0.01 & 37.8 & 5.6 & 31.8 & 3.0 & 28.9 & 1.8  \\ 
$k = 4$  & 184.87 & 19.23 & 187.03 & 15.01 & 166.71 & 6.53 & 0.58 & 0.51 & 0.16 & 0.18 & 0.09 & 0.10 & 0.14 & 0.28 & 0.00 & 0.00 & 0.02 & 0.05 & 42.4 & 9.7 & 32.3 & 4.5 & 30.0 & 1.5  \\ 
$k = 8$ & 310.18 & 25.52 & 263.16 & 19.92 & 241.98 & 13.16 & 0.39 & 0.27 & 0.12 & 0.09 & 0.14 & 0.14 & 0.04 & 0.08 & 0.01 & 0.01 & 0.05 & 0.10 & 38.6 & 6.5 & 30.7 & 1.6 & 31.1 & 2.4  \\ 
$k = 16$ & 380.15 & 43.31 & 358.15 & 40.36 & 384.76 & 22.49 & 0.36 & 0.19 & 0.18 & 0.18 & 0.09 & 0.10 & 0.01 & 0.03 & 0.02 & 0.03 & 0.01 & 0.02 & 38.1 & 5.2 & 32.5 & 4.4 & 30.0 & 1.5  \\ 

\end{tabular}\caption{Regions200 dataset}
\begin{tabular}{@{\hspace{0.3\tabcolsep}}c@{\hspace{0.3\tabcolsep}}|@{\hspace{0.2\tabcolsep}}  r @{\hspace{0.6\tabcolsep}}  r @{\hspace{0.6\tabcolsep}} r @{\hspace{0.6\tabcolsep}} r @{\hspace{0.6\tabcolsep}} r @{\hspace{0.6\tabcolsep}} r@{\hspace{0.4\tabcolsep}}| @{\hspace{0.4\tabcolsep}} r @{\hspace{0.6\tabcolsep}}  r @{\hspace{0.6\tabcolsep}} r  @{\hspace{0.6\tabcolsep}} r @{\hspace{0.6\tabcolsep}} r @{\hspace{0.6\tabcolsep}} r@{\hspace{0.4\tabcolsep}}| @{\hspace{0.4\tabcolsep}} r @{\hspace{0.6\tabcolsep}} r @{\hspace{0.6\tabcolsep}} r  @{\hspace{0.6\tabcolsep}} r @{\hspace{0.6\tabcolsep}} r @{\hspace{0.6\tabcolsep}} r@{\hspace{0.4\tabcolsep}}| @{\hspace{0.4\tabcolsep}} r @{\hspace{0.6\tabcolsep}} r @{\hspace{0.6\tabcolsep}} r  @{\hspace{0.6\tabcolsep}} r @{\hspace{0.6\tabcolsep}} r @{\hspace{0.6\tabcolsep}} r @{\hspace{0.6\tabcolsep}}  r @{\hspace{0.6\tabcolsep}} r @{\hspace{0.6\tabcolsep}} r @{\hspace{0.6\tabcolsep}} r @{\hspace{0.6\tabcolsep}} r @{\hspace{0.6\tabcolsep}} r @{\hspace{0.6\tabcolsep}}}

 & \multicolumn{6}{c}{CPU Time (days)}  & \multicolumn{6}{c}{Percent gap to best}  & \multicolumn{6}{c}{Percent gap to subset-best}  & \multicolumn{6}{c}{$R^{\delta}$}\\
\hline
$\alpha$ & \multicolumn{2}{@{\hspace{0.5\tabcolsep}} c @{\hspace{0.5\tabcolsep}}}{$0.05$} & \multicolumn{2}{@{\hspace{0.5\tabcolsep}} c @{\hspace{0.5\tabcolsep}}}{$0.02$} & \multicolumn{2}{@{\hspace{0.5\tabcolsep}} c @{\hspace{0.5\tabcolsep}}}{$0.01$}& \multicolumn{2}{@{\hspace{0.5\tabcolsep}} c @{\hspace{0.5\tabcolsep}}}{$0.05$} & \multicolumn{2}{@{\hspace{0.5\tabcolsep}} c @{\hspace{0.5\tabcolsep}}}{$0.02$} & \multicolumn{2}{@{\hspace{0.5\tabcolsep}} c @{\hspace{0.5\tabcolsep}}}{$0.01$}& \multicolumn{2}{@{\hspace{0.5\tabcolsep}} c @{\hspace{0.5\tabcolsep}}}{$0.05$} & \multicolumn{2}{@{\hspace{0.5\tabcolsep}} c @{\hspace{0.5\tabcolsep}}}{$0.02$} & \multicolumn{2}{@{\hspace{0.5\tabcolsep}} c @{\hspace{0.5\tabcolsep}}}{$0.01$}& \multicolumn{2}{@{\hspace{0.5\tabcolsep}} c @{\hspace{0.5\tabcolsep}}}{$0.05$} & \multicolumn{2}{@{\hspace{0.5\tabcolsep}} c @{\hspace{0.5\tabcolsep}}}{$0.02$} & \multicolumn{2}{@{\hspace{0.5\tabcolsep}} c @{\hspace{0.5\tabcolsep}}}{$0.01$}\\
\hline
 Method & $\mu$ & $\sigma$ & $\mu$ & $\sigma$ & $\mu$ & $\sigma$ & $\mu$ & $\sigma$ & $\mu$ & $\sigma$ & $\mu$ & $\sigma$ & $\mu$ & $\sigma$ & $\mu$ & $\sigma$ & $\mu$ & $\sigma$ & $\mu$ & $\sigma$ & $\mu$ & $\sigma$ & $\mu$ & $\sigma$\\
\hline
ICAR & 1.28 & 0.40 & 2.03 & 0.30 & 4.07 & 0.24  & 0.14 & 0.08 & 0.08 & 0.03 & 0.06 & 0.02 & 0.00 & 0.01 & 0.00 & 0.00 & 0.00 & 0.00 & 156.1 & 11.9 & 146.5 & 4.1 & 143.3 & 4.9  \\
CAR++ & 1.72 & 0.37 & 3.64 & 0.18 & 7.52 & 0.13. & 0.17 & 0.09 & 0.10 & 0.03 & 0.06 & 0.02 & 0.00 & 0.01 & 0.00 & 0.00 & 0.00 & 0.00 & 162.1 & 11.9 & 149.1 & 4.1 & 143.3 & 4.9 \\
CAR  & 3.30 & 0.50 & 7.59. & 0.19 & 15.65 & 0.26 & 0.17 & 0.09 & 0.10 & 0.03 & 0.06 & 0.02 & 0.00 & 0.01 & 0.00 & 0.00 & 0.00 & 0.00 & 160.1 & 13.3 & 149.1 & 4.7 & 143.3 & 4.9 \\
\hline
$k = 2$ & 0.30 & 0.02 & 0.27 & 0.01 & 0.27 & 0.01 & 0.14 & 0.09 & 0.11 & 0.04 & 0.12 & 0.06 & 0.01 & 0.03 & 0.04 & 0.04 & 0.07 & 0.05 & 154.5 & 16.1 & 145.1 & 3.6 & 146.4 & 4.6  \\ 
$k = 4$  & 0.43 & 0.02 & 0.04 & 0.01 & 0.04 & 0.01 & 0.18 & 0.11 & 0.11 & 0.04 & 0.10 & 0.10 & 0.04 & 0.03 & 0.03 & 0.04 & 0.07 & 0.06 & 161.3 & 24.1 & 145.1 & 3.6 & 145.9 & 12.3  \\ 
$k = 8$ & 0.81 & 0.06 & 0.73 & 0.03 & 0.69 & 0.02 & 0.18 & 0.11 & 0.11 & 0.05 & 0.13 & 0.06 & 0.04 & 0.05 & 0.03 & 0.04 & 0.09 & 0.05 & 159.8 & 18.9 & 146.6 & 6.7 & 148.5 & 6.3  \\ 
$k = 16$ & 1.09 & 0.10 & 1.06 & 0.04 & 1.17 & 0.04 & 0.23 & 0.20 & 0.09 & 0.05 & 0.15 & 0.06 & 0.08 & 0.11 & 0.03 & 0.04 & 0.10 & 0.06 & 169.8 & 39.2 & 141.8 & 4.6 & 151.8 & 6.3  \\ 

\end{tabular}\caption{RCW dataset}
\renewcommand{\tablename}{Table}
\renewcommand{\thetable}{3}
\caption{Mean ($\mu$) and standard derivation ($\sigma$) for CPU time, percent gap to best, percent gap to subset-best and mean $\delta$-capped runtime over $5$ seeds for $\delta=\textbf{0.05}$ and different values of $\alpha$ (columns) for AC-Band, ICAR, CAR++, CAR on the CNFuzzDD (top), Regions200 (middle) and RCW (bottom) datasets. For AC-Band $N$ was set to the number of configurations sampled by CAR++ for the respective $\alpha$ value. The number of configurations tried for CAR(++): $\{97, 245, 492\}$, ICAR: $\{134, 351, 724\},$ AC-Band: $\{\textbf{101, 247, 492}\}$. Note that AC-Band samples slightly more configurations due to rounding operations.}\label{table_car_budget}
\end{table}

\begin{table}[p]
\tiny
\center
\setcounter{table}{0}
\renewcommand{\thetable}{(\alph{table})}
\renewcommand{\tablename}{}
\begin{tabular}{ c c |r r |r r |r r |r r }\\

 & & \multicolumn{2}{@{\hspace{0.7\tabcolsep}} c @{\hspace{0.7\tabcolsep}}}{CPU Time}   & \multicolumn{2}{@{\hspace{0.7\tabcolsep}} c @{\hspace{0.7\tabcolsep}}}{Percent gap}   & \multicolumn{2}{@{\hspace{0.7\tabcolsep}} c @{\hspace{0.7\tabcolsep}}}{Percent gap}  & \multicolumn{2}{@{\hspace{0.7\tabcolsep}} c @{\hspace{0.7\tabcolsep}}}{$R^{\delta}$} \\ 
 & & \multicolumn{2}{c}{(days)} & \multicolumn{2}{c}{to best} & \multicolumn{2}{c}{to subset-best} & \multicolumn{2}{c}{} \\
\hline
 & Method & \hspace*{0.5mm} $\mu$ \hspace*{0.5mm} & \hspace*{0.5mm} $\sigma$ \hspace*{0.5mm} & \hspace*{0.5mm} $\mu$ \hspace*{0.5mm} &\hspace*{0.5mm}  $\sigma$ \hspace*{0.5mm} &\hspace*{0.5mm}  $\mu$\hspace*{0.5mm} & \hspace*{0.5mm}$\sigma$ \hspace*{0.5mm} &\hspace*{0.5mm} $\mu$\hspace*{0.5mm} &\hspace*{0.5mm} $\sigma$ \hspace*{0.5mm}\\
\hline
\parbox[t]{2mm}{\multirow{4}{*}{\rotatebox[origin=c]{90}{Hyperband}}} & $\eta = 4$ & 80.05 & 18.36 & 0.14 & 0.17 & 0.13 & 0.16 & 5.7 & 0.0 \\ 
 & $\eta = 5$ & 83.37 & 11.50 & 0.27 & 0.26 & 0.27 & 0.26 & 6.3 & 0.1 \\ 
& $\eta = 6$ & 77.64 & 23.71 & 0.13 & 0.14 & 0.11 & 0.13 & 5.8 & 0.8 \\ 
& $\eta = 8$ & 65.43 & 14.21 & 0.16 & 0.14 & 0.16 & 0.14 & 5.7 & 0.8 \\ 
\hline
\parbox[t]{2mm}{\multirow{4}{*}{\rotatebox[origin=c]{90}{AC-Band}}}& $k = 2$ & 10.55 & 0.32 & 0.39 & 0.53 & 0.39 & 0.53 & 7.1 & 3.0 \\ 
& $k = 4$  & 13.66 & 0.26 & 0.06 & 0.08 & 0.06 & 0.08 & 5.2 & 0.5 \\ 
& $k = 8$ & 19.49 & 0.26 & 0.21 & 0.36 & 0.21 & 0.36 & 6.1 & 2.1 \\ 
& $k = 16$ & 33.35 & 0.57 & 0.01 & 0.02 & 0.01 & 0.02 & 5.0 & 0.1 \\ 
\end{tabular}\caption{CNFuzzDD dataset}
\begin{tabular}{ c c |r r |r r |r r |r r }\\

 & & \multicolumn{2}{@{\hspace{0.7\tabcolsep}} c @{\hspace{0.7\tabcolsep}}}{CPU Time}   & \multicolumn{2}{@{\hspace{0.7\tabcolsep}} c @{\hspace{0.7\tabcolsep}}}{Percent gap}   & \multicolumn{2}{@{\hspace{0.7\tabcolsep}} c @{\hspace{0.7\tabcolsep}}}{Percent gap}  & \multicolumn{2}{@{\hspace{0.7\tabcolsep}} c @{\hspace{0.7\tabcolsep}}}{$R^{\delta}$} \\ 
 & & \multicolumn{2}{c}{(days)} & \multicolumn{2}{c}{to best} & \multicolumn{2}{c}{to subset-best} & \multicolumn{2}{c}{} \\
\hline
 & Method & \hspace*{0.5mm} $\mu$ \hspace*{0.5mm} & \hspace*{0.5mm} $\sigma$ \hspace*{0.5mm} & \hspace*{0.5mm} $\mu$ \hspace*{0.5mm} &\hspace*{0.5mm}  $\sigma$ \hspace*{0.5mm} &\hspace*{0.5mm}  $\mu$\hspace*{0.5mm} & \hspace*{0.5mm}$\sigma$ \hspace*{0.5mm} &\hspace*{0.5mm} $\mu$\hspace*{0.5mm} &\hspace*{0.5mm} $\sigma$ \hspace*{0.5mm}\\
\hline
\parbox[t]{2mm}{\multirow{4}{*}{\rotatebox[origin=c]{90}{Hyperband}}} &  $\eta = 4$ & 798.58 & 64.32 & 0.06 & 0.06 & 0.06 & 0.05 & 29.0 & 2.3 \\ 
& $\eta = 5$ & 668.78 & 120.01 & 0.10 & 0.11 & 0.04 & 0.05 & 30.7 & 3.1 \\ 
& $\eta = 6$ & 662.96 & 94.83 & 0.06 & 0.07 & 0.06 & 0.06 & 28.6 & 2.4 \\ 
& $\eta = 8$ & 559.19 & 64.26 & 0.07 & 0.08 & 0.00 & 0.00 & 29.9 & 2.5 \\
\hline
\parbox[t]{2mm}{\multirow{4}{*}{\rotatebox[origin=c]{90}{AC-Band}}} & $k = 2$ & 117.68 & 6.73 & 0.09 & 0.09 & 0.05 & 0.07 & 29.6 & 1.92  \\ 
& $k = 4$  & 165.07 & 11.85 & 0.09 & 0.09 & 0.05 & 0.07 & 29.6 & 1.92   \\ 
& $k = 8$ & 230.52 & 25.96 & 0.09 & 0.09  & 0.02 & 0.02 & 29.6 & 1.92  \\ 
& $k = 16$ & 385.45 & 32.5  & 0.09 & 0.09  & 0.04 & 0.02 & 29.2 & 2.03  \\ 
\end{tabular}\caption{Regions200 dataset}
\begin{tabular}{ c c |r r |r r |r r |r r }\\

 & & \multicolumn{2}{@{\hspace{0.7\tabcolsep}} c @{\hspace{0.7\tabcolsep}}}{CPU Time}   & \multicolumn{2}{@{\hspace{0.7\tabcolsep}} c @{\hspace{0.7\tabcolsep}}}{Percent gap}   & \multicolumn{2}{@{\hspace{0.7\tabcolsep}} c @{\hspace{0.7\tabcolsep}}}{Percent gap}  & \multicolumn{2}{@{\hspace{0.7\tabcolsep}} c @{\hspace{0.7\tabcolsep}}}{$R^{\delta}$} \\ 
 & & \multicolumn{2}{c}{(days)} & \multicolumn{2}{c}{to best} & \multicolumn{2}{c}{to subset-best} & \multicolumn{2}{c}{} \\
\hline
 & Method & \hspace*{0.5mm} $\mu$ \hspace*{0.5mm} & \hspace*{0.5mm} $\sigma$ \hspace*{0.5mm} & \hspace*{0.5mm} $\mu$ \hspace*{0.5mm} &\hspace*{0.5mm}  $\sigma$ \hspace*{0.5mm} &\hspace*{0.5mm}  $\mu$\hspace*{0.5mm} & \hspace*{0.5mm}$\sigma$ \hspace*{0.5mm} &\hspace*{0.5mm} $\mu$\hspace*{0.5mm} &\hspace*{0.5mm} $\sigma$ \hspace*{0.5mm}\\
\hline
\parbox[t]{2mm}{\multirow{4}{*}{\rotatebox[origin=c]{90}{Hyperband}}} & $\eta = 4$ & 991.58 & 55.48 & 0.07 & 0.08 & 0.07 & 0.07 & 141.4 & 9.1 \\ 
& $\eta = 5$ & 883.04 & 67.03 & 0.12 & 0.08 & 0.07 & 0.07 & 147.6 & 12.2 \\ 
& $\eta = 6$ & 873.19 & 56.11 & 0.10 & 0.11 & 0.10 & 0.10 & 144.3 & 22.1 \\ 
& $\eta = 8$ & 796.60 & 59.49 & 0.04 & 0.08 & 0.00 & 0.00 & 138.0 & 9.3 \\ 
\hline
\parbox[t]{2mm}{\multirow{4}{*}{\rotatebox[origin=c]{90}{AC-Band}}} & $k = 2$ & 250.98 & 6.87 & 0.14 & 0.10 & 0.11 & 0.05 & 151.1 & 14.0  \\ 
& $k = 4$ & 411.85 & 11.33 & 0.12 & 0.12 & 0.09 & 0.07 & 148.5 & 15.1 \\ 
& $k = 8$ & 657.61 & 25.65 & 0.11 & 0.12 & 0.07 & 0.07 & 146.6 & 17.4  \\ 
& $k = 16$ & 1196.41 & 38.75 & 0.08 & 0.07 & 0.07 & 0.07 & 139.3 & 16.0  \\ 
\end{tabular}\caption{RCW dataset}
\renewcommand{\tablename}{Table}
\renewcommand{\thetable}{4}
\caption{Mean ($\mu$) and standard derivation ($\sigma$) for CPU time, percent gap to best, percent gap to subset-best and mean $\delta$-capped runtime over $5$ seeds for different $\eta$ for Hyperband and AC-Band on the CNFuzzDD (top), Regions200 (middle) and RCW (bottom) datasets. The number of configurations tried for Hyperband: $\{378, 842, 280, 618\}$, AC-Band: $\{618\}$. Note that we use a value of $\eta$ for which no more than the available number of configurations in a dataset would be sampled.}\label{table_hb}
\end{table}

\end{document}